\documentclass[11pt]{article}
\usepackage{amsmath,amssymb,amsthm}
\usepackage{graphicx}
\usepackage[numbers]{natbib}
\usepackage[margin=1in, top=0.2in, headheight=10pt, headsep=5pt, includehead]{geometry}
\usepackage{xcolor}
\usepackage[colorlinks=true, citecolor=blue, linkcolor=red]{hyperref}
\usepackage{titlesec}

\definecolor{tocgreen}{RGB}{34, 139, 34}

\makeatletter
\AtBeginDocument{%
  \let\orig@starttoc\@starttoc
  \renewcommand{\@starttoc}[1]{%
    \begingroup
    \hypersetup{linkcolor=tocgreen}%
    \orig@starttoc{#1}%
    \endgroup
  }%
}
\makeatother
\usepackage[T1]{fontenc}
\usepackage{newtxtext, newtxmath}  
\usepackage{tocloft}              

\numberwithin{equation}{section}
\titleformat{\section}
  {\normalfont\Large\bfseries\MakeUppercase}{\thesection}{1em}{| }

\titleformat{\subsection}
  {\normalfont\large\bfseries}{\thesubsection}{1em}{| }

\titleformat{name=\section,numberless}
  {\normalfont\Large\bfseries}{}{0pt}{}

\makeatletter
\newtheoremstyle{dotstyle}{3pt}{3pt}{\normalfont}{}{\bfseries}{.}{ }{\thmname{#1}\thmnumber{ #2}\thmnote{ (#3)}}
\theoremstyle{dotstyle}
\newtheorem{theorem}{Theorem}[section]

\newtheorem{proposition}{Proposition}[section]
\newtheorem{definition}{Definition}[section]
\newtheorem{remark}{Remark}[section]
\newtheorem{corollary}{Corollary}[section]
 
\renewenvironment{proof}[1][\proofname]{%
  \par\pushQED{\qed}%
  \topsep6\p@\@plus6\p@\relax
  \trivlist
  \item[\hskip\labelsep\normalfont\bfseries #1\@addpunct{.}]%
  \ignorespaces
}{%
  \popQED\endtrivlist\@endpefalse
}
\makeatother

\usepackage{etoolbox}   
\appto\appendix{
  \titleformat{\section}                         
    {\normalfont\Large\bfseries\MakeUppercase}  
    {APPENDIX~\thesection:}{1em}{}              
  \titleformat{name=\section,numberless}         
    {\normalfont\Large\bfseries}{}{0pt}{}
}

\title{\bfseries Memory Prediction Excess: A Probabilistic Quantity for Predictive Gain and Memory Length in Stochastic Processes}
\author{
  Jiahao Jiang\thanks{Correspondence to: Jiahao Jiang, School of Mathematics, Southwest Jiaotong University. Email: \href{mailto:jiahao.jiang@swjtu.edu.cn}{jiahao.jiang@swjtu.edu.cn}} \\
  \textit{School of Mathematics, Southwest Jiaotong University, Chengdu 610031, China}
}
\date{}

\begin{document}

\maketitle
\begin{center}
\large\bfseries Abstract
\end{center}
\vspace{10pt}
A central question in the prediction of stochastic processes is the extent to which past information can improve the probability of correctly predicting the next state. To address this question in quantitative terms, we introduce the Memory Prediction Excess (MPE), which measures the average improvement in prediction accuracy obtained by using the entire observed history relative to using only the static marginal distribution, in the setting of discrete-time finite-state processes. More formally, the MPE is defined as the difference between the expected optimal conditional prediction accuracy and the optimal static prediction accuracy. Its basic properties are examined: the MPE is always non-negative; it admits an upper bound that depends on the static accuracy, and this bound is attained if and only if the future is almost surely a deterministic function of the past; degenerate cases in which the MPE vanishes are also characterized. A normalized version, taking values in the unit interval, is introduced as a dimensionless measure of predictive efficiency. A lower bound for the MPE is derived by comparing predictions based on histories of different lengths, showing that the expected optimal prediction accuracy is monotone with respect to the history length.

The framework is further extended to finite-length histories, where the finite-history MPE (FH-MPE) is formulated to measure the predictive gain attainable when only the most recent observations are retained. This extension allows us to explore how the predictive gain depends on the length of the available history and leads to the notion of a minimal memory length required to achieve the same predictive performance as the full history. As an illustration of the broader applicability of these ideas, we turn to the important class of finite-order Markov chains, where this minimal memory length is shown to be bounded by the Markov order. The MPE and its variants are formulated in terms of conditional probabilities and prediction accuracies, offering a probabilistic perspective on the predictive utility of memory that is complementary to classical information-theoretic approaches. More broadly, the investigation touches on the structure of prediction probabilities themselves—looking at how optimal prediction accuracy evolves with available information, and how conditional and marginal distributions are related through the lens of prediction accuracy—with a view toward contributing to a probabilistic understanding of the relationship between memory and prediction in stochastic processes. It is hoped that this perspective may also prove beneficial in settings where memory and prediction are closely intertwined, such as sequential decision-making, adaptive learning, and the modeling of temporally dependent data.

\vspace{10pt}
\noindent \textbf{Keywords:} Memory Prediction Excess, predictive efficiency, optimal prediction accuracy, stochastic processes, information-theoretic measures

\vspace{6pt}
\noindent \textbf{Mathematics Subject Classification:} 60G07, 62M20, 60J10, 68T05, 94A17

\vspace{10pt}  
{\centering\Large\bfseries\MakeUppercase{Contents}\par}
\vspace{6pt}
\makeatletter
\@starttoc{toc}
\makeatother
\vspace{12pt}  

\section{INTRODUCTION}

The prediction of future states of a stochastic process based on its observed past is a fundamental problem in probability theory and statistics, with broad applications across the physical, engineering, and social sciences. More broadly, the understanding, modeling, and prediction of the evolution of stochastic systems is among the central themes in much of probability theory and its applications. Within this landscape, a recurring question concerns how information from the past shapes our ability to understand and forecast future behavior—an issue that has been investigated from various perspectives, leading to an extensive collection of quantitative measures and modeling frameworks. This line of inquiry manifests in several directions, each offering a distinct lens through which the relationship between past and future can be viewed.

A broad approach to quantifying the relationship between past and future has been developed across several mathematical disciplines, from information theory to functional analysis. The foundational work \cite{shannon1948mathematic} addressed the fundamental problem of communication as that of reproducing at one point either exactly or approximately a message selected at another point. A key insight was that the significant aspect of communication is that the actual message is one selected from a set of possible messages, and the system must be designed to operate for each possible selection. This perspective naturally leads to the quantification of information and uncertainty, forming the basis for later developments in information theory. Complementing this, a framework \cite{renyi1959measures} formulated a set of postulates for measures of dependence between random variables and studied maximal correlation as a quantity satisfying these postulates, focusing on the strength of statistical association. The notion of transfer entropy \cite{schreiber2000measuring} was introduced as an asymmetric measure that distinguishes information actually exchanged between interacting systems from shared information arising from common history or common input signals; it detects the direction of influence and asymmetry in the interaction of subsystems. More recently, the conditional and marginal mutual information in Gaussian and long-memory time series have been studied \cite{chavez2016conditional}, providing explicit characterizations of predictability in terms of sensitivity to random shocks and autocorrelation structure. In parallel, memory effects have also been studied within a functional-analytic setting. A two-parameter memory-weighted velocity operator has recently been constructed \cite{jiang2026foundations}, in which two independent continuous exponents separately weight past state increments and elapsed time scaling, thereby allowing different aspects of memory to evolve independently. A framework for nonlinear adaptive memory has also been proposed \cite{jiang2026functional}, in which memory-dependent functionals are constructed to capture how past events influence future states through hierarchical kernels and state-dependent sensitivity. These approaches may offer a further perspective on how the structure of memory can be modeled mathematically.

Alongside these information-theoretic developments, a substantial body of work has addressed prediction from the perspective of stochastic modeling and optimal estimation. A framework for consistent prediction rules based on marker processes in survival analysis \cite{jewell1993framework} provided a stochastic framework for prediction functions and demonstrated a consistency condition that such functions should satisfy. The study of Markov processes generated by generalized convolutions \cite{misiewicz2026markov} examines the converse relation between Markov processes and Lévy processes within generalized convolution frameworks. The generative nature of prediction has been explored \cite{lohr2009generative}, proposing an alternative notion of predictive models in terms of hidden Markov models capable of generating the underlying stochastic process. Optimality and complexity in quantum-state stochastic processes have been investigated \cite{venegas2023optimality}, identifying mechanisms underlying unpredictability and memory requirements for optimal prediction. Related work \cite{venegas2020measurement} examined measurement-induced randomness and structure in controlled qubit processes, showing that projective measurement induces high complexity in these processes, including the requirement of infinite memory for optimal prediction. The analysis of deep neural networks through the information plane was further advanced \cite{shwartz2017opening}, demonstrating the effectiveness of this visualization and revealing that most training epochs are spent on compression of the input into efficient representations. Conformal prediction techniques have been applied to functional data \cite{lei2015conformal}, providing prediction sets with guaranteed finite-sample behavior under no distributional assumptions. The problem of optimal stochastic modeling has been explored in the context of unitary quantum dynamics \cite{liu2019optimal}, revealing advantages in memory reduction compared to classical models.

In a different but related direction, extensive research has focused on stochastic process modeling and prediction in various applied contexts. The analysis of stationary processes has been investigated from the perspective of spectral representations and dynamical systems \cite{didenko2014class}. The design and analysis of computer experiments when the output is highly correlated over the input space has been studied \cite{lim2002design}, showing that limiting best linear unbiased predictors involve Lagrange interpolating polynomials, with optimal designs developed based on minimizing the limiting integrated mean squared error of prediction. The calibration and prediction of computer models using Gaussian stochastic processes have been developed \cite{gu2018scaled}, proposing a scaled Gaussian process that bridges existing calibration methods. The statistical prediction of continuous-time autoregressive processes has been studied via reproducing kernel spaces \cite{mokhtari2002prediction}, constructing approximations of optimal predictors. Optimal transformations for prediction in continuous-time processes with finite past and future intervals have been investigated \cite{gidas2005optimal}, providing explicit constructions for rational spectral densities. A methodology relating weighted estimation criteria to integrated mean squared prediction error in stochastic processes has been introduced \cite{stehlik2020fredholm}. The optimal prediction of extremal quantities in L\'evy processes has been addressed \cite{bernyk2011predicting}, solving prediction problems for the ultimate supremum of stable processes. Trivariate stochastic models have been constructed for predicting maize yield \cite{chidzalo2022trivariate}, accounting for interdependence among climatic factors. The estimation of prediction mean squared error in Gaussian processes with exponential correlation structure has been examined \cite{abt1999estimating}, with emphasis on incorporating variability due to parameter estimation. The variance profile has been proposed as a characterization of stationary processes \cite{luati2012variance}, enabling direct derivation of prediction error formulas. Consistent prediction of well paths and geological surfaces has been addressed through smooth stochastic processes \cite{vazquez2024consistent}. A method based on stochastic process models for machine remaining useful life prediction has been developed \cite{lei2016new}, considering multiple variability sources of degradation processes. Self-data-driven methods for lifetime prediction of photovoltaic arrays have been proposed \cite{liu2023self}, accounting for uncertainty and volatility. Probabilistic remaining useful life prediction without lifetime labels has been achieved through a fusion of Bayesian deep learning and stochastic processes \cite{pan2024probabilistic}. Degradation modeling based on artificial neural network supported Tweedie exponential dispersion processes has been explored \cite{he2025degradation}. Functional variance processes have been used for degradation modeling and remaining useful lifetime prediction \cite{qin2024degradation}. Gaussian processes have been applied to flight delay prediction \cite{khanal2025gaussian}, learning stochastic structures from data. Extreme wind speed prediction has been improved through information transfer between correlated processes \cite{gaidai2019improving}. Scalable inversion algorithms for genomic prediction have been developed \cite{verbosio2017enhancing}, exploiting sparse and dense matrix structures.

A further line of investigation has explored the structure and complexity of stochastic processes from complementary angles. In particular, a systematic mathematical description of adaptive memory has been developed \cite{jiang2025towards}, in which the local regularity of a stochastic process adjusts dynamically in response to its own state; this leads from time-varying fractional Brownian motion to a class of responsive fractional Brownian motion with state-dependent Hurst exponents. Relatedly, empirical relationships between stock cross-correlation and volatility clustering have been studied \cite{micciche2013empirical}, revealing connections between predictability and interdependence. The role of correlations in the maximum distribution of strongly correlated Markovian processes has been numerically investigated \cite{micciche2025role}, with implications for convergence properties of time-average simulations. An operator theory approach to the evanescent part of two-parametric weak-stationary processes has been developed \cite{burdak2025operator}, characterizing dependence on the shape of the past. The role of generalization in using past experience to predict future behavior has been explored from a behavioral science perspective \cite{cowie2020generalizing}. Deep learning algorithms based on long short-term memory have been applied to seismic response uncertainty quantification \cite{kundu2022long}. Local limit theorems for random walks in intermittent dynamical environments have been established \cite{leppanen2026local}, extending central limit theory to inhomogeneous settings. A martingale approach to noncommutative stochastic calculus has been proposed \cite{jekel2025martingale}, providing a general theory of stochastic integration for noncommutative processes. Rigorous derivations of damped-driven wave turbulence theory have been obtained \cite{grande2026rigorous}, justifying kinetic regimes for nonlinear wave systems. Drift parameter estimation in fractional interacting particle systems has been addressed via Malliavin calculus \cite{amorino2025fractional}. Empirical forms of geometric inequalities have been established from a stochastic perspective \cite{paouris2026empirical}. A study of anomalous stochastic processes via generalizing fractional calculus has been presented \cite{jiang2025study}, advancing the existing formulation of fractional calculus for studying anomalous diffusion and stochastic analysis in fractional orders. Tests of equal forecast accuracy for factor-augmented regressions with weaker loadings have been developed \cite{margaritella2026new}, providing theoretical foundations for forecast comparison.

The literature on prediction in stochastic processes and related areas of probability theory is vast and multifaceted. Information-theoretic measures such as entropy, mutual information, transfer entropy, and their variants quantify information content and information flow; dependence measures such as maximal correlation capture the strength of statistical association; stochastic models, including Markov chains, Gaussian processes, and their generalizations, provide probabilistic descriptions of the future conditional on the past; and a range of prediction methodologies, from consistent prediction rules to conformal prediction and neural-network-based approaches, aim to construct optimal or approximately optimal predictors in various settings. Each of these lines of work offers a distinct perspective—whether focused on the amount of information available, the nature of the conditional distribution, the form of the optimal predictor, or the practical construction of prediction rules. Together, they form a rich background that both motivates and informs the present investigation. An additional direction arising from these developments concerns the predictive gain itself: by how much, in probabilistic terms, does the use of past information improve the probability of correctly predicting the next state? This direction centers on the improvement in accuracy that the information yields—a question that complements the above perspectives. The present work explores this direction by examining the gain in optimal prediction accuracy that past information can provide, along with how this gain behaves as the available history varies in length.

To make this notion precise, we introduce a quantity that captures the improvement just described. This quantity, which we term the \emph{Memory Prediction Excess} (MPE), is defined as the difference between the expected optimal conditional prediction accuracy and the optimal static prediction accuracy. It measures, in probabilistic terms, the average improvement in the probability of correctly predicting a future state when the entire observed history is used, relative to using only the static marginal distribution. The investigation focuses on the intrinsic probabilistic structure of optimal prediction, examining how the conditional distribution of the future given the past relates to its marginal counterpart through the lens of prediction accuracy. This perspective sits readily alongside broader information-theoretic approaches, offering a direct account of the predictive utility of memory at the level of probabilities themselves. Within the setting of discrete-time finite-state stochastic processes, the MPE is examined in some detail: its non-negativity, a sharp upper bound in terms of the static accuracy, the condition for attaining this bound, and the degenerate cases in which the MPE vanishes are characterized. A normalized version of the MPE is also introduced, which measures the proportion of the maximal possible predictive gain that is actually achieved, yielding a dimensionless index of predictive efficiency. A lower bound for the MPE is derived by comparing predictions based on histories of different lengths, showing that the expected optimal prediction accuracy is monotone with respect to the history length. The structure of the paper is as follows. In Section~\ref{sec:definition}, we introduce the notation for histories and conditional probabilities, define the static optimal prediction accuracy, and present the definition of the Memory Prediction Excess. In Section~\ref{sec:properties}, we establish the fundamental properties of the MPE, including non-negativity, the upper bound, its sharpness condition, and the degenerate cases. In Section~\ref{sec:normalized}, we define the normalized Memory Prediction Excess, characterize its extremal values, and derive a lower bound for the MPE. We also discuss the relation of our quantities to classical information-theoretic measures such as entropy, mutual information, and transfer entropy, highlighting the complementary perspectives they offer. In Section~\ref{sec:finite_history}, we extend the MPE framework to finite-length histories and establish the monotonicity of the finite-history MPE with respect to the memory length. For the important class of finite-order Markov chains, we show that the finite-history MPE saturates once the memory length reaches the Markov order. Finally, we introduce the minimal memory length required for the finite-history MPE to attain the full-history value, and examine its basic properties, including an explicit upper bound for Markov chains. Overall, these observations offer some insight into how prediction accuracy may improve with the use of history, how this improvement relates to the length of the available past, and related aspects explored in the sequel.

\section{DEFINITION OF MEMORY PREDICTION EXCESS}
\label{sec:definition}

This chapter introduces the central object of the present investigation: the Memory Prediction Excess (MPE). The MPE quantifies, in probabilistic terms, the average improvement in optimal prediction accuracy obtained by using the entire observed history, relative to a prediction based solely on the static marginal distribution. The definition is given in Section~\ref{sec:definition} after setting up the necessary notation for histories, static prediction accuracy, and conditional probabilities given the past. The properties of this quantity will be examined in detail in Section~\ref{sec:properties}, where we establish its non-negativity, upper bound, sharpness condition, and degenerate cases. A normalized version and its information-theoretic context are discussed in Section~\ref{sec:normalized}, while extensions to finite-length histories are taken up in Section~\ref{sec:finite_history}.

Throughout this section, we fix a discrete-time stochastic process $\{X_n\}_{n\ge 1}$ defined on a probability space $(\Omega, \mathcal{F}, \mathbb{P})$. The process takes values in a finite state space $\mathcal{A}$ with $1 \le |\mathcal{A}| < \infty$. For simplicity and to avoid recurring discussions of null events, we assume $\mathbb{P}(X_n = a) > 0$ for all $n$ and all $a \in \mathcal{A}$; this assumption guarantees that all conditional probabilities that appear are well-defined in their elementary sense. The case of vanishing probabilities introduces only trivial modifications that do not affect the almost-sure statements in our analysis.

\subsection{Notation for Histories and Elementary Probabilities}
\label{subsec:notation}

Before introducing the notions of static and conditional prediction accuracy, we set up the necessary notation for histories and elementary probabilities. This subsection provides the basic setup in which these quantities will be formulated.

For each integer $n \ge 1$, we denote by 
\[
X_1^n := (X_1, X_2, \dots, X_n)
\]
the \textbf{history} (or the $n$-past) of the process up to time $n$. Since $\mathcal{A}$ is finite, the random vector $X_1^n$ takes values in the Cartesian product 
\[
\mathcal{A}^n := \big\{ (x_1, \dots, x_n) \;\big|\; x_i \in \mathcal{A},\; i=1,\dots,n \big\},
\]
which is again a finite set. For a concrete realization $h = (x_1, \dots, x_n) \in \mathcal{A}^n$, we write
\begin{equation}
p_{\text{hist}}(h) := \mathbb{P}(X_1^n = h) \label{eq:p_hist_def}
\end{equation}
for the probability of observing this specific historical path. The collection $\{p_{\text{hist}}(h)\}_{h \in \mathcal{A}^n}$ forms a probability distribution on $\mathcal{A}^n$.

\subsection{Static Optimal Prediction Accuracy}
\label{subsec:static_accuracy}

With the notation for histories and the underlying probabilities in place (see Subsection~\ref{subsec:notation}), we now define the baseline prediction accuracy against which the MPE will be measured. This baseline, denoted by $U_n$, is the optimal prediction accuracy achievable when no historical information is used—that is, when the predictor relies solely on the static marginal distribution of $X_{n+1}$. The quantity $U_n$ will serve as a reference throughout the paper: it appears in the definition of the MPE (Subsection~\ref{subsec:mpe_definition}), in the non-negativity and upper bound results (Section~\ref{sec:properties}), and in the finite-history extension (Section~\ref{sec:finite_history}) where it plays the same baseline role.

When no historical information is available, the most natural prediction for $X_{n+1}$ is based solely on its marginal distribution. For each $a \in \mathcal{A}$ and each $n \ge 1$, define the marginal probability $p_{n+1}(a) := \mathbb{P}(X_{n+1} = a)$. The \textbf{static optimal prediction accuracy} for time $n+1$ is then given by the largest marginal probability:
\begin{equation}
U_n := \max_{a \in \mathcal{A}} p_{n+1}(a), \qquad n \ge 1.
\label{eq:U_n_def}
\end{equation}
The quantity $U_n$ is a deterministic constant (not a random variable). It represents the best possible correct-prediction probability one can achieve by always guessing a state that achieves the maximum in \eqref{eq:U_n_def}. If the maximizing state is not unique, any choice yields the same value $U_n$; for later use we may fix an arbitrary state
\begin{equation}
a_n^* \in \arg\max_{a \in \mathcal{A}} p_{n+1}(a)
\label{eq:astar_selection}
\end{equation}
as a representative. This selected state $a_n^*$ will serve as a convenient reference in the proof of non‑negativity (Theorem~\ref{thm:nonneg}).

For completeness, we extend the definition to $n=0$ by setting
\[
U_0 := \max_{a \in \mathcal{A}} \mathbb{P}(X_1=a).
\]
This is consistent with the convention that $X_1^0$ represents the empty history, as adopted in the definitions of $\Delta_0^{\text{mem}}$ and $M_0$ in Definition~\ref{def:MPE} (see \eqref{eq:MPE_def} and \eqref{eq:M_n_def}).

\subsection{Conditional Probabilities Given the History}
\label{subsec:conditional}

With the static baseline $U_n$ defined in Subsection~\ref{subsec:static_accuracy}, we now introduce the conditional distributions that underlie the history-dependent prediction accuracy. For each possible future state $a \in \mathcal{A}$, the conditional probability $\mathbb{P}(X_{n+1}=a \mid X_1^n)$ captures the information that the observed history carries about the next outcome. These conditional probabilities will be used in Subsection~\ref{subsec:history_dependent} to define the optimal conditional prediction accuracy $M_n$ as a random variable, and subsequently in Subsection~\ref{subsec:mpe_definition} to define the MPE itself. In the finite-history extension (Section~\ref{sec:finite_history}), analogous conditional probabilities given truncated histories will play a central role.

Let $\mathcal{G}_n := \sigma(X_1^n)$ be the $\sigma$-algebra generated by the history $X_1^n$. For each state $a \in \mathcal{A}$, the conditional probability of $\{X_{n+1}=a\}$ given $X_1^n$ is defined as the random variable
\[
q_a^{(n)} := \mathbb{P}(X_{n+1}=a \mid X_1^n),
\qquad a \in \mathcal{A},
\]
which is $\mathcal{G}_n$-measurable. For a sample point $\omega \in \Omega$, write $h = X_1^n(\omega)$; then $q_a^{(n)}(\omega) = \mathbb{P}(X_{n+1}=a \mid X_1^n = h)$. In the discrete setting, for any history $h \in \mathcal{A}^n$ with $p_{\text{hist}}(h) > 0$, the elementary conditional probability is given by
\[
\mathbb{P}(X_{n+1}=a \mid X_1^n = h) = \frac{\mathbb{P}(X_{n+1}=a,\; X_1^n = h)}{p_{\text{hist}}(h)}.
\]
For histories with $p_{\text{hist}}(h) = 0$, the value of the conditional probability may be assigned arbitrarily (for instance, set it equal to the marginal probability $p_{n+1}(a)$). Such histories occur on a null set and therefore do not affect any expectation or almost‑sure statement; any choice yields the same almost‑sure properties.

\subsection{History-Dependent Optimal Prediction Accuracy as a Random Variable}
\label{subsec:history_dependent}

Building on the conditional probabilities introduced in Subsection~\ref{subsec:conditional}, we now define the history-dependent analogue of the static optimal prediction accuracy. For each realization of the history, the maximal conditional probability $M_n$ is a random variable that represents the highest achievable probability of correctly predicting $X_{n+1}$ given the entire observed past. This random variable will serve as the conditional counterpart to $U_n$ in the definition of the MPE (Subsection~\ref{subsec:mpe_definition}). Its expectation, together with the upper bound result in Subsection~\ref{subsec:upper_bound}, plays a central role in characterizing the extremal behavior of the MPE. In the finite-history setting (Section~\ref{sec:finite_history}), an analogous family of random variables $M_{n,k}$ will be introduced to study the dependence of predictive accuracy on memory length.

Given a specific history $h$, the optimal prediction for $X_{n+1}$ using this information is to choose a state that maximizes the conditional distribution $\{\mathbb{P}(X_{n+1}=a \mid X_1^n = h)\}_{a \in \mathcal{A}}$. Define the corresponding maximal conditional probability:
\begin{equation}
M(h) := \max_{a \in \mathcal{A}} \mathbb{P}(X_{n+1}=a \mid X_1^n = h).
\label{eq:M_h}
\end{equation}
When the history is random, $M(h)$ becomes a random variable through its dependence on $X_1^n$. More formally, we set
\begin{equation}
M_n(\omega) := \max_{a \in \mathcal{A}} \mathbb{P}(X_{n+1}=a \mid X_1^n)(\omega)
= M\big( X_1^n(\omega) \big),
\qquad \omega \in \Omega.
\label{eq:M_n_def}
\end{equation}
Thus $M_n$ is a $\mathcal{G}_n$-measurable random variable. It represents the highest probability of correctly predicting $X_{n+1}$ that can be achieved after observing the entire past $X_1^n$. Because the conditional distribution sums to unity over $a \in \mathcal{A}$, we have the elementary bounds
\[
\frac{1}{|\mathcal{A}|} \le M_n(\omega) \le 1 \qquad \text{for almost every } \omega \in \Omega.
\]
The lower bound follows from the fact that for any finite collection of non‑negative numbers $\{x_a\}_{a \in \mathcal{A}}$ with $\sum_{a} x_a = 1$, we have $\max_a x_a \ge \frac{1}{|\mathcal{A}|} \sum_a x_a = 1/|\mathcal{A}|$; the upper bound is trivial because probabilities never exceed $1$.

\subsection{Formal Definition of the MPE}
\label{subsec:mpe_definition}

We now introduce the central concept of this work. With the static accuracy $U_n$ defined in Subsection~\ref{subsec:static_accuracy} and the conditional accuracy $M_n$ introduced in Subsection~\ref{subsec:history_dependent}, we are now in a position to define the Memory Prediction Excess. This quantity captures the average improvement in optimal prediction accuracy attributable to the use of the full history. Its fundamental properties will be examined in Section~\ref{sec:properties}, where we establish non-negativity, a sharp upper bound, the condition for attaining this bound, and degenerate cases in which it vanishes. A normalized version is introduced in Section~\ref{sec:normalized} to facilitate comparisons across processes and time indices, while extensions to finite-length histories are developed in Section~\ref{sec:finite_history}.

\begin{definition}[Memory Prediction Excess (MPE)]\label{def:MPE}
For a fixed time index $n \ge 1$, the \textbf{Memory Prediction Excess} of the process $\{X_k\}_{k\ge 1}$ at step $n$ is defined as
\begin{equation}
\Delta_n^{\text{mem}} := \mathbb{E}[M_n] - U_n.
\label{eq:MPE_def}
\end{equation}
For completeness, we set $\Delta_0^{\text{mem}} := 0$. This is consistent with interpreting $X_1^0$ as the empty history, so that the conditional probability $\mathbb{P}(X_1=a \mid X_1^0)$ reduces to the unconditional probability $\mathbb{P}(X_1=a)$. Consequently $M_0 := \max_{a\in\mathcal{A}} \mathbb{P}(X_1=a)$ is a constant, and $\mathbb{E}[M_0] = \max_{a\in\mathcal{A}} \mathbb{P}(X_1=a) =: U_0$, yielding $\Delta_0^{\text{mem}} = \mathbb{E}[M_0] - U_0 = 0$. In expanded form,
\[
\Delta_n^{\text{mem}} = \mathbb{E}\left[ \max_{a \in \mathcal{A}} \mathbb{P}(X_{n+1}=a \mid X_1^n) \right] \;-\; \max_{a \in \mathcal{A}} \mathbb{P}(X_{n+1}=a).
\]
\end{definition}

The definition captures the average excess in optimal prediction accuracy that is gained by incorporating the historical information $X_1^n$, compared to a prediction that relies only on the static marginal distribution of $X_{n+1}$. The term ``excess'' will be fully justified in the next section, where we prove $\Delta_n^{\text{mem}} \ge 0$ (Theorem \ref{thm:nonneg}).

\subsection{Explicit Expansion in a Discrete State Space}
\label{subsec:explicit_expansion}

With the MPE defined in Subsection~\ref{subsec:mpe_definition}, we now make the expectation involved in its definition more explicit by expanding it over the finite set of possible histories. This expansion reveals the mathematical structure of the MPE as the difference between two operations—maximization over states followed by averaging over histories, versus averaging followed by maximization. This structure serves as a unifying thread for the results that follow: the non-negativity and extremal properties of the MPE, its finite-history analogue, and the notion of effective memory length all arise from examining this basic distinction in different settings. The same type of expansion will also be used in the finite-history setting of Subsection~\ref{subsec:finite_definition} when expectations over history blocks are considered.

Since $X_1^n$ takes values in the finite set $\mathcal{A}^n$, we can expand the expectation $\mathbb{E}[M_n]$ as a weighted sum over all possible histories $h \in \mathcal{A}^n$. Using the distribution $p_{\text{hist}}(h)$ from \eqref{eq:p_hist_def} and the definition $M(h)$ from \eqref{eq:M_h}, we obtain
\begin{equation}
\mathbb{E}[M_n] = \sum_{h \in \mathcal{A}^n} p_{\text{hist}}(h) \, M(h)
= \sum_{h \in \mathcal{A}^n} p_{\text{hist}}(h) \left( \max_{a \in \mathcal{A}} \mathbb{P}(X_{n+1}=a \mid X_1^n = h) \right).
\label{eq:E_Mn_expansion}
\end{equation}
Similarly, the marginal probabilities of $X_{n+1}$ can be expressed by averaging the conditional probabilities over all histories:
\begin{equation}
\mathbb{P}(X_{n+1}=a) = \sum_{h \in \mathcal{A}^n} p_{\text{hist}}(h) \; \mathbb{P}(X_{n+1}=a \mid X_1^n = h),
\qquad a \in \mathcal{A}.
\label{eq:marginal_expansion}
\end{equation}
Taking the maximum over $a$ on both sides yields the static accuracy
\begin{equation}
U_n = \max_{a \in \mathcal{A}} \; \sum_{h \in \mathcal{A}^n} p_{\text{hist}}(h) \; \mathbb{P}(X_{n+1}=a \mid X_1^n = h).
\label{eq:U_n_expansion}
\end{equation}
Combining \eqref{eq:E_Mn_expansion} and \eqref{eq:U_n_expansion} we obtain an explicit representation of the Memory Prediction Excess:
\[
\Delta_n^{\text{mem}} 
= \sum_{h \in \mathcal{A}^n} p_{\text{hist}}(h) \left( \max_{a \in \mathcal{A}} \mathbb{P}(X_{n+1}=a \mid X_1^n = h) \right)
- \max_{a \in \mathcal{A}} \sum_{h \in \mathcal{A}^n} p_{\text{hist}}(h) \, \mathbb{P}(X_{n+1}=a \mid X_1^n = h).
\]

The expression above reveals the mathematical structure of $\Delta_n^{\text{mem}}$: it is the difference between two operations — first taking a maximum over $a$ and then averaging over $h$ (first term), versus first averaging over $h$ and then taking the maximum over $a$ (second term). This representation suggests why $\Delta_n^{\text{mem}} \ge 0$ holds: for each fixed $h$, the maximum over $a$ dominates each individual term, and averaging preserves the inequality. A rigorous proof is given in Theorem \ref{thm:nonneg} in the following section.

\section{BASIC PROPERTIES OF MEMORY PREDICTION EXCESS}
\label{sec:properties}

With the MPE defined in Section~\ref{sec:definition}, we now establish its fundamental properties. We prove that the Memory Prediction Excess is non-negative (Theorem~\ref{thm:nonneg}), derive an elementary upper bound (Proposition~\ref{prop:upper_bound}), characterize the precise conditions under which this bound is attained (Theorem~\ref{thm:sharpness}), and examine two degenerate scenarios (Corollaries~\ref{cor:independence} and~\ref{cor:static_degenerate}) where $\Delta_n^{\text{mem}}=0$. These results together help illustrate the extremal behavior of the MPE, and will serve as a foundation for the normalized version introduced in Section~\ref{sec:normalized} and the finite-history extension developed in Section~\ref{sec:finite_history}. Throughout this section we work with the same setting as in the previous section: $\{X_n\}_{n\ge 1}$ is a discrete-time stochastic process on $(\Omega,\mathcal{F},\mathbb{P})$ taking values in a finite state space $\mathcal{A}$, with $\mathbb{P}(X_n=a)>0$ for all $n$ and all $a\in\mathcal{A}$ to avoid repeated discussions of null events.

\subsection{Non-negativity of the MPE}
\label{subsec:nonneg}

We first show that the MPE is always non-negative, confirming the intuitive fact that conditioning on the full history cannot, on average, reduce the optimal prediction accuracy relative to the static baseline. This non-negativity follows from the definition of the MPE (Subsection~\ref{subsec:mpe_definition}), by comparing the conditional and static optimal predictors. The non-negativity property is the first of several extremal results in this section; it will be complemented by the upper bound in Subsection~\ref{subsec:upper_bound} and the degenerate cases in Subsection~\ref{subsec:degenerate}. It also underlies the normalized version introduced in Section~\ref{sec:normalized}, where the MPE is rescaled by its maximal possible value.

\begin{theorem}[Non-negativity of MPE]\label{thm:nonneg}
For every $n \ge 1$,
\begin{equation}
\Delta_n^{\text{mem}} \ge 0.
\end{equation}
\end{theorem}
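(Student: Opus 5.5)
The plan is to compare the conditional optimal predictor with the fixed static guess $a_n^*$ selected in \eqref{eq:astar_selection}. The argument is a pointwise domination followed by the tower property.

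First, for every $\omega$ (outside the null set of histories with $p_{\text{hist}}(h)=0$, where the assigned values of the conditional probabilities are still legitimate probability vectors), the maximum over $a$ in \eqref{eq:M_n_def} dominates the value at the particular state $a_n^*$:
\[
M_n(\omega) \;\ge\; q_{a_n^*}^{(n)}(\omega) = \mathbb{P}(X_{n+1}=a_n^* \mid X_1^n)(\omega).
\]
Since both sides are bounded, $\mathcal{G}_n$-measurable random variables, taking expectations preserves the inequality.

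Second, I evaluate the expectation of the right-hand side. By the tower property, $\mathbb{E}[\mathbb{P}(X_{n+1}=a_n^*\mid X_1^n)] = \mathbb{P}(X_{n+1}=a_n^*)$. Equivalently, in the finite setting this is the expansion \eqref{eq:marginal_expansion} with $a=a_n^*$: sum $p_{\text{hist}}(h)$ times the elementary conditional probability over $h\in\mathcal{A}^n$. By the choice of $a_n^*$, this marginal probability equals $U_n$. Combining the two steps gives
\[
\mathbb{E}[M_n] \;\ge\; U_n, \qquad\text{so}\qquad \Delta_n^{\text{mem}} \ge 0.
\]

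The same computation can be phrased through the explicit representation in Subsection~\ref{subsec:explicit_expansion}. For each fixed $h$, $\max_a \mathbb{P}(X_{n+1}=a\mid X_1^n=h) \ge \mathbb{P}(X_{n+1}=b\mid X_1^n=h)$ for every $b$. Averaging against $p_{\text{hist}}$ gives $\mathbb{E}[M_n]\ge \mathbb{P}(X_{n+1}=b)$ for all $b$, and then taking the maximum over $b$ gives $U_n$. This is the ``max of average $\le$ average of max'' inequality.

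There is no genuine obstacle; the only point needing care is the null histories. On them the conditional probabilities are assigned arbitrarily, but they carry zero weight in the sum, so neither the inequality nor the expectation is affected.
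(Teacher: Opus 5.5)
Your proposal is correct and follows essentially the same route as the paper: fix the static maximizer $a_n^*$, use the pointwise bound $M_n \ge \mathbb{P}(X_{n+1}=a_n^* \mid X_1^n)$, take expectations, and identify the right-hand side as $\mathbb{P}(X_{n+1}=a_n^*) = U_n$ by expanding over histories. Your closing ``max of average $\le$ average of max'' reformulation is the same idea the paper points to in its explicit expansion subsection.
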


\begin{proof}
The proof proceeds by comparing the conditional prediction accuracy $M_n$ with the static accuracy $U_n$ through a carefully chosen reference state. We divide the argument into five steps.

\paragraph{Step 1: Selection of a static optimal state.}
Since $\mathcal{A}$ is finite, the maximum in the definition of $U_n$ (see \eqref{eq:U_n_def}) is attained. Choose any state that achieves this maximum and denote it by $a_n^*$:
\begin{equation}
\mathbb{P}(X_{n+1}=a_n^*) = \max_{a \in \mathcal{A}} \mathbb{P}(X_{n+1}=a) = U_n.
\label{eq:astar_def}
\end{equation}
The state $a_n^*$ is fixed and deterministic; it does not depend on the sample point $\omega$. If the maximizer is not unique, we pick an arbitrary one — any choice is admissible for the argument that follows.

\paragraph{Step 2: Pointwise inequality between $M_n$ and the conditional probability of $a_n^*$.}
For any $\omega \in \Omega$, recall from \eqref{eq:M_n_def} that
\[
M_n(\omega) = \max_{a \in \mathcal{A}} \mathbb{P}(X_{n+1}=a \mid X_1^n)(\omega).
\]
Since $M_n(\omega)$ is the maximum of the finite set $\{\mathbb{P}(X_{n+1}=a \mid X_1^n)(\omega) : a \in \mathcal{A}\}$, it is at least as large as any element of that set. In particular, taking $a = a_n^*$ we obtain
\begin{equation}
M_n(\omega) \ge \mathbb{P}(X_{n+1}=a_n^* \mid X_1^n)(\omega), \qquad \forall \omega \in \Omega.
\label{eq:pointwise_ineq}
\end{equation}
This inequality holds for every sample point, not merely almost surely.

\paragraph{Step 3: Taking expectations preserves the inequality.}
The expectation operator $\mathbb{E}[\cdot]$ is monotone: if $Z_1(\omega) \ge Z_2(\omega)$ for all $\omega$ (or almost all), then $\mathbb{E}[Z_1] \ge \mathbb{E}[Z_2]$. Applying this to \eqref{eq:pointwise_ineq} yields
\begin{equation}
\mathbb{E}[M_n] \ge \mathbb{E}\bigl[ \mathbb{P}(X_{n+1}=a_n^* \mid X_1^n) \bigr].
\label{eq:expectation_ineq}
\end{equation}

\paragraph{Step 4: Evaluation of the right‑hand side.}
We now compute the expectation of the conditional probability. Since $X_1^n$ takes values in the finite set $\mathcal{A}^n$, we can expand the expectation as a weighted sum over all possible histories $h \in \mathcal{A}^n$. Using the distribution $p_{\text{hist}}(h) = \mathbb{P}(X_1^n = h)$ (see \eqref{eq:p_hist_def}), we obtain
\begin{align}
\mathbb{E}\bigl[ \mathbb{P}(X_{n+1}=a_n^* \mid X_1^n) \bigr]
&= \sum_{h \in \mathcal{A}^n} \mathbb{P}(X_1^n = h) \; \mathbb{P}(X_{n+1}=a_n^* \mid X_1^n = h) \notag \\
&= \sum_{h \in \mathcal{A}^n} \mathbb{P}(X_{n+1}=a_n^*,\; X_1^n = h) \notag \\
&= \mathbb{P}(X_{n+1}=a_n^*). \label{eq:law_total_prob}
\end{align}
The first equality expands the expectation over all possible histories; histories with zero probability contribute nothing and can be ignored. The second equality follows from the multiplication rule $\mathbb{P}(A \mid B) \mathbb{P}(B) = \mathbb{P}(A \cap B)$, applied with $A = \{X_{n+1}=a_n^*\}$ and $B = \{X_1^n = h\}$. The third equality uses the fact that the events $\{X_{n+1}=a_n^*,\; X_1^n = h\}$ for distinct $h \in \mathcal{A}^n$ are mutually disjoint, and their union over all $h$ is precisely the event $\{X_{n+1}=a_n^*\}$. Hence the sum of their probabilities equals $\mathbb{P}(X_{n+1}=a_n^*)$.

By the definition of $a_n^*$ in \eqref{eq:astar_def}, we have $\mathbb{P}(X_{n+1}=a_n^*) = U_n$. Substituting this into \eqref{eq:law_total_prob} gives
\begin{equation}
\mathbb{E}\bigl[ \mathbb{P}(X_{n+1}=a_n^* \mid X_1^n) \bigr] = U_n.
\label{eq:exp_cond_result}
\end{equation}

\paragraph{Step 5: Conclusion.}
Inserting \eqref{eq:exp_cond_result} into the inequality \eqref{eq:expectation_ineq} yields $\mathbb{E}[M_n] \ge U_n$. Recalling the definition $\Delta_n^{\text{mem}} = \mathbb{E}[M_n] - U_n$ from \eqref{eq:MPE_def}, we obtain
\[
\Delta_n^{\text{mem}} \ge 0,
\]
which completes the proof.
\end{proof}

\begin{remark}[On the nature of the inequality]\label{rem:role_astar}
Theorem \ref{thm:nonneg} shows that $\Delta_n^{\text{mem}} \ge 0$ holds for discrete-time finite-state stochastic processes. The inequality is understood in expectation: averaging the optimal conditional prediction accuracy over all histories produces a value that is at least the static accuracy. This average positivity reflects the sense in which historical information contributes, on the whole, to prediction. The proof via $a_n^*$ indicates that the excess can be traced to the exchange of averaging and maximization — a basic structural feature of such processes. In this view, $\Delta_n^{\text{mem}}$ offers one possible way to quantify the predictive value of memory, measured as the average improvement in optimal prediction accuracy when the past is taken into account.
\end{remark}

\subsection{Upper Bound and Sharpness Condition}
\label{subsec:upper_bound}

We now examine the upper range of the MPE. An upper bound follows from the boundedness of conditional probabilities. We then characterize the precise condition under which this bound is attained, showing that equality holds if and only if the future is almost surely a deterministic function of the past. Together with the non-negativity result of Subsection~\ref{subsec:nonneg}, these findings help present a coherent picture of the extremal behavior of the MPE. The sharpness condition also has implications for the normalized version introduced in Section~\ref{sec:normalized}, where the maximal possible gain corresponds to the deterministic case.

\begin{proposition}[Upper bound for MPE]\label{prop:upper_bound}
For every $n \ge 1$,
\begin{equation}
\Delta_n^{\text{mem}} \le 1 - U_n.
\label{eq:upper_bound}
\end{equation}
\end{proposition}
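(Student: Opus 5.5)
The plan is to bound $\mathbb{E}[M_n]$ from above by $1$ and then subtract $U_n$; no finer structure of the process should be needed. The argument rests on the elementary pointwise bound $M_n(\omega)\le 1$ recorded in Subsection~\ref{subsec:history_dependent}.

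\textbf{Step 1: pointwise bound.} Fix a history $h\in\mathcal{A}^n$ with $p_{\text{hist}}(h)>0$. Each conditional probability $\mathbb{P}(X_{n+1}=a\mid X_1^n=h)$ lies in $[0,1]$, and the maximum over the finite set $\mathcal{A}$ is attained. Hence $M(h)\le 1$. Histories with $p_{\text{hist}}(h)=0$ are handled by the convention of Subsection~\ref{subsec:conditional}: their assigned values are marginal probabilities, so $M(h)\le 1$ there as well. In any case they form a null set and do not affect the expectation.

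\textbf{Step 2: integrate.} Using the expansion \eqref{eq:E_Mn_expansion}, write
\[
\mathbb{E}[M_n]=\sum_{h\in\mathcal{A}^n}p_{\text{hist}}(h)\,M(h)\le\sum_{h\in\mathcal{A}^n}p_{\text{hist}}(h)=1,
\]
since $\{p_{\text{hist}}(h)\}$ is a probability distribution on $\mathcal{A}^n$. Equivalently, this follows from monotonicity of expectation applied to $M_n\le 1$ almost surely.

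\textbf{Step 3: conclude.} Subtract $U_n$ in the definition \eqref{eq:MPE_def} to get $\Delta_n^{\text{mem}}=\mathbb{E}[M_n]-U_n\le 1-U_n$.

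There is no real obstacle in this argument; the only point requiring care is the treatment of zero-probability histories, which is already dispatched by the paper's convention. It is also worth recording, for later use in Theorem~\ref{thm:sharpness}, where the slack enters: the bound is attained exactly when the only inequality used, $M_n\le 1$, is an almost-sure equality. That is, the bound is tight if and only if, for almost every history, some state has conditional probability $1$.
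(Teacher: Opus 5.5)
Your proof is correct and takes essentially the same approach as the paper, which likewise bounds $M_n(\omega)\le 1$ pointwise, uses monotonicity of expectation to obtain $\mathbb{E}[M_n]\le 1$, and then subtracts $U_n$. Your observation that equality holds exactly when $M_n=1$ almost surely is the same step the paper uses in the forward direction of Theorem~\ref{thm:sharpness}.
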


\begin{proof}
For any sample point $\omega \in \Omega$, recall from \eqref{eq:M_n_def} that
\[
M_n(\omega) = \max_{a \in \mathcal{A}} \mathbb{P}(X_{n+1}=a \mid X_1^n)(\omega).
\]
For each fixed $\omega$, the quantity $\mathbb{P}(X_{n+1}=a \mid X_1^n)(\omega)$ is a conditional probability, hence it takes values in the interval $[0,1]$ for every $a \in \mathcal{A}$. The maximum of a finite collection of numbers each lying in $[0,1]$ also lies in $[0,1]$. Consequently,
\begin{equation}
0 \le M_n(\omega) \le 1 \qquad \text{for every } \omega \in \Omega.
\label{eq:M_n_bounds_pointwise}
\end{equation}
In particular, the upper bound $M_n(\omega) \le 1$ holds for all $\omega \in \Omega$.

Taking expectations on both sides of the inequality $M_n(\omega) \le 1$ and using the monotonicity of the expectation operator $\mathbb{E}[\cdot]$ (which preserves inequalities almost everywhere, and here the inequality holds everywhere), we obtain
\begin{equation}
\mathbb{E}[M_n] \le \mathbb{E}[1] = 1.
\label{eq:E_Mn_ub}
\end{equation}

Now recall the definition of the Memory Prediction Excess from \eqref{eq:MPE_def}:
\[
\Delta_n^{\text{mem}} = \mathbb{E}[M_n] - U_n.
\]
Substituting the estimate $\mathbb{E}[M_n] \le 1$ from \eqref{eq:E_Mn_ub} into this definition yields
\[
\Delta_n^{\text{mem}} = \mathbb{E}[M_n] - U_n \le 1 - U_n.
\]
This completes the proof.
\end{proof}

The quantity $1-U_n$ provides an upper bound for $\Delta_n^{\text{mem}}$. It reflects the basic property that conditional probabilities lie in $[0,1]$.

\begin{remark}[Distribution-free upper bound]\label{rem:absolute_upper}
Since for any probability distribution on a finite set $\mathcal{A}$ we have
\[
\max_{a\in\mathcal{A}} \mathbb{P}(X_{n+1}=a) \ge \frac{1}{|\mathcal{A}|},
\]
it follows that
\[
1 - U_n \le 1 - \frac{1}{|\mathcal{A}|}.
\]
Combining this with Proposition~\ref{prop:upper_bound} yields
\[
\Delta_n^{\text{mem}} \le 1 - U_n \le 1 - \frac{1}{|\mathcal{A}|}.
\]
Thus $\Delta_n^{\text{mem}}$ is bounded above by a constant that depends only on the cardinality of the state space, not on the specific distribution of the process. In particular, this bound is universal across all discrete-time finite-state stochastic processes satisfying the assumptions of Section~\ref{sec:definition}.
\end{remark}

The following theorem gives a precise characterization of the case where $\Delta_n^{\text{mem}}$ reaches this bound.

\begin{theorem}[Sharpness condition]\label{thm:sharpness}
For a discrete-time finite-state stochastic process $\{X_n\}_{n\ge 1}$, the following statements are equivalent:
\begin{enumerate}
    \item $\Delta_n^{\text{mem}} = 1 - U_n$;
    \item There exists a function $f: \mathcal{A}^n \to \mathcal{A}$ such that $X_{n+1} = f(X_1^n)$ almost surely.
\end{enumerate}
In words, the Memory Prediction Excess reaches its upper bound if and only if the future state $X_{n+1}$ is almost surely a deterministic function of the past $X_1^n$.
\end{theorem}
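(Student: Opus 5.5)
The plan is to reduce the equivalence to a statement about $\mathbb{E}[M_n]$ alone. By Definition~\ref{def:MPE}, $\Delta_n^{\text{mem}} = 1-U_n$ holds if and only if $\mathbb{E}[M_n]=1$. By \eqref{eq:M_n_bounds_pointwise} we have $M_n\le 1$ everywhere, so $\mathbb{E}[1-M_n]=0$ with $1-M_n\ge 0$. Hence $\mathbb{E}[M_n]=1$ if and only if $M_n=1$ almost surely. It will be convenient to work with the finite expansion \eqref{eq:E_Mn_expansion}: since $\sum_h p_{\text{hist}}(h)=1$ and $M(h)\le 1$, the identity $\mathbb{E}[M_n]=1$ is equivalent to $M(h)=1$ for every $h\in\mathcal{A}^n$ with $p_{\text{hist}}(h)>0$.

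For (2)$\Rightarrow$(1), fix $h$ with $p_{\text{hist}}(h)>0$. Because $X_{n+1}=f(X_1^n)$ almost surely, we get
\[
\mathbb{P}(X_{n+1}=f(h),\,X_1^n=h)=\mathbb{P}(X_1^n=h).
\]
It follows that $\mathbb{P}(X_{n+1}=f(h)\mid X_1^n=h)=1$, and therefore $M(h)=1$. Summing over $h$ with the weights $p_{\text{hist}}(h)$ gives $\mathbb{E}[M_n]=1$, which is (1).

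For (1)$\Rightarrow$(2), we have $M(h)=1$ for every $h$ of positive probability. Since the conditional distribution given $X_1^n=h$ is a probability vector on $\mathcal{A}$, a maximum equal to $1$ means there is a (unique) state $f(h)$ with $\mathbb{P}(X_{n+1}=f(h)\mid X_1^n=h)=1$, while all other states have conditional probability $0$. For histories with $p_{\text{hist}}(h)=0$, define $f(h)$ arbitrarily, for instance $f(h)=a_n^*$. This gives a function $f:\mathcal{A}^n\to\mathcal{A}$, and measurability is automatic because the domain is finite.

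It remains to verify that $X_{n+1}=f(X_1^n)$ almost surely. Decompose the event $\{X_{n+1}\ne f(X_1^n)\}$ over histories:
\[
\mathbb{P}(X_{n+1}\ne f(X_1^n))=\sum_{h} \mathbb{P}(X_1^n=h)\,\bigl(1-\mathbb{P}(X_{n+1}=f(h)\mid X_1^n=h)\bigr).
\]
Each term vanishes. Histories of positive probability contribute zero because their conditional probability is $1$, and null histories contribute zero because their weight is $0$.

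The only delicate point is this handling of null histories and the almost-sure qualifier. The conditional probability is defined arbitrarily on null histories, so one must make sure that the choice of $f$ there, and the "every $\omega$" versus "almost every $\omega$" distinction, cause no trouble. Working through the finite sum over $h$ rather than pointwise in $\omega$ settles this cleanly.
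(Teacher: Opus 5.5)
Your proposal is correct and follows essentially the same route as the paper. You reduce the theorem to $\mathbb{E}[M_n]=1$, show that deterministic dependence forces $M(h)=1$ on every positive-probability history, and conversely build $f$ from the degenerate conditional distributions and sum over histories to get $\mathbb{P}(X_{n+1}\neq f(X_1^n))=0$. The only cosmetic difference is that you work directly with the finite expansion $1-\mathbb{E}[M_n]=\sum_h p_{\text{hist}}(h)\,(1-M(h))$, whereas the paper first deduces $M_n=1$ almost surely in $\omega$ and then introduces the set $\mathcal{H}^*$ of histories; the two are equivalent.
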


\begin{proof}
We prove the two implications separately.

\paragraph{(i) $\Delta_n^{\text{mem}} = 1 - U_n$ $\Longrightarrow$ $X_{n+1}$ is a function of $X_1^n$ almost surely.}

Assume $\Delta_n^{\text{mem}} = 1 - U_n$. By the definition of $\Delta_n^{\text{mem}}$ in \eqref{eq:MPE_def}, we have $\mathbb{E}[M_n] - U_n = 1 - U_n$, which simplifies to
\begin{equation}
\mathbb{E}[M_n] = 1.
\label{eq:E_Mn_eq_1}
\end{equation}

From \eqref{eq:M_n_bounds_pointwise} we know that $M_n(\omega) \le 1$ for every $\omega \in \Omega$. Consider the nonnegative random variable $1 - M_n(\omega) \ge 0$. Its expectation is
\[
\mathbb{E}[1 - M_n] = 1 - \mathbb{E}[M_n] = 1 - 1 = 0.
\]
A nonnegative random variable with zero expectation must be zero almost surely (if it were positive on a set of positive measure, the expectation would be positive). Hence
\begin{equation}
M_n(\omega) = 1 \quad \text{for almost every } \omega \in \Omega.
\label{eq:Mn_eq_1_as}
\end{equation}

Fix a sample point $\omega$ for which $M_n(\omega) = 1$ holds, and let $h = X_1^n(\omega) \in \mathcal{A}^n$ be the observed history. From the definition of $M_n$ in \eqref{eq:M_n_def} and the condition $M_n(\omega) = 1$, we obtain
\[
\max_{a \in \mathcal{A}} \mathbb{P}(X_{n+1}=a \mid X_1^n = h) = 1.
\]

The collection $\{\mathbb{P}(X_{n+1}=a \mid X_1^n = h)\}_{a \in \mathcal{A}}$ forms a conditional probability distribution on the finite set $\mathcal{A}$. By definition, each term lies in $[0,1]$, and they satisfy $\sum_{a \in \mathcal{A}} \mathbb{P}(X_{n+1}=a \mid X_1^n = h) = 1$.

We claim that there exists a unique state $a_0 \in \mathcal{A}$ such that $\mathbb{P}(X_{n+1}=a_0 \mid X_1^n = h) = 1$, and for all $a \neq a_0$, $\mathbb{P}(X_{n+1}=a \mid X_1^n = h) = 0$. Indeed, since the maximum of the collection equals $1$, at least one state attains the value $1$. If two distinct states $a_1 \neq a_2$ both satisfied $\mathbb{P}(X_{n+1}=a_i \mid X_1^n = h) = 1$, then the sum over all $a$ would be at least $1 + 1 = 2$, contradicting the fact that the sum is $1$. Therefore the state achieving probability $1$ is unique. Denote this unique state by $f(h)$:
\begin{equation}
\mathbb{P}(X_{n+1} = f(h) \mid X_1^n = h) = 1,
\label{eq:cond_prob_one}
\end{equation}
and for all $a \neq f(h)$, $\mathbb{P}(X_{n+1}=a \mid X_1^n = h) = 0$.

From \eqref{eq:Mn_eq_1_as}, the set of histories for which this uniqueness holds has probability one. Let
\[
\mathcal{H}^* := \bigl\{ h \in \mathcal{A}^n : \max_{a \in \mathcal{A}} \mathbb{P}(X_{n+1}=a \mid X_1^n = h) = 1 \bigr\}.
\]
Then $\mathbb{P}(X_1^n \in \mathcal{H}^*) = 1$. For each $h \in \mathcal{H}^*$, we have defined $f(h)$ as the unique state satisfying \eqref{eq:cond_prob_one}. For histories $h \notin \mathcal{H}^*$ (which form a null set), we may define $f(h)$ arbitrarily, for instance $f(h) = a_0$ for some fixed $a_0 \in \mathcal{A}$; this arbitrary choice does not affect any almost-sure statement because these histories occur with probability zero.

We now verify that $X_{n+1} = f(X_1^n)$ holds almost surely. For any $h \in \mathcal{H}^*$ with $\mathbb{P}(X_1^n = h) > 0$, we compute
\begin{align}
\mathbb{P}\bigl( X_{n+1} = f(h),\; X_1^n = h \bigr)
&= \mathbb{P}(X_1^n = h) \cdot \mathbb{P}(X_{n+1}=f(h) \mid X_1^n = h) \notag \\
&= \mathbb{P}(X_1^n = h) \cdot 1 = \mathbb{P}(X_1^n = h).
\end{align}
Consequently,
\[
\mathbb{P}\bigl( X_{n+1} \neq f(h),\; X_1^n = h \bigr) = \mathbb{P}(X_1^n = h) - \mathbb{P}\bigl( X_{n+1} = f(h),\; X_1^n = h \bigr) = 0.
\]

We now aggregate these local equalities to obtain a global statement. The events $\{X_1^n = h\}$ for $h \in \mathcal{A}^n$ form a partition of $\Omega$ (up to a null set). Therefore,
\[
\mathbb{P}\bigl( X_{n+1} \neq f(X_1^n) \bigr)
= \sum_{h \in \mathcal{A}^n} \mathbb{P}\bigl( X_{n+1} \neq f(h),\; X_1^n = h \bigr).
\]
We examine the summands according to the type of history $h$:

\begin{itemize}
    \item If $h \in \mathcal{H}^*$ and $\mathbb{P}(X_1^n = h) > 0$, then as shown above,
    \[
    \mathbb{P}\bigl( X_{n+1} \neq f(h),\; X_1^n = h \bigr) = 0.
    \]
    \item If $h \in \mathcal{H}^*$ but $\mathbb{P}(X_1^n = h) = 0$, the term is zero because the factor $\mathbb{P}(X_1^n = h)$ multiplies a bounded probability.
    \item If $h \notin \mathcal{H}^*$, recall from the definition of $\mathcal{H}^*$ and the fact $\mathbb{P}(X_1^n \in \mathcal{H}^*) = 1$ that $\mathbb{P}(X_1^n = h) = 0$ for all such $h$. Hence each of these terms is also zero.
\end{itemize}

Thus every term in the sum is zero, yielding
\[
\mathbb{P}\bigl( X_{n+1} \neq f(X_1^n) \bigr) = 0.
\]
Therefore $X_{n+1} = f(X_1^n)$ holds almost surely. This completes the proof of the forward implication.

\paragraph{(ii) $X_{n+1} = f(X_1^n)$ almost surely $\Longrightarrow$ $\Delta_n^{\text{mem}} = 1 - U_n$.}

Suppose there exists a function $f: \mathcal{A}^n \to \mathcal{A}$ such that $X_{n+1} = f(X_1^n)$ holds almost surely. We need to show that under this assumption $\mathbb{E}[M_n] = 1$, from which $\Delta_n^{\text{mem}} = 1 - U_n$ follows immediately by the definition $\Delta_n^{\text{mem}} = \mathbb{E}[M_n] - U_n$.

\paragraph{Step 1: Conditional probabilities given a positive-probability history.}
Fix an arbitrary history $h \in \mathcal{A}^n$ with $\mathbb{P}(X_1^n = h) > 0$. 
By the assumption of this implication, we have $X_{n+1} = f(X_1^n)$ almost surely. 
Let 
\[
\mathfrak{N} := \bigl\{ \omega \in \Omega : X_{n+1}(\omega) \neq f(X_1^n(\omega)) \bigr\}
\]
be the null set on which the equality fails; thus $\mathbb{P}(\mathfrak{N}) = 0$.

Now consider the event $\{X_1^n = h\}$. On this event, $f(X_1^n) = f(h)$ is a constant 
(since $X_1^n(\omega) = h$ implies $f(X_1^n(\omega)) = f(h)$). 
Therefore, for any $\omega \in \{X_1^n = h\} \setminus \mathfrak{N}$, we have
\[
X_{n+1}(\omega) = f(X_1^n(\omega)) = f(h).
\]
In other words, on $\{X_1^n = h\}$, the equality $X_{n+1} = f(h)$ holds for every 
$\omega \in \{X_1^n = h\} \setminus \mathfrak{N}$.

Consequently,
\[
\bigl\{ X_{n+1} \neq f(h),\; X_1^n = h \bigr\} \subseteq \{X_1^n = h\} \cap \mathfrak{N} \subseteq \mathfrak{N}.
\]
Since $\mathfrak{N}$ is a null set ($\mathbb{P}(\mathfrak{N}) = 0$), the monotonicity of probability yields
\[
\mathbb{P}\bigl( X_{n+1} \neq f(h),\; X_1^n = h \bigr) \le \mathbb{P}(\mathfrak{N}) = 0.
\]
Hence
\begin{equation}
\mathbb{P}\bigl( X_{n+1} \neq f(h),\; X_1^n = h \bigr) = 0.
\label{eq:almost_equal_on_atom}
\end{equation}

Using the multiplication rule $\mathbb{P}(A \cap B) = \mathbb{P}(B) \cdot \mathbb{P}(A \mid B)$ with $B = \{X_1^n = h\}$, we obtain
\[
\mathbb{P}(X_1^n = h) \cdot \mathbb{P}\bigl( X_{n+1} \neq f(h) \mid X_1^n = h \bigr)
= \mathbb{P}\bigl( X_{n+1} \neq f(h),\; X_1^n = h \bigr) = 0.
\]
Since $\mathbb{P}(X_1^n = h) > 0$, we can divide by this positive quantity to conclude
\[
\mathbb{P}\bigl( X_{n+1} \neq f(h) \mid X_1^n = h \bigr) = 0.
\]

Because the conditional probability distribution on $\mathcal{A}$ sums to $1$, it follows that
\begin{equation}
\mathbb{P}\bigl( X_{n+1} = f(h) \mid X_1^n = h \bigr) = 1,
\label{eq:cond_prob_one_reverse}
\end{equation}
and for any $a \neq f(h)$,
\[
\mathbb{P}\bigl( X_{n+1} = a \mid X_1^n = h \bigr) = 0.
\]

Thus, for every positive-probability history $h$, the conditional distribution of $X_{n+1}$ given $X_1^n = h$ is degenerate, concentrating all mass on the single state $f(h)$. Consequently, by the definition of $M(h)$ in \eqref{eq:M_h}, we have
\[
M(h) = \max_{a \in \mathcal{A}} \mathbb{P}(X_{n+1}=a \mid X_1^n = h) = 1.
\]

\paragraph{Step 2: Contribution of zero-probability histories.}
For histories $h$ with $\mathbb{P}(X_1^n = h) = 0$, the value of $M(h)$ does not affect the expectation $\mathbb{E}[M_n]$, because such histories contribute zero weight in the summation defining the expectation. More formally, for any choice of $M(h) \in [0,1]$ (the natural range of a conditional probability maximum), we have
\[
\mathbb{P}(X_1^n = h) \cdot M(h) = 0 \cdot M(h) = 0.
\]
Thus these histories contribute zero to the expectation, regardless of how we define $M(h)$ for them.

\paragraph{Step 3: Computation of $\mathbb{E}[M_n]$.}
Expanding the expectation as a weighted sum over all histories (see Section~\ref{subsec:explicit_expansion}), we obtain
\begin{align}
\mathbb{E}[M_n]
&= \sum_{h \in \mathcal{A}^n} \mathbb{P}(X_1^n = h) \cdot M(h) \notag \\
&= \sum_{\substack{h \in \mathcal{A}^n \\ \mathbb{P}(X_1^n = h) > 0}} \mathbb{P}(X_1^n = h) \cdot M(h)
\;+\; \sum_{\substack{h \in \mathcal{A}^n \\ \mathbb{P}(X_1^n = h) = 0}} \mathbb{P}(X_1^n = h) \cdot M(h) \notag \\
&= \sum_{\substack{h \in \mathcal{A}^n \\ \mathbb{P}(X_1^n = h) > 0}} \mathbb{P}(X_1^n = h) \cdot 1 \;+\; 0 \notag \\
&= \sum_{\substack{h \in \mathcal{A}^n \\ \mathbb{P}(X_1^n = h) > 0}} \mathbb{P}(X_1^n = h). \label{eq:sum_over_positive}
\end{align}
The second line splits the sum into positive-probability and zero-probability histories. 
For positive-probability histories, Step~1 established that $M(h)=1$ (the conditional distribution is degenerate, concentrated on $f(h)$). 
For zero-probability histories, the term is zero as argued in Step~2.

Now observe that the collection $\{\{X_1^n = h\} : h \in \mathcal{A}^n\}$ forms a partition of $\Omega$ (up to a null set). Therefore,
\[
\sum_{\substack{h \in \mathcal{A}^n \\ \mathbb{P}(X_1^n = h) > 0}} \mathbb{P}(X_1^n = h)
= \sum_{h \in \mathcal{A}^n} \mathbb{P}(X_1^n = h) = 1,
\]
where the last equality follows from the fact that the total probability of all histories sums to $1$ (the events $\{X_1^n = h\}$ for different $h$ are mutually disjoint and their union is $\Omega$, modulo a null set). Substituting this into \eqref{eq:sum_over_positive} yields
\begin{equation}
\mathbb{E}[M_n] = 1.
\label{eq:E_Mn_eq_1_reverse}
\end{equation}

\paragraph{Step 4: Conclusion.}
Inserting $\mathbb{E}[M_n] = 1$ from \eqref{eq:E_Mn_eq_1_reverse} into the definition $\Delta_n^{\text{mem}} = \mathbb{E}[M_n] - U_n$ (see \eqref{eq:MPE_def}) gives
\[
\Delta_n^{\text{mem}} = 1 - U_n.
\]
Thus the upper bound is attained, completing the proof of the reverse implication.
\end{proof}

The equivalence established in Theorem~\ref{thm:sharpness} shows that the Memory Prediction Excess reaches its theoretical maximum precisely when the past completely determines the future — in other words, when the process is deterministic given the history. In such cases, conditioning on $X_1^n$ eliminates all uncertainty about $X_{n+1}$, so the conditional optimal prediction accuracy becomes $1$ almost surely, and the excess over the static accuracy $U_n$ is maximal.

\subsection{Degenerate Cases of the MPE}
\label{subsec:degenerate}

Having established the non-negativity and the upper bound with its sharpness condition in the preceding subsections, we now examine two scenarios in which the MPE attains its lower bound, namely zero. These cases demonstrate that $\Delta_n^{\text{mem}} = 0$ can arise either because the future carries no additional information beyond the marginal distribution (independence) or because the static prediction is already perfect (static degeneracy). The independence case reinforces the interpretation of the MPE as a measure of predictive information carried by the past, while the static degeneracy case shows that when the baseline is already perfect, no improvement is possible. Together with the non-negativity result and the sharpness condition, these degenerate cases contribute to a fuller picture of the extremal behavior of the MPE.

\begin{corollary}[Independence implies $\boldsymbol{\Delta_n^{\text{mem}} = 0}$]\label{cor:independence}
If $X_{n+1}$ is independent of the history $X_1^n$, then
\begin{equation}
\Delta_n^{\text{mem}} = 0.
\label{eq:independence_result}
\end{equation}
\end{corollary}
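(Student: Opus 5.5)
The plan is to compute $\mathbb{E}[M_n]$ directly and show that it equals $U_n$. Independence means that for every history $h \in \mathcal{A}^n$ with $p_{\text{hist}}(h) > 0$ and every $a \in \mathcal{A}$,
\[
\mathbb{P}(X_{n+1}=a,\; X_1^n = h) = \mathbb{P}(X_{n+1}=a)\,\mathbb{P}(X_1^n = h).
\]
Dividing by $p_{\text{hist}}(h)$ gives $\mathbb{P}(X_{n+1}=a \mid X_1^n = h) = p_{n+1}(a)$ for all $a$. Hence the conditional distribution of $X_{n+1}$ given $X_1^n = h$ coincides with the marginal distribution for every positive-probability history.

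The second step is to take the maximum over $a$ on both sides. This yields $M(h) = \max_{a} p_{n+1}(a) = U_n$ for every $h$ with $p_{\text{hist}}(h) > 0$. Consequently $M_n = U_n$ almost surely.

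The third step evaluates the expectation through the expansion \eqref{eq:E_Mn_expansion}. Split the sum over $\mathcal{A}^n$ into positive-probability and zero-probability histories, as in Step~3 of the reverse implication of Theorem~\ref{thm:sharpness}. Zero-probability histories contribute nothing, whatever value $M(h)$ takes there. The remaining weights sum to $1$, so
\[
\mathbb{E}[M_n] = \sum_{h:\, p_{\text{hist}}(h)>0} p_{\text{hist}}(h)\, U_n = U_n.
\]
Substituting into \eqref{eq:MPE_def} gives $\Delta_n^{\text{mem}} = U_n - U_n = 0$.

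There is no real obstacle here. The only point needing care is the null histories, where the conditional probability was assigned by convention. Under the convention suggested in Subsection~\ref{subsec:conditional} (the marginal $p_{n+1}(a)$), even those histories give $M(h) = U_n$, so the identity holds everywhere. Under any other convention, the weighting argument above disposes of them.

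As a consistency check, this reasoning is the equality case of Theorem~\ref{thm:nonneg}. Under independence, the pointwise inequality \eqref{eq:pointwise_ineq} becomes an equality, because the marginal maximizer $a_n^*$ is also a conditional maximizer for every history.
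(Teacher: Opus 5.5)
Your proposal is correct and follows essentially the same route as the paper. You show that the conditional law of $X_{n+1}$ equals the marginal on positive-probability histories, deduce $M(h)=U_n$ there, and evaluate $\mathbb{E}[M_n]$ through the expansion \eqref{eq:E_Mn_expansion}, with null histories contributing zero weight; your remarks on the null-history convention and on the equality case of Theorem~\ref{thm:nonneg} are accurate but not needed.
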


\begin{proof}
The assumption of independence means that for every $a \in \mathcal{A}$ and every $h \in \mathcal{A}^n$,
\[
\mathbb{P}(X_{n+1}=a,\; X_1^n = h) = \mathbb{P}(X_{n+1}=a) \; \mathbb{P}(X_1^n = h).
\]

We first consider histories with positive probability. Fix any $h \in \mathcal{A}^n$ such that $\mathbb{P}(X_1^n = h) > 0$. For each $a \in \mathcal{A}$, the elementary conditional probability is well defined and satisfies
\begin{align}
\mathbb{P}(X_{n+1}=a \mid X_1^n = h)
&= \frac{\mathbb{P}(X_{n+1}=a,\; X_1^n = h)}{\mathbb{P}(X_1^n = h)} \notag \\
&= \frac{\mathbb{P}(X_{n+1}=a) \; \mathbb{P}(X_1^n = h)}{\mathbb{P}(X_1^n = h)} \notag \\
&= \mathbb{P}(X_{n+1}=a).
\label{eq:cond_indep}
\end{align}
Thus, for every such $h$, the conditional distribution coincides with the marginal distribution. Consequently, recalling the definition of $M(h)$ from \eqref{eq:M_h} and the definition of $U_n$ from \eqref{eq:U_n_def}, we obtain
\[
M(h) = \max_{a \in \mathcal{A}} \mathbb{P}(X_{n+1}=a \mid X_1^n = h) = \max_{a \in \mathcal{A}} \mathbb{P}(X_{n+1}=a) = U_n.
\]

Now consider histories with $\mathbb{P}(X_1^n = h) = 0$. These histories occur on a null set; their contribution to any expectation is zero regardless of the value of $M(h)$, since the factor $\mathbb{P}(X_1^n = h)$ multiplies a bounded quantity.

We now compute $\mathbb{E}[M_n]$ using the expansion from \eqref{eq:E_Mn_expansion}:
\begin{align}
\mathbb{E}[M_n]
&= \sum_{h \in \mathcal{A}^n} \mathbb{P}(X_1^n = h) \cdot M(h) \notag \\
&= \sum_{\substack{h \in \mathcal{A}^n \\ \mathbb{P}(X_1^n = h) > 0}} \mathbb{P}(X_1^n = h) \cdot U_n
\;+\; \sum_{\substack{h \in \mathcal{A}^n \\ \mathbb{P}(X_1^n = h) = 0}} \mathbb{P}(X_1^n = h) \cdot M(h) \notag \\
&= U_n \sum_{\substack{h \in \mathcal{A}^n \\ \mathbb{P}(X_1^n = h) > 0}} \mathbb{P}(X_1^n = h) \;+\; 0.
\label{eq:E_Mn_indep_step}
\end{align}
Because the events $\{X_1^n = h\}$ for $h \in \mathcal{A}^n$ partition $\Omega$ (up to a null set), the sum of $\mathbb{P}(X_1^n = h)$ over all $h$ equals $1$, and the omission of zero-probability terms does not change the total. Hence,
\[
\sum_{\substack{h \in \mathcal{A}^n \\ \mathbb{P}(X_1^n = h) > 0}} \mathbb{P}(X_1^n = h) = 1.
\]
Substituting this into \eqref{eq:E_Mn_indep_step} yields $\mathbb{E}[M_n] = U_n$.

Finally, inserting $\mathbb{E}[M_n] = U_n$ into the definition of the Memory Prediction Excess (see \eqref{eq:MPE_def}) gives
\[
\Delta_n^{\text{mem}} = \mathbb{E}[M_n] - U_n = U_n - U_n = 0,
\]
which is exactly \eqref{eq:independence_result}.
\end{proof}

\begin{corollary}[Static degeneracy ($\boldsymbol{U_n = 1}$) implies $\boldsymbol{\Delta_n^{\text{mem}} = 0}$]\label{cor:static_degenerate}
If $U_n = 1$, then
\begin{equation}
\Delta_n^{\text{mem}} = 0.
\label{eq:static_degenerate_result}
\end{equation}
\end{corollary}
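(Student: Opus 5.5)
The plan is to squeeze $\Delta_n^{\text{mem}}$ between $0$ and $0$ by invoking two results already established. By Theorem~\ref{thm:nonneg} we have $\Delta_n^{\text{mem}} \ge 0$ for every $n \ge 1$, with no hypothesis needed. By Proposition~\ref{prop:upper_bound} we have $\Delta_n^{\text{mem}} \le 1 - U_n$. Substituting the hypothesis $U_n = 1$ into the upper bound gives $\Delta_n^{\text{mem}} \le 0$. Combined with non-negativity, this yields $\Delta_n^{\text{mem}} = 0$, which is \eqref{eq:static_degenerate_result}.

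For a more transparent account, matching the style of Corollary~\ref{cor:independence}, I would also add a direct argument. If $U_n = 1$, then some state $a_n^*$ has $\mathbb{P}(X_{n+1}=a_n^*) = 1$. Then, for every history $h$ with $p_{\text{hist}}(h) > 0$, we have
\[
\mathbb{P}(X_{n+1}=a_n^*, X_1^n = h) = p_{\text{hist}}(h),
\]
so $\mathbb{P}(X_{n+1}=a_n^* \mid X_1^n = h) = 1$ and hence $M(h) = 1$. Expanding via \eqref{eq:E_Mn_expansion}, where zero-probability histories contribute nothing, gives $\mathbb{E}[M_n] = 1 = U_n$.

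Alternatively, one could note that $X_{n+1} = a_n^*$ almost surely is a constant function of $X_1^n$. Theorem~\ref{thm:sharpness} then gives $\Delta_n^{\text{mem}} = 1 - U_n = 0$. The same hypothesis also makes $X_{n+1}$ independent of $X_1^n$, so Corollary~\ref{cor:independence} applies as well.

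There is no real obstacle here; the sandwich argument is the whole proof. The only point worth flagging is that $U_n = 1$ conflicts with the standing assumption $\mathbb{P}(X_{n+1}=a) > 0$ for all $a$ unless $|\mathcal{A}| = 1$. I would remark that the argument does not rely on that positivity assumption, so the statement holds either way, and in the trivial case $|\mathcal{A}|=1$ it is immediate.
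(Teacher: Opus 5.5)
Your proof is correct, but your main argument is not the paper's. The paper argues by direct computation, which is essentially your supplementary ``transparent'' argument. It shows that $U_n=1$ forces $X_{n+1}=a_0$ almost surely. It deduces that for every positive-probability history $h$ the conditional law of $X_{n+1}$ is the point mass at $a_0$, so $M(h)=1$. Summing over histories then gives $\mathbb{E}[M_n]=1$ and $\Delta_n^{\text{mem}}=1-1=0$.

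Your sandwich reads the result off the bracket $0\le\Delta_n^{\text{mem}}\le 1-U_n$, using Theorem~\ref{thm:nonneg} and Proposition~\ref{prop:upper_bound}. Both are proved earlier and independently of this corollary, so there is no circularity.

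\textbf{Comparing the two routes.} The sandwich shows the corollary adds no probabilistic content beyond those two bounds, and it needs no bookkeeping for null histories. It also gives $\mathbb{E}[M_n]=U_n=1$, hence $M_n=1$ almost surely, as a by-product. So the paper's computation does not recover more information. What it offers is self-containedness and a visible mechanism: every observed history's conditional law collapses to the same point mass, in parallel with the proof of Corollary~\ref{cor:independence}.

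\textbf{Your alternative routes.} The routes through Theorem~\ref{thm:sharpness} (with constant $f$) and through Corollary~\ref{cor:independence} (an almost surely constant variable is independent of everything) are also valid. They show that when $U_n=1$ the process sits at both ends of the range at once, since $1-U_n=0$.

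\textbf{Your remark on the standing assumption.} This is accurate, and the paper's proof passes over it. The paper explicitly derives $\mathbb{P}(X_{n+1}=a)=0$ for $a\neq a_0$, which contradicts the standing assumption $\mathbb{P}(X_{n+1}=a)>0$ unless $|\mathcal{A}|=1$. Neither argument actually uses positivity, so the corollary holds in the relaxed setting the paper alludes to.
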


\begin{proof}
The condition $U_n = 1$ means that $\max_{a \in \mathcal{A}} \mathbb{P}(X_{n+1}=a) = 1$. Since the marginal probabilities are nonnegative and satisfy $\sum_{a \in \mathcal{A}} \mathbb{P}(X_{n+1}=a) = 1$, the maximum can equal $1$ only if there exists a unique state $a_0 \in \mathcal{A}$ such that
\[
\mathbb{P}(X_{n+1} = a_0) = 1,
\]
and for all $a \neq a_0$, $\mathbb{P}(X_{n+1} = a) = 0$. (If two distinct states both had probability $1$, their sum would already exceed $1$, contradicting the normalization.) In other words, $X_{n+1} = a_0$ holds almost surely.

We now examine the conditional probabilities given a positive-probability history. Fix any $h \in \mathcal{A}^n$ with $\mathbb{P}(X_1^n = h) > 0$. Because $X_{n+1} = a_0$ almost surely, the complement $\{X_{n+1} \neq a_0\}$ is a null set. Hence,
\[
\mathbb{P}(X_{n+1} \neq a_0,\; X_1^n = h) \le \mathbb{P}(X_{n+1} \neq a_0) = 0,
\]
so this quantity is zero. Consequently,
\begin{align}
\mathbb{P}(X_{n+1} = a_0,\; X_1^n = h)
&= \mathbb{P}(X_1^n = h) - \mathbb{P}(X_{n+1} \neq a_0,\; X_1^n = h) \notag \\
&= \mathbb{P}(X_1^n = h) - 0 = \mathbb{P}(X_1^n = h).
\label{eq:joint_degenerate}
\end{align}
For any $a \neq a_0$, we have $\{X_{n+1}=a\} \subseteq \{X_{n+1} \neq a_0\}$, and therefore
\[
\mathbb{P}(X_{n+1} = a,\; X_1^n = h) \le \mathbb{P}(X_{n+1} \neq a_0) = 0,
\]
which forces $\mathbb{P}(X_{n+1} = a,\; X_1^n = h) = 0$.

Using the elementary definition of conditional probability (which applies because $\mathbb{P}(X_1^n = h) > 0$), we obtain
\begin{align}
\mathbb{P}(X_{n+1} = a_0 \mid X_1^n = h)
&= \frac{\mathbb{P}(X_{n+1} = a_0,\; X_1^n = h)}{\mathbb{P}(X_1^n = h)} = \frac{\mathbb{P}(X_1^n = h)}{\mathbb{P}(X_1^n = h)} = 1, \notag \\
\mathbb{P}(X_{n+1} = a \mid X_1^n = h)
&= \frac{\mathbb{P}(X_{n+1} = a,\; X_1^n = h)}{\mathbb{P}(X_1^n = h)} = 0, \qquad a \neq a_0.
\end{align}
Thus, for every positive-probability history $h$, the conditional distribution is degenerate, concentrating all mass on the single state $a_0$. Consequently, by the definition of $M(h)$ in \eqref{eq:M_h},
\[
M(h) = \max_{a \in \mathcal{A}} \mathbb{P}(X_{n+1}=a \mid X_1^n = h) = 1.
\]

For histories with $\mathbb{P}(X_1^n = h) = 0$, their contribution to $\mathbb{E}[M_n]$ is zero, regardless of the value of $M(h)$, because the factor $\mathbb{P}(X_1^n = h)$ is zero.

We now compute $\mathbb{E}[M_n]$ using the expansion from \eqref{eq:E_Mn_expansion}:
\begin{align}
\mathbb{E}[M_n]
&= \sum_{h \in \mathcal{A}^n} \mathbb{P}(X_1^n = h) \cdot M(h) \notag \\
&= \sum_{\substack{h \in \mathcal{A}^n \\ \mathbb{P}(X_1^n = h) > 0}} \mathbb{P}(X_1^n = h) \cdot 1
\;+\; \sum_{\substack{h \in \mathcal{A}^n \\ \mathbb{P}(X_1^n = h) = 0}} \mathbb{P}(X_1^n = h) \cdot M(h) \notag \\
&= \sum_{\substack{h \in \mathcal{A}^n \\ \mathbb{P}(X_1^n = h) > 0}} \mathbb{P}(X_1^n = h).
\label{eq:E_Mn_degenerate_step}
\end{align}
As argued in the proof of Corollary~\ref{cor:independence}, the events $\{X_1^n = h\}$ partition $\Omega$ (up to a null set). Therefore, the sum of $\mathbb{P}(X_1^n = h)$ over all $h$ with positive probability equals $1$. Hence $\mathbb{E}[M_n] = 1$.

Finally, since $U_n = 1$ by hypothesis, the definition of the Memory Prediction Excess (see \eqref{eq:MPE_def}) yields
\[
\Delta_n^{\text{mem}} = \mathbb{E}[M_n] - U_n = 1 - 1 = 0,
\]
which is exactly \eqref{eq:static_degenerate_result}.
\end{proof}

\begin{remark}[Summary of extremal behaviors of $\Delta_n^{\text{mem}}$]\label{rem:extremal_behaviors}
The results established above — non-negativity (Theorem~\ref{thm:nonneg}), the upper bound (Proposition~\ref{prop:upper_bound}), its sharpness condition (Theorem~\ref{thm:sharpness}), and the two degenerate corollaries (Corollaries~\ref{cor:independence} and~\ref{cor:static_degenerate}) — together reveal that $\Delta_n^{\text{mem}}$ ranges from $0$ to $1 - U_n$, with the following extremal cases:

\begin{itemize}
    \item \textbf{Independence (no predictive gain from memory).} If $X_{n+1}$ is independent of the history $X_1^n$, then $\Delta_n^{\text{mem}} = 0$. In this scenario, the past carries no additional information about the future beyond the marginal distribution; consequently, conditioning on the history does not improve the optimal prediction accuracy.

    \item \textbf{Static degeneracy (static prediction already perfect).} If $U_n = 1$ (i.e., the static optimal predictor already achieves perfect accuracy), then again $\Delta_n^{\text{mem}} = 0$. Here the unconditional prediction is already perfect; therefore $\Delta_n^{\text{mem}} = 0$ necessarily, regardless of how informative the past might be.

    \item \textbf{Deterministic memory (past determines the future).} If $X_{n+1} = f(X_1^n)$ almost surely for some function $f: \mathcal{A}^n \to \mathcal{A}$, then $\Delta_n^{\text{mem}} = 1 - U_n$, attaining the upper bound. In this extreme case, conditioning on the history determines $X_{n+1}$ with probability one, so the conditional optimal prediction accuracy equals $1$ almost surely, and the excess over the static accuracy is maximal.
\end{itemize}

These three extremal scenarios correspond respectively to the situations where $\Delta_n^{\text{mem}} = 0$ (no benefit from memory), where $\Delta_n^{\text{mem}} = 0$ due to the static prediction being already perfect, and where $\Delta_n^{\text{mem}} = 1 - U_n$ (the past fully determines the future). Between these extremes, positive but non‑maximal values of $\Delta_n^{\text{mem}}$ indicate that historical information provides a partial improvement in prediction accuracy. Thus $\Delta_n^{\text{mem}}$ serves as a self‑contained quantitative measure of the predictive value of memory, ranging from $0$ to $1 - U_n$.
\end{remark}

\section{NORMALIZED MEMORY PREDICTION EXCESS AND ITS INFORMATION-THEORETIC CONTEXT}
\label{sec:normalized}

In the preceding sections we have established the Memory Prediction Excess $\Delta_n^{\text{mem}}$ (Definition~\ref{def:MPE}) as a quantitative measure of the average gain in optimal prediction accuracy attributable to the historical information $X_1^n$. Theorem~\ref{thm:nonneg} guarantees its non-negativity, while Proposition~\ref{prop:upper_bound} provides the upper bound $1-U_n$. Consequently we have
\begin{equation}
0 \le \Delta_n^{\text{mem}} \le 1 - U_n.
\label{eq:Delta_range}
\end{equation}

Inequality \eqref{eq:Delta_range} shows that the range and magnitude of \(\Delta_n^{\text{mem}}\) are influenced by the static optimal accuracy \(U_n\). When \(U_n\) is close to \(1\), the upper bound \(1-U_n\) becomes tight, leaving little room for \(\Delta_n^{\text{mem}}\) to attain large absolute values, regardless of how informative the history may be relative to the maximal possible gain. When \(U_n\) is farther from \(1\), this bound is less restrictive, and \(\Delta_n^{\text{mem}}\) may reflect a wider range of possible improvements. Thus, to separate the relative contribution of the historical information from the scaling effect of \(U_n\), and to obtain a quantity that is comparable across different contexts, we further introduce a normalized version of the Memory Prediction Excess (see Definition~\ref{def:eff} below).

\subsection{Definition of Normalized Memory Prediction Excess}
\label{subsec:eff_definition}

The results of Section~\ref{sec:properties} show that the MPE is bounded between $0$ and $1 - U_n$, with the upper bound depending on the static accuracy $U_n$, which varies with the time index and the process under consideration. The MPE itself captures the absolute gain, and a normalized version offers a complementary perspective by measuring the gain relative to the maximal possible improvement. This normalized quantity, denoted by \(\Delta_n^{\text{eff}}\), rescales \(\Delta_n^{\text{mem}}\) by \(1 - U_n\) (when \(U_n < 1\)), yielding a dimensionless measure that takes values in the unit interval. Its extremal values will be characterized in the following subsections, and its relation to classical information-theoretic quantities will be discussed in Subsection~\ref{subsec:info_relation}.

\begin{definition}[Normalized Memory Prediction Excess]\label{def:eff}
For a fixed time index $n \ge 1$, the \textbf{Normalized Memory Prediction Excess}, denoted by $\Delta_n^{\text{eff}}$, is defined as follows:
\begin{equation}
\Delta_n^{\text{eff}} := 
\begin{cases}
\dfrac{\Delta_n^{\text{mem}}}{1 - U_n}, & \text{if } U_n < 1,\\[8pt]
0, & \text{if } U_n = 1.
\end{cases}
\label{eq:eff_def}
\end{equation}
Equivalently, for $U_n < 1$,
\[
\Delta_n^{\text{eff}} = \frac{\mathbb{E}[M_n] - U_n}{1 - U_n}.
\]
The case $U_n = 1$ is treated separately because the denominator $1-U_n$ vanishes; Corollary~\ref{cor:static_degenerate} shows that then $\Delta_n^{\text{mem}} = 0$, so the assignment $\Delta_n^{\text{eff}} = 0$ is natural and consistent with the interpretation that when the static prediction is already perfect, no historical information can improve it. Remark~\ref{rem:eff_continuity} below provides further justification for this convention.
\end{definition}

Having defined the Normalized Memory Prediction Excess \(\Delta_n^{\text{eff}}\), we now turn to its basic properties. We first identify the scenarios in which this normalized quantity attains its extremal values, namely zero and one, and then show that \(\Delta_n^{\text{eff}}\) lies in the unit interval. We also discuss its interpretation as a measure of predictive efficiency and its relation to the original Memory Prediction Excess \(\Delta_n^{\text{mem}}\).

\begin{proposition}[Sufficient conditions for $\Delta_n^{\text{eff}} = 0$]\label{prop:eff_zero_criteria}
The normalized Memory Prediction Excess vanishes, i.e., $\Delta_n^{\text{eff}} = 0$, if either of the following conditions holds:
\begin{enumerate}
    \item $U_n = 1$ (static degeneracy);
    \item $X_{n+1}$ is independent of $X_1^n$ (denoted $X_{n+1} \perp X_1^n$).
\end{enumerate}
These two scenarios are precisely the natural degenerate cases in which the historical information does not improve upon the static prediction.
\end{proposition}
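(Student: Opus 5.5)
The proof is a case analysis on the two hypotheses, and each case reduces to a result already established in Section~\ref{sec:properties}. The overlap of the cases is handled by the first branch of Definition~\ref{def:eff}. That is, we first dispose of $U_n = 1$ regardless of independence, and only then treat independence under the standing assumption $U_n < 1$.

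\emph{Case 1: $U_n = 1$.} Here Definition~\ref{def:eff} assigns $\Delta_n^{\text{eff}} = 0$ directly by its second branch, so nothing is to be proved. For consistency, I would also note that Corollary~\ref{cor:static_degenerate} gives $\Delta_n^{\text{mem}} = 0$ in this situation. The convention is therefore compatible with the unnormalized quantity, but this is not needed for the conclusion.

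\emph{Case 2: $X_{n+1} \perp X_1^n$.} If in addition $U_n = 1$, we are back in Case 1. Otherwise $U_n < 1$, so the denominator $1 - U_n$ is strictly positive and the first branch of \eqref{eq:eff_def} applies. Corollary~\ref{cor:independence} yields $\Delta_n^{\text{mem}} = 0$, and hence
\[
\Delta_n^{\text{eff}} = \frac{\Delta_n^{\text{mem}}}{1 - U_n} = \frac{0}{1 - U_n} = 0.
\]

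There is no genuine obstacle. The only point requiring care is ensuring that the division is legitimate, namely that $1 - U_n > 0$ whenever the first branch is invoked. The case ordering above guarantees this. No new probabilistic estimates are needed, since both degenerate computations are already carried out in Corollaries~\ref{cor:independence} and~\ref{cor:static_degenerate}.
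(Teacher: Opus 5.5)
Your proposal is correct and follows the paper's proof essentially step for step. Condition (i) is settled directly by the second branch of Definition~\ref{def:eff}, with Corollary~\ref{cor:static_degenerate} noted only for consistency. Condition (ii) uses Corollary~\ref{cor:independence} to get $\Delta_n^{\text{mem}}=0$ and then splits into the subcases $U_n=1$, which reduces to (i), and $U_n<1$, where dividing by $1-U_n>0$ yields $\Delta_n^{\text{eff}}=0$.
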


\begin{proof}
We verify the two sufficient conditions separately. For each condition, we first recall the corresponding result already established for $\Delta_n^{\text{mem}}$, and then deduce the consequence for $\Delta_n^{\text{eff}}$.

\paragraph{Condition (i): $U_n = 1$ (static degeneracy).} 
Corollary~\ref{cor:static_degenerate} shows that when $U_n = 1$, we have $\Delta_n^{\text{mem}} = 0$. Moreover, by Definition~\ref{def:eff}, the case $U_n = 1$ is treated separately in the piecewise definition, yielding directly $\Delta_n^{\text{eff}} = 0$ (the second branch of \eqref{eq:eff_def}). Hence the conclusion holds in this case.

\paragraph{Condition (ii): $X_{n+1}$ is independent of $X_1^n$.} 
We proceed in two steps.

\textit{Step 1. Recalling the consequence for $\Delta_n^{\text{mem}}$.} 
Corollary~\ref{cor:independence} establishes that under the independence assumption, $\Delta_n^{\text{mem}} = 0$. For the reader's convenience, we recall the key idea of that proof: for every positive-probability history $h \in \mathcal{A}^n$, independence gives
\[
\mathbb{P}(X_{n+1}=a \mid X_1^n = h) = \mathbb{P}(X_{n+1}=a) \qquad \forall a \in \mathcal{A},
\]
which implies $M(h) = U_n$ and consequently $\mathbb{E}[M_n] = U_n$. Histories with zero probability contribute nothing to the expectation. Hence $\Delta_n^{\text{mem}} = \mathbb{E}[M_n] - U_n = 0$.

\textit{Step 2. From $\Delta_n^{\text{mem}} = 0$ to $\Delta_n^{\text{eff}} = 0$.} 
Now we compute $\Delta_n^{\text{eff}}$ using Definition~\ref{def:eff}:
\begin{itemize}
    \item If $U_n = 1$, we are already covered by Condition~(i) above, and $\Delta_n^{\text{eff}} = 0$ by direct definition.
    \item If $U_n < 1$, then $1 - U_n > 0$. Substituting $\Delta_n^{\text{mem}} = 0$ into the first branch of \eqref{eq:eff_def} yields
    \[
    \Delta_n^{\text{eff}} = \frac{\Delta_n^{\text{mem}}}{1 - U_n} = \frac{0}{1 - U_n} = 0.
    \]
\end{itemize}

In either subcase, we conclude $\Delta_n^{\text{eff}} = 0$.

Thus both conditions (i) and (ii) are sufficient for $\Delta_n^{\text{eff}} = 0$, completing the proof.
\end{proof}

\begin{proposition}[Characterization of $\Delta_n^{\text{eff}} = 1$]\label{prop:eff_one_criteria}
Assume $U_n < 1$. Then the following statements are equivalent:
\begin{enumerate}
    \item $\Delta_n^{\text{eff}} = 1$;
    \item $\Delta_n^{\text{mem}} = 1 - U_n$;
    \item There exists a function $f: \mathcal{A}^n \to \mathcal{A}$ such that $X_{n+1} = f(X_1^n)$ almost surely.
\end{enumerate}
\end{proposition}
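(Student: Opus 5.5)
The plan is to prove the chain of equivalences by showing (i)$\Leftrightarrow$(ii) directly from Definition~\ref{def:eff}, and then citing Theorem~\ref{thm:sharpness} for (ii)$\Leftrightarrow$(iii). Since $U_n<1$ is assumed throughout, only the first branch of \eqref{eq:eff_def} applies, so $\Delta_n^{\text{eff}} = \Delta_n^{\text{mem}}/(1-U_n)$ with a strictly positive denominator. No separate treatment of the degenerate branch is needed. This is what makes the assumption $U_n<1$ essential: if $U_n=1$, then $\Delta_n^{\text{eff}}=0$ by convention, while (iii) may still hold. For example, a constant $X_{n+1}$ is trivially a function of $X_1^n$.

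For (i)$\Rightarrow$(ii), I will start from $\Delta_n^{\text{eff}}=1$, which reads $\Delta_n^{\text{mem}}/(1-U_n)=1$. Multiplying by the nonzero quantity $1-U_n$ gives $\Delta_n^{\text{mem}}=1-U_n$. For (ii)$\Rightarrow$(i), I will divide the identity $\Delta_n^{\text{mem}}=1-U_n$ by $1-U_n>0$. Both directions are purely algebraic and hinge only on $1-U_n\neq 0$.

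For (ii)$\Leftrightarrow$(iii), I will invoke Theorem~\ref{thm:sharpness}, which establishes exactly this equivalence for every $n\ge 1$ without any restriction on $U_n$. It therefore applies verbatim under the present hypothesis. Chaining the two equivalences yields (i)$\Leftrightarrow$(ii)$\Leftrightarrow$(iii).

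There is no genuine obstacle here. The only point requiring care is to state explicitly where $U_n<1$ is used: it selects the first branch of the definition and justifies dividing by $1-U_n$. I would add a brief remark recording that the equivalence with (iii) fails without this hypothesis, because of the convention $\Delta_n^{\text{eff}}=0$ when $U_n=1$.
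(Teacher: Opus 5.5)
Your proposal is correct and follows essentially the same route as the paper: (i)$\Leftrightarrow$(ii) by multiplying or dividing by $1-U_n>0$ in the first branch of the definition, and (ii)$\Leftrightarrow$(iii) by invoking Theorem~\ref{thm:sharpness}. Your side remark is also accurate: under the paper's standing positivity assumption, $U_n=1$ forces $|\mathcal{A}|=1$, and then (ii) and (iii) hold while (i) fails, which is why the hypothesis $U_n<1$ is needed.
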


\begin{proof}
The assumption $U_n < 1$ guarantees $1 - U_n > 0$, so division by $1 - U_n$ is legitimate and the denominator does not vanish. We establish the two equivalences separately.

\paragraph{Equivalence between (i) and (ii).} 
From Definition~\ref{def:eff}, for $U_n < 1$ we have
\[
\Delta_n^{\text{eff}} = \frac{\Delta_n^{\text{mem}}}{1 - U_n}.
\]
Since $1 - U_n > 0$, multiplying both sides by this positive quantity preserves the equivalence:
\[
\Delta_n^{\text{eff}} = 1 \;\Longleftrightarrow\; \Delta_n^{\text{mem}} = 1 - U_n.
\]

\paragraph{Equivalence between (ii) and (iii).} 
Theorem~\ref{thm:sharpness} (see Section~\ref{sec:properties}) establishes precisely that $\Delta_n^{\text{mem}} = 1 - U_n$ holds if and only if there exists a deterministic function $f: \mathcal{A}^n \to \mathcal{A}$ such that $X_{n+1} = f(X_1^n)$ almost surely. This result was proved in full detail in Section~\ref{subsec:upper_bound}, where the equivalence was shown by analyzing the condition $\mathbb{E}[M_n] = 1$ and its consequences for the conditional probability distributions.

Combining the two equivalences yields the chain
\[
\Delta_n^{\text{eff}} = 1 \;\Longleftrightarrow\; \Delta_n^{\text{mem}} = 1 - U_n \;\Longleftrightarrow\; X_{n+1} = f(X_1^n) \text{ a.s.},
\]
which completes the proof.
\end{proof}

\begin{remark}
Proposition~\ref{prop:eff_one_criteria} shows that $\Delta_n^{\text{eff}} = 1$ occurs precisely when $X_{n+1}$ is almost surely a function of $X_1^n$. In this case the conditional prediction accuracy satisfies $M_n = 1$ almost surely, while the static accuracy $U_n$ satisfies $0 \le U_n < 1$. Hence the historical information realizes the maximal possible proportion of the achievable gain within its permissible range.
\end{remark}

From Definition~\ref{def:eff} together with the bounds established in Proposition~\ref{prop:upper_bound} and the characterizations given in Propositions~\ref{prop:eff_zero_criteria} and~\ref{prop:eff_one_criteria}, we obtain the following range for the normalized quantity:
\begin{equation}
0 \le \Delta_n^{\text{eff}} \le 1.
\label{eq:eff_range}
\end{equation}

Thus $\Delta_n^{\text{eff}}$ is a dimensionless number confined to the unit interval. The extremal values are related to the propositions above as follows:
\begin{itemize}
    \item $\Delta_n^{\text{eff}} = 0$ occurs, in particular, in the two scenarios identified in Proposition~\ref{prop:eff_zero_criteria}: independence of $X_{n+1}$ from $X_1^n$ (Corollary~\ref{cor:independence}) or static degeneracy $U_n = 1$ (Corollary~\ref{cor:static_degenerate}).
    \item $\Delta_n^{\text{eff}} = 1$ occurs exactly under the deterministic condition of Proposition~\ref{prop:eff_one_criteria}, namely $X_{n+1} = f(X_1^n)$ almost surely for some function $f: \mathcal{A}^n \to \mathcal{A}$ (see Theorem~\ref{thm:sharpness}).
\end{itemize}

Intermediate values $0 < \Delta_n^{\text{eff}} < 1$ reflect situations where the historical information yields a partial improvement, and $\Delta_n^{\text{eff}}$ measures the proportion of the theoretically maximal gain that is realized. Because $\Delta_n^{\text{eff}}$ eliminates the influence of $U_n$, it enables comparisons of the predictive efficiency of memory across different time indices $n$ or across different stochastic processes. For this reason we may also refer to $\Delta_n^{\text{eff}}$ as the \textbf{predictive efficiency} of the memory $X_1^n$ with respect to $X_{n+1}$. (The superscript ``eff'' in $\Delta_n^{\text{eff}}$ stands for ``efficiency'', consistent with this interpretation.)

\begin{remark}[Relation to the original MPE]\label{rem:eff_relation}
From Definition~\ref{def:eff} we have the simple relation
\[
\Delta_n^{\text{mem}} = \Delta_n^{\text{eff}} \cdot (1 - U_n),
\]
which holds for all $n$ (with the convention that $0 \cdot 0 = 0$ when $U_n = 1$). Thus $\Delta_n^{\text{eff}}$ represents the proportion of the maximal possible excess $1-U_n$ that is actually achieved.
\end{remark}

\begin{remark}[On the convention for $U_n = 1$]\label{rem:eff_continuity}
When $U_n = 1$, the denominator $1-U_n$ vanishes. Corollary~\ref{cor:static_degenerate} gives $\Delta_n^{\text{mem}} = 0$, so the expression $\Delta_n^{\text{mem}}/(1-U_n)$ is of the ambiguous form $0/0$ and is not defined by the ordinary rules of arithmetic. In this boundary case we adopt the convention $\Delta_n^{\text{eff}} = 0$. This choice is justified as follows.

\paragraph{Conceptual justification.}
If $U_n = 1$, the static optimal predictor achieves perfect accuracy with probability one. No historical information can improve upon a prediction that is already perfect; therefore the normalized excess, which measures the proportion of the maximal possible gain that is realized, ought to be zero.

\paragraph{Consistency justification.}
For $U_n < 1$, we have the relation $\Delta_n^{\text{mem}} = \Delta_n^{\text{eff}} \cdot (1-U_n)$. When $U_n = 1$, the right-hand side becomes $\Delta_n^{\text{eff}} \cdot 0$, which equals $0$ independently of the value assigned to $\Delta_n^{\text{eff}}$. Hence this relation alone does not constrain the choice of $\Delta_n^{\text{eff}}$.

The quantity $\Delta_n^{\text{eff}}$ is intended to measure the ratio of the excess $\Delta_n^{\text{mem}}$ to the maximal possible excess $1-U_n$. When $U_n = 1$, the maximal possible excess $1-U_n$ is zero, and Corollary~\ref{cor:static_degenerate} gives $\Delta_n^{\text{mem}} = 0$. In this situation, a natural value for this ratio is $0$. We therefore adopt the convention $\Delta_n^{\text{eff}} = 0$.
\end{remark}

\subsection{Relation to Information-Theoretic Quantities}
\label{subsec:info_relation}

The MPE (Definition~\ref{def:MPE}) and its normalized version (Definition~\ref{def:eff}) are defined in terms of conditional probabilities and prediction accuracies. This formulation offers a perspective that sits alongside information-theoretic measures such as entropy, mutual information, and transfer entropy. The latter focus on information content, dependence, or information flow. The MPE centers on the improvement in the probability of a correct prediction. These perspectives are complementary: each captures a distinct facet of the relationship between past and future. In the following we briefly discuss several well-established information-theoretic concepts and their connections to the quantities introduced in this work.

\paragraph{Shannon entropy and mutual information.}
The foundational work of Shannon \cite{shannon1948mathematic} introduced entropy $H(X) = -\sum_x \mathbb{P}(X=x)\log\mathbb{P}(X=x)$ as a measure of uncertainty and mutual information
\[
I(X_{n+1}; X_1^n) = H(X_{n+1}) - H(X_{n+1} \mid X_1^n)
\]
as a measure of the reduction in uncertainty about $X_{n+1}$ provided by the history $X_1^n$. Mutual information is symmetric and non-negative, and it vanishes if and only if $X_{n+1}$ and $X_1^n$ are independent. Mutual information quantifies, in an average information-theoretic sense, how much the past tells us about the future. The probability of correctly predicting the actual outcome $X_{n+1}$ under an optimal prediction rule — which is the focus of our quantity $\Delta_n^{\text{mem}}$ — addresses a different aspect from that quantified by mutual information. The two perspectives are complementary: mutual information captures the average reduction in logarithmic uncertainty, while $\Delta_n^{\text{mem}}$ captures the average improvement in the optimal probability of a correct guess.

\paragraph{Transfer entropy.}
The work \cite{schreiber2000measuring} introduced transfer entropy as an information-theoretic measure that quantifies the statistical coherence between systems evolving in time. As noted in that work, the standard time-delayed mutual information does not distinguish information that is actually exchanged from shared information due to common history or input signals. By appropriate conditioning of transition probabilities, transfer entropy excludes these influences. The resulting measure is able to distinguish effectively driving and responding elements and to detect asymmetry in the interaction of subsystems.

Our Memory Prediction Excess takes a different perspective: it conditions on the entire history \(X_1^n\) and measures the improvement in optimal prediction accuracy. The two measures address different questions: transfer entropy focuses on asymmetric information flow and the direction of influence between subsystems, and \(\Delta_n^{\text{mem}}\) asks by how much the probability of the best possible prediction increases when we have access to the history \(X_1^n\).

\paragraph{Measures of dependence.}
The work \cite{renyi1959measures} formulated a set of postulates for a measure of dependence between two random variables and studied the maximal correlation as a quantity satisfying these postulates. This framework focuses on the strength of statistical association between two variables. Our \(\Delta_n^{\text{mem}}\) is inherently directional, as it concerns the predictive relationship from the past to the future, and is rooted in prediction accuracy rather than correlation or mutual dependence. In this sense, \(\Delta_n^{\text{mem}}\) offers a complementary perspective to these dependence measures: it quantifies the benefit of knowing the past for the purpose of forecasting in terms of optimal prediction accuracy.

\paragraph{Information bottleneck and representation learning.}
The work \cite{shwartz2017opening} followed up on the idea of analyzing deep neural networks through the information plane and demonstrated the effectiveness of this visualization. Their main findings include that most training epochs are spent on compressing the input into efficient representations, and that converged layers lie on or near the information bottleneck theoretical bound. The compression phase begins once the training error becomes small, after which stochastic gradient descent transitions into a stochastic relaxation constrained by the training error. The authors also observe that the advantage of additional hidden layers is primarily computational, as they reduce the relaxation time. Our normalized Memory Prediction Excess \(\Delta_n^{\text{eff}}\) captures a notion of efficiency — namely, the proportion of maximal possible predictive gain that historical information provides.

\paragraph{Conditional and marginal mutual information in time series.}
The work \cite{chavez2016conditional} studied the marginal mutual information \(I(X_{n+1}; X_1^n)\) and the conditional mutual information, which quantifies newly available information for prediction. In particular, predictability in Gaussian processes was characterized in terms of sensitivity to random shocks, and series expansions for the mutual information were derived in terms of autocorrelation or moving average parameters. Our setting concerns general discrete-time finite-state processes and does not assume Gaussianity, stationarity, or parametric forms. Our focus is on optimal prediction accuracy, specifically the probability of a correct guess, while mutual information quantifies dependence in logarithmic units. The two approaches are mutually compatible: mutual information provides an average logarithmic measure of dependence, and \(\Delta_n^{\text{mem}}\) provides a direct probabilistic measure of the best achievable prediction performance.

The brief survey above recalls several well-known measures for quantifying dependence and prediction in stochastic processes. Entropy, mutual information, transfer entropy, measures of dependence, the information bottleneck, and conditional mutual information all capture certain aspects of the relationship between past and future, operating at the level of information content, correlation, or logarithmic uncertainty. Our quantities \(\Delta_n^{\text{mem}}\) and \(\Delta_n^{\text{eff}}\) operate at the level of prediction accuracy, specifically the probability of correctly guessing the next outcome when one selects the most likely state given the available information. We view these tools as complementary, each addressing a different facet of the predictive relationship. A comprehensive understanding of prediction in stochastic processes benefits from the availability of multiple conceptually distinct rigorous quantitative tools.

\subsection{A Lower Bound for the Memory Prediction Excess}
\label{subsec:MPE_lower_bound}

Having obtained an upper bound for the MPE in Subsection~\ref{subsec:upper_bound}, we now consider the corresponding question of a lower bound. This bound is obtained by comparing predictions for the same future state using histories of different lengths, and it reflects the intuitive fact that additional past information should not reduce the expected optimal prediction accuracy. Specifically, it shows that the MPE is at least as large as the excess already achievable with a shorter history. This result also serves as a bridge to the finite-history setting of Section~\ref{sec:finite_history}, where the behavior of the MPE as a function of the memory length is examined in detail.

In Proposition~\ref{prop:upper_bound} we established an upper bound for the Memory Prediction Excess:
\[
\Delta_n^{\text{mem}} \le 1 - U_n, \qquad n \ge 1.
\]
This inequality shows that $\Delta_n^{\text{mem}}$ cannot exceed the gap between perfect prediction and the static optimal accuracy. A natural complementary question is whether one can also obtain a lower bound for $\Delta_n^{\text{mem}}$. In this subsection we derive such a lower bound by comparing predictions for the same future variable using histories of different lengths. Specifically, we show that $\Delta_{n+1}^{\text{mem}}$ is at least as large as the excess (relative to the static accuracy) attainable with the shorter history $X_1^n$ when predicting $X_{n+2}$.

We first recall the definition of the Memory Prediction Excess (Definition~\ref{def:MPE}):
\[
\Delta_n^{\text{mem}} = \mathbb{E}[M_n] - U_n, \qquad n \ge 1,
\]
where $M_n = \max_{a \in \mathcal{A}} \mathbb{P}(X_{n+1} = a \mid X_1^n)$ (see \eqref{eq:M_n_def}) and $U_n = \max_{a \in \mathcal{A}} \mathbb{P}(X_{n+1} = a)$ (see \eqref{eq:U_n_def}).

\begin{proposition}[Lower bound for $\Delta_{n+1}^{\text{mem}}$]\label{prop:MPE_lower_bound}
For every $n \ge 1$,
\begin{equation}
\Delta_{n+1}^{\text{mem}} \;\ge\; \mathbb{E}\Bigl[ \max_{a \in \mathcal{A}} \mathbb{P}(X_{n+2}=a \mid X_1^n) \Bigr] \;-\; U_{n+1},
\label{eq:MPE_lower_bound}
\end{equation}
where $U_{n+1} = \max_{a \in \mathcal{A}} \mathbb{P}(X_{n+2}=a)$.
\end{proposition}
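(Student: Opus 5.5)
Since $U_{n+1}$ appears on both sides of \eqref{eq:MPE_lower_bound}, after substituting $\Delta_{n+1}^{\text{mem}} = \mathbb{E}[M_{n+1}] - U_{n+1}$ from Definition~\ref{def:MPE} the claim reduces to the monotonicity statement
\[
\mathbb{E}\Bigl[\max_{a\in\mathcal{A}} \mathbb{P}(X_{n+2}=a \mid X_1^{n+1})\Bigr] \;\ge\; \mathbb{E}\Bigl[\max_{a\in\mathcal{A}} \mathbb{P}(X_{n+2}=a \mid X_1^{n})\Bigr].
\]
My plan is to prove this by repeating the argument of Theorem~\ref{thm:nonneg} conditionally: there, a single fixed reference state $a_n^*$ was used; here I will use a reference state that is allowed to depend on the shorter history $X_1^n$.

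\textbf{Step 1 (measurable selection).} For each $g\in\mathcal{A}^n$ with $\mathbb{P}(X_1^n=g)>0$, choose $a^*(g)\in\arg\max_{a}\mathbb{P}(X_{n+2}=a\mid X_1^n=g)$. This is possible because $\mathcal{A}$ is finite. The map $g\mapsto a^*(g)$ is a function on a finite set, so $a^*(X_1^n)$ is $\mathcal{G}_n$-measurable and no measurability issue arises.

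\textbf{Step 2 (pointwise bound).} For every history $h=(g,x)\in\mathcal{A}^{n+1}$ with positive probability, the maximum defining $M_{n+1}$ dominates any single entry. Hence
\[
\max_a \mathbb{P}(X_{n+2}=a\mid X_1^{n+1}=(g,x)) \;\ge\; \mathbb{P}(X_{n+2}=a^*(g)\mid X_1^{n+1}=(g,x)).
\]

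\textbf{Step 3 (averaging over the last coordinate).} I will expand $\mathbb{E}[M_{n+1}]$ as a sum over $(g,x)$, exactly as in \eqref{eq:E_Mn_expansion}, and apply Step 2. For each fixed $g$, I then sum over $x\in\mathcal{A}$ using the multiplication rule and the disjoint decomposition $\{X_1^n=g\}=\bigsqcup_x\{X_1^{n+1}=(g,x)\}$; this is the same law-of-total-probability computation as Step 4 of Theorem~\ref{thm:nonneg}. This yields
\[
\sum_x \mathbb{P}(X_1^{n+1}=(g,x))\,\mathbb{P}(X_{n+2}=a^*(g)\mid X_1^{n+1}=(g,x)) = \mathbb{P}(X_1^n=g)\,\mathbb{P}(X_{n+2}=a^*(g)\mid X_1^n=g).
\]
By the choice of $a^*(g)$, the right-hand side equals $\mathbb{P}(X_1^n=g)\max_a\mathbb{P}(X_{n+2}=a\mid X_1^n=g)$. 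Summing over $g$ then gives the displayed monotonicity inequality.

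\textbf{Step 4 (conclusion).} Subtracting $U_{n+1}$ from both sides of the monotonicity inequality gives \eqref{eq:MPE_lower_bound}.

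The only delicate point is bookkeeping of zero-probability histories in Step 3: terms with $\mathbb{P}(X_1^{n+1}=(g,x))=0$ contribute nothing, and the same holds for $g$ with $\mathbb{P}(X_1^n=g)=0$. I will handle this as in the proofs of Corollaries~\ref{cor:independence} and~\ref{cor:static_degenerate}, by restricting every sum to positive-probability histories. Otherwise the argument is the tower property, $\mathbb{E}[\mathbb{P}(A\mid \mathcal{G}_{n+1})\mid\mathcal{G}_n]=\mathbb{P}(A\mid\mathcal{G}_n)$, written out in elementary form.
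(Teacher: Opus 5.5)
Your proof is correct and is essentially the paper's argument. The paper shows $\max_a \mathbb{P}(X_{n+2}=a\mid X_1^n)\le \mathbb{E}[M_{n+1}\mid X_1^n]$ almost surely, using the tower property and monotonicity of conditional expectation, and then takes expectations; you carry out the same ``max of averages $\le$ average of max'' step atom by atom, using a history-dependent maximizer $a^*(g)$ rather than bounding every $a$ and then maximizing, and your handling of null histories suffices because any positive-probability $(g,x)$ forces $\mathbb{P}(X_1^n=g)>0$.
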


\begin{proof}
We prove the inequality by establishing a monotonicity property of conditional optimal prediction accuracies and then relating it to the definition of $\Delta_{n+1}^{\text{mem}}$. The proof proceeds in four steps.

\paragraph{Step 1: Relating conditional probabilities given different history lengths.}

Fix an arbitrary state $a \in \mathcal{A}$. Define the indicator function of the event $\{X_{n+2}=a\}$:
\[
\mathbf{1}_{\{X_{n+2}=a\}}(\omega) := \begin{cases}
1, & \text{if } X_{n+2}(\omega) = a,\\
0, & \text{if } X_{n+2}(\omega) \neq a.
\end{cases}
\]

Recall that for any $\sigma$-algebra $\mathcal{G}$, the conditional probability $\mathbb{P}(X_{n+2}=a \mid \mathcal{G})$ is defined as the conditional expectation of the indicator function:
\[
\mathbb{P}(X_{n+2}=a \mid \mathcal{G}) = \mathbb{E}\bigl[ \mathbf{1}_{\{X_{n+2}=a\}} \mid \mathcal{G} \bigr].
\]
All equalities and inequalities involving conditional expectations are understood to hold $\mathbb{P}$-almost surely (i.e., up to a null set). In what follows, we shall often omit the explicit ``almost surely'' when the context makes it clear.

In probability theory, conditioning on a random vector $Y$ is interpreted as conditioning on the $\sigma$-algebra $\sigma(Y)$ generated by $Y$; that is,
\[
\mathbb{P}(\cdot \mid Y) = \mathbb{P}(\cdot \mid \sigma(Y)), \qquad
\mathbb{E}[\cdot \mid Y] = \mathbb{E}[\cdot \mid \sigma(Y)].
\]
Consequently, we have
\begin{align}
\mathbb{P}(X_{n+2}=a \mid X_1^{n+1}) &= \mathbb{E}\bigl[ \mathbf{1}_{\{X_{n+2}=a\}} \mid X_1^{n+1} \bigr], \label{eq:cond_n+1} \\
\mathbb{P}(X_{n+2}=a \mid X_1^n) &= \mathbb{E}\bigl[ \mathbf{1}_{\{X_{n+2}=a\}} \mid X_1^n \bigr]. \label{eq:cond_n}
\end{align}

Because $X_1^n$ consists of the first $n$ components of $X_1^{n+1}$, the $\sigma$-algebra generated by $X_1^n$ is contained in that generated by $X_1^{n+1}$:
\[
\sigma(X_1^n) \subseteq \sigma(X_1^{n+1}).
\]

The tower property (also known as the law of total expectation) is a fundamental property of conditional expectations: if $\mathcal{H} \subseteq \mathcal{G}$ are $\sigma$-algebras, then for any integrable random variable $Z$,
\[
\mathbb{E}\bigl[ \mathbb{E}[Z \mid \mathcal{G}] \mid \mathcal{H} \bigr] = \mathbb{E}[Z \mid \mathcal{H}].
\]

Applying this property with $Z = \mathbf{1}_{\{X_{n+2}=a\}}$, $\mathcal{G} = \sigma(X_1^{n+1})$, and $\mathcal{H} = \sigma(X_1^n)$, we obtain
\[
\mathbb{E}\Bigl[ \mathbb{E}\bigl[ \mathbf{1}_{\{X_{n+2}=a\}} \mid \sigma(X_1^{n+1}) \bigr] \;\Big|\; \sigma(X_1^n) \Bigr] = \mathbb{E}\bigl[ \mathbf{1}_{\{X_{n+2}=a\}} \mid \sigma(X_1^n) \bigr].
\]

Substituting \eqref{eq:cond_n+1} and \eqref{eq:cond_n} into the above equality yields the fundamental relation
\begin{equation}
\mathbb{E}\Bigl[ \mathbb{P}(X_{n+2}=a \mid X_1^{n+1}) \;\Big|\; X_1^n \Bigr] = \mathbb{P}(X_{n+2}=a \mid X_1^n). \label{eq:tower_relation}
\end{equation}
This relation expresses that the conditional probability given the coarser information $X_1^n$ is precisely the conditional expectation (given the same coarser information) of the conditional probability given the finer information $X_1^{n+1}$.

\paragraph{Step 2: Introducing notation and applying monotonicity of conditional expectation.}

Let us fix a version of the conditional probabilities such that all relevant properties hold pointwise on a set $\Omega_0 \subseteq \Omega$ with $\mathbb{P}(\Omega_0) = 1$. For $\omega \in \Omega_0$, define
\[
q_a^{(n+1)}(\omega) := \mathbb{P}(X_{n+2}=a \mid X_1^{n+1})(\omega), \qquad a \in \mathcal{A}.
\]
For $\omega \notin \Omega_0$, we may define $q_a^{(n+1)}(\omega)$ arbitrarily (e.g., as $1/|\mathcal{A}|$), as such choices do not affect any expectation or almost-sure statement.

On $\Omega_0$, equation \eqref{eq:tower_relation} holds pointwise, and for each $a \in \mathcal{A}$,
\[
\mathbb{P}(X_{n+2}=a \mid X_1^n)(\omega) = \mathbb{E}\bigl[ q_a^{(n+1)} \mid X_1^n \bigr](\omega).
\]

Now consider the optimal prediction accuracy based on the shorter history $X_1^n$. From the relation above, we have
\[
\max_{a \in \mathcal{A}} \mathbb{P}(X_{n+2}=a \mid X_1^n)(\omega) = \max_{a \in \mathcal{A}} \mathbb{E}\bigl[ q_a^{(n+1)} \mid X_1^n \bigr](\omega), \qquad \omega \in \Omega_0.
\]

For any fixed $a \in \mathcal{A}$, the pointwise inequality
\[
q_a^{(n+1)}(\omega) \le \max_{b \in \mathcal{A}} q_b^{(n+1)}(\omega)
\]
holds for every $\omega \in \Omega_0$, and hence almost surely.

Conditional expectation is monotone: if $X \le Y$ almost surely, then for any $\sigma$-algebra $\mathcal{G}$,
\[
\mathbb{E}[X \mid \mathcal{G}] \le \mathbb{E}[Y \mid \mathcal{G}] \quad \text{almost surely}.
\]

Applying monotonicity with $X = q_a^{(n+1)}$, $Y = \max_{b \in \mathcal{A}} q_b^{(n+1)}$, and $\mathcal{G} = \sigma(X_1^n)$, we obtain for each $a \in \mathcal{A}$,
\[
\mathbb{E}\bigl[ q_a^{(n+1)} \mid X_1^n \bigr] \le \mathbb{E}\bigl[ \max_{b \in \mathcal{A}} q_b^{(n+1)} \mid X_1^n \bigr] \quad \text{almost surely}.
\]

Since $\mathcal{A}$ is finite, we may take the maximum over $a \in \mathcal{A}$ on the left-hand side. For a finite collection of inequalities $x_a \le C$ (where $C$ is the right-hand side, which does not depend on $a$), the maximum of the left-hand sides also satisfies $\max_a x_a \le C$. More precisely, because the inequality holds for each $a$ individually on a set of full measure (the intersection of finitely many almost-sure events is still almost sure), we have
\begin{equation}
\max_{a \in \mathcal{A}} \mathbb{E}\bigl[ q_a^{(n+1)} \mid X_1^n \bigr] \le \mathbb{E}\bigl[ \max_{b \in \mathcal{A}} q_b^{(n+1)} \mid X_1^n \bigr] \quad \text{almost surely}. \label{eq:max_inequality}
\end{equation}

\paragraph{Step 3: Recognizing the optimal prediction accuracy with the longer history.}

Recall from Definition~\ref{def:MPE} (see also equations \eqref{eq:M_h} and \eqref{eq:M_n_def}) that for a given history length $n$, the optimal prediction accuracy for $X_{n+1}$ based on $X_1^n$ is the random variable
\[
M_n = \max_{a \in \mathcal{A}} \mathbb{P}(X_{n+1}=a \mid X_1^n).
\]
Following the same pattern, for the longer history $X_1^{n+1}$, we define the optimal prediction accuracy for $X_{n+2}$ as
\[
M_{n+1} := \max_{a \in \mathcal{A}} \mathbb{P}(X_{n+2}=a \mid X_1^{n+1}).
\]
This is a natural extension of the notation: $M_{n+1}$ is the random variable representing the best possible accuracy when predicting $X_{n+2}$ using the history $X_1^{n+1}$.

By construction, $\max_{b \in \mathcal{A}} q_b^{(n+1)} = M_{n+1}$ holds pointwise, since $q_b^{(n+1)} = \mathbb{P}(X_{n+2}=b \mid X_1^{n+1})$ by definition.

Now recall the inequality established at the end of Step~2: $\max_{a \in \mathcal{A}} \mathbb{E}[ q_a^{(n+1)} \mid X_1^n ] \le \mathbb{E}[ \max_{b \in \mathcal{A}} q_b^{(n+1)} \mid X_1^n ]$ almost surely. Replacing $\max_{b \in \mathcal{A}} q_b^{(n+1)}$ with $M_{n+1}$ gives $\max_{a \in \mathcal{A}} \mathbb{E}[ q_a^{(n+1)} \mid X_1^n ] \le \mathbb{E}[M_{n+1} \mid X_1^n]$ almost surely. From Step~2 we also have the pointwise equality on $\Omega_0$: $\max_{a \in \mathcal{A}} \mathbb{P}(X_{n+2}=a \mid X_1^n)(\omega) = \max_{a \in \mathcal{A}} \mathbb{E}[ q_a^{(n+1)} \mid X_1^n ](\omega)$ for every $\omega \in \Omega_0$.

Combining this equality with the inequality above (and noting that $\mathbb{P}(\Omega_0)=1$), we obtain
\begin{equation}
\max_{a \in \mathcal{A}} \mathbb{P}(X_{n+2}=a \mid X_1^n) \le \mathbb{E}[M_{n+1} \mid X_1^n] \quad \text{almost surely}.
\label{eq:pointwise_bound}
\end{equation}

This inequality has a clear intuitive meaning: for almost every realization of the shorter history $X_1^n$, the optimal prediction accuracy achievable with that shorter history does not exceed the conditional expectation (given that same history) of the optimal prediction accuracy achievable with the longer history.

\paragraph{Step 4: Taking expectations and connecting to the Memory Prediction Excess.}

Take expectations on both sides of \eqref{eq:pointwise_bound}. By the law of total expectation (also known as the tower property), for any random variable $Y$ and any $\sigma$-algebra $\mathcal{G}$, we have
\[
\mathbb{E}\bigl[ \mathbb{E}[Y \mid \mathcal{G}] \bigr] = \mathbb{E}[Y].
\]
(This is the special case of the tower property where the outer conditioning is taken with respect to the trivial $\sigma$-algebra $\{\emptyset, \Omega\}$, which reduces the conditional expectation to an ordinary expectation.)

Applying this identity with $Y = M_{n+1}$ and $\mathcal{G} = \sigma(X_1^n)$ gives
\[
\mathbb{E}\bigl[ \mathbb{E}[M_{n+1} \mid X_1^n] \bigr] = \mathbb{E}[M_{n+1}].
\]

Thus, taking expectations in \eqref{eq:pointwise_bound} yields
\begin{equation}
\mathbb{E}\Bigl[ \max_{a \in \mathcal{A}} \mathbb{P}(X_{n+2}=a \mid X_1^n) \Bigr] \le \mathbb{E}[M_{n+1}].
\label{eq:expectation_bound}
\end{equation}

Now recall the definition of the Memory Prediction Excess (Definition~\ref{def:MPE}) for index $n+1$:
\[
\Delta_{n+1}^{\text{mem}} = \mathbb{E}[M_{n+1}] - U_{n+1},
\]
where $U_{n+1} = \max_{a \in \mathcal{A}} \mathbb{P}(X_{n+2}=a)$ is the static optimal prediction accuracy for $X_{n+2}$.

Since $U_{n+1}$ is a constant, subtracting it from both sides of inequality \eqref{eq:expectation_bound} preserves the inequality:
\[
\mathbb{E}\Bigl[ \max_{a \in \mathcal{A}} \mathbb{P}(X_{n+2}=a \mid X_1^n) \Bigr] - U_{n+1} \le \mathbb{E}[M_{n+1}] - U_{n+1}.
\]

The right-hand side is precisely $\Delta_{n+1}^{\text{mem}}$ by definition. Rewriting the inequality so that $\Delta_{n+1}^{\text{mem}}$ appears on the left-hand side (i.e., exchanging the two sides) yields the desired lower bound:
\[
\Delta_{n+1}^{\text{mem}} \ge \mathbb{E}\Bigl[ \max_{a \in \mathcal{A}} \mathbb{P}(X_{n+2}=a \mid X_1^n) \Bigr] - U_{n+1}.
\]

This inequality is exactly the statement of Proposition~\ref{prop:MPE_lower_bound}. Hence the proof is complete.
\end{proof}

\begin{remark}[Comparison with the upper bound and consistency with the boundary case]
\label{rem:lower_bound_interpretation}

Proposition~\ref{prop:MPE_lower_bound} provides a lower bound for $\Delta_{n+1}^{\text{mem}}$ in terms of the expected optimal prediction accuracy achievable with the shorter history $X_1^n$. Together with the upper bound $\Delta_n^{\text{mem}} \le 1 - U_n$ from Proposition~\ref{prop:upper_bound}, we now have both upper and lower estimates for the Memory Prediction Excess.

The lower bound admits a natural interpretation: it states that the gain from using $n+1$ steps of history cannot be smaller than the excess (relative to the static accuracy) already achievable with only $n$ steps of history when predicting the same future state. This property is consistent with the intuition that more information cannot hurt prediction on average.

\paragraph{Consistency with the boundary case $n = 0$.}

For completeness, we examine the boundary case $n = 0$. Substituting $n = 0$ into \eqref{eq:MPE_lower_bound} would yield
\[
\Delta_1^{\text{mem}} \ge \mathbb{E}\Bigl[ \max_{a \in \mathcal{A}} \mathbb{P}(X_{2}=a \mid X_1^0) \Bigr] - U_1.
\]
By the convention regarding the empty history established in Definition~\ref{def:MPE}, $X_1^0$ represents the empty history, so that conditioning on $X_1^0$ reduces to the unconditional probability:
\[
\mathbb{P}(X_2=a \mid X_1^0) = \mathbb{P}(X_2=a), \qquad a \in \mathcal{A}.
\]
Hence $\max_{a \in \mathcal{A}} \mathbb{P}(X_2=a \mid X_1^0) = \max_{a \in \mathcal{A}} \mathbb{P}(X_2=a) = U_1$, where the last equality follows from the definition $U_n := \max_{a \in \mathcal{A}} \mathbb{P}(X_{n+1}=a)$ (see Section~\ref{sec:definition}).

The right-hand side of the inequality therefore becomes $U_1 - U_1 = 0$, and the inequality reduces to $\Delta_1^{\text{mem}} \ge 0$. This is precisely the non‑negativity result already proved in Theorem~\ref{thm:nonneg}. Thus the $n = 0$ case is consistent with Proposition~\ref{prop:MPE_lower_bound}: the lower bound, when applied with the empty history, reduces to a known result and does not introduce any contradiction.

In summary, Proposition~\ref{prop:MPE_lower_bound} and the earlier upper bound from Proposition~\ref{prop:upper_bound} together provide a pair of estimates that bracket the Memory Prediction Excess from above and below. The lower bound is consistent with the intuition that additional historical information improves, or at least does not impair, the expected optimal prediction accuracy.
\end{remark}

\section{FINITE-HISTORY MEMORY PREDICTION EXCESS}
\label{sec:finite_history}

In the preceding sections, the Memory Prediction Excess $\Delta_n^{\text{mem}}$ was defined using the full history $X_1^n = (X_1, X_2, \dots, X_n)$ as the conditioning information for predicting $X_{n+1}$ (see Definition~\ref{def:MPE}). This formulation captures the maximal predictive gain achievable when all available past observations are exploited. A natural extension of this framework is to investigate the extent to which this gain depends on the length of the retained history. This question is motivated by the observation that, in certain stochastic processes, predictive information is not distributed uniformly across past observations; rather, more recent observations often carry greater weight than those further back. The classical \(m\)-th order Markov chain illustrates this phenomenon: for every \(n \ge m\) and every \(a \in \mathcal{A}\),
\begin{equation}
\mathbb{P}(X_{n+1}=a \mid X_1^n) = \mathbb{P}(X_{n+1}=a \mid X_{n-m+1}^n) \quad \text{almost surely},
\label{eq:markov_property_intro}
\end{equation}
where $X_{n-m+1}^n := (X_{n-m+1}, \dots, X_n)$ denotes the block of the most recent $m$ observations (the general definition for arbitrary memory length $k$ is given in \eqref{eq:recent_history_def} below). Thus the most recent $m$ observations contain all predictive information present in the entire history; observations older than \(m\) steps provide no further information beyond what is already contained in the recent \(m\)-block.

This observation suggests a systematic enrichment of the MPE framework: for each memory length $k$, we may consider the predictive gain obtained when only the most recent $k$ observations are retained. This yields a family of quantities $\{\Delta_{n,k}^{\text{mem}}\}_{0 \le k \le n}$ (see Definition~\ref{def:Delta_k} below) that includes the original MPE $\Delta_n^{\text{mem}}$ as the special case $k = n$. We now formalize these ideas through the notion of the finite-history Memory Prediction Excess.

\subsection{Definition of Finite-History Memory Prediction Excess}
\label{subsec:finite_definition}

The MPE was originally defined using the full history \(X_1^n\) as the conditioning information. We now introduce the finite-history counterpart, where only a limited number of past observations is retained. For a fixed memory length \(k\) (\(0 \le k \le n\)), the finite-history MPE (FH-MPE), denoted \(\Delta_{n,k}^{\text{mem}}\) (see Definition~\ref{def:Delta_k}), measures the average improvement in optimal prediction accuracy obtained by conditioning on the most recent \(k\) observations, relative to the same static baseline \(U_n\) used throughout the paper. This family of quantities interpolates between the static baseline at \(k=0\) and the full-history MPE at \(k=n\). Its behavior as a function of \(k\) will be examined in the following subsections: monotonicity in Subsection~\ref{subsec:monotonicity}, saturation for Markov chains in Subsection~\ref{subsec:saturation}, and the related notion of minimal memory length in Subsection~\ref{subsec:effective_memory}.

We begin by fixing notation for the recent history of length $k$. For integers $n \ge 1$ and $0 \le k \le n$, let
\begin{equation}
X_{n-k+1}^n := (X_{n-k+1}, X_{n-k+2}, \dots, X_n)
\label{eq:recent_history_def}
\end{equation}
denote the block of $k$ consecutive observations immediately preceding time $n+1$. This block represents the last $k$ steps of the process. For the boundary case $k=0$, there are no observations to condition upon. To treat this case rigorously within our measure-theoretic framework, we adopt the following convention.

\textbf{Convention (Conditioning on the trivial $\sigma$-algebra for empty history).} 
When $k=0$, the conditioning information is empty. In the measure-theoretic formulation of conditional expectation, this corresponds to conditioning on the trivial $\sigma$-algebra
\begin{equation}
\mathcal{F}_0 := \{\emptyset, \Omega\}.
\label{eq:trivial_sigma}
\end{equation}
For any integrable random variable $Y$ defined on $(\Omega, \mathcal{F}, \mathbb{P})$, the conditional expectation given $\mathcal{F}_0$ is defined as the unique $\mathcal{F}_0$-measurable random variable satisfying
\[
\int_E \mathbb{E}[Y \mid \mathcal{F}_0] \, d\mathbb{P} = \int_E Y \, d\mathbb{P}, \qquad \forall E \in \mathcal{F}_0.
\]
Since $\mathcal{F}_0$ contains only $\emptyset$ and $\Omega$, this definition yields
\begin{equation}
\mathbb{E}[Y \mid \mathcal{F}_0] = \mathbb{E}[Y] \quad \text{almost surely}.
\label{eq:cond_trivial}
\end{equation}
Invoking the standard measure-theoretic identification of conditional probability as the conditional expectation of an indicator function (as used in Step~1 of the proof of Proposition~\ref{prop:MPE_lower_bound}), we obtain, for any event $A \in \mathcal{F}$,
\begin{equation}
\mathbb{P}(A \mid \mathcal{F}_0) = \mathbb{E}[\mathbf{1}_A \mid \mathcal{F}_0] = \mathbb{E}[\mathbf{1}_A] = \mathbb{P}(A) \quad \text{almost surely}.
\label{eq:prob_trivial}
\end{equation}

The trivial $\sigma$-algebra $\mathcal{F}_0 = \{\emptyset, \Omega\}$ is the coarsest $\sigma$-algebra on $\Omega$; the only measurable events are the impossible event $\emptyset$ and the certain event $\Omega$. Conditioning on $\mathcal{F}_0$ therefore amounts to conditioning on no information at all, which is precisely the intended meaning of the empty history. This convention is standard in probability theory and provides a rigorous foundation for the $k=0$ boundary case.

In the context of our finite-history framework, we interpret the empty history $k=0$ as conditioning on $\mathcal{F}_0$. Thus, for each $a \in \mathcal{A}$, the conditional probability $\mathbb{P}(X_{n+1}=a \mid X_{n+1}^n)$ with $k=0$ (so that the index block $X_{n+1}^n$ is empty) is understood as
\[
\mathbb{P}(X_{n+1}=a \mid \mathcal{F}_0) = \mathbb{P}(X_{n+1}=a) \quad \text{almost surely}.
\]

With this convention in place, we may now define the finite-history generalization of the Memory Prediction Excess.

\begin{definition}[Finite-History Memory Prediction Excess (FH-MPE)]\label{def:Delta_k}
For a fixed time index $n \ge 1$ and an integer $k$ satisfying $0 \le k \le n$, let a concrete history block be denoted by $h = (x_{n-k+1}, \dots, x_n) \in \mathcal{A}^k$. For such a block, define the maximal conditional probability given this specific realization as
\[
M_k(h) := \max_{a \in \mathcal{A}} \mathbb{P}(X_{n+1}=a \mid X_{n-k+1}^n = h).
\]
When the history block is random, this quantity becomes a random variable through its dependence on $X_{n-k+1}^n$. More formally, we set
\begin{equation}
M_{n,k}(\omega) := M_k(X_{n-k+1}^n(\omega)) = \max_{a \in \mathcal{A}} \mathbb{P}(X_{n+1}=a \mid X_{n-k+1}^n)(\omega), \qquad \omega \in \Omega.
\label{eq:M_n_k_def}
\end{equation}
By construction, $M_{n,k}$ is a $\sigma(X_{n-k+1}^n)$-measurable random variable. It represents the highest probability of correctly predicting $X_{n+1}$ that can be achieved after observing the most recent $k$ observations $X_{n-k+1}^n$. The conditional probability is interpreted according to the convention above when $k=0$.

The \textbf{finite-history Memory Prediction Excess} (FH-MPE) of the process $\{X_n\}_{n\ge 1}$ at step $n$ with memory length $k$ is then defined as
\begin{equation}
\Delta_{n,k}^{\text{mem}} := \mathbb{E}\bigl[ M_{n,k} \bigr] - U_n,
\label{eq:Delta_k_def}
\end{equation}
where $U_n := \max_{a \in \mathcal{A}} \mathbb{P}(X_{n+1}=a)$ is the static optimal prediction accuracy as defined in \eqref{eq:U_n_def}.

For completeness, we record the two boundary cases explicitly.

\paragraph{Case $\boldsymbol{k=0}$ (empty history).}
By the convention above, conditioning on the empty history reduces to the unconditional marginal distribution:
\[
\mathbb{P}(X_{n+1}=a \mid \mathcal{F}_0) = \mathbb{P}(X_{n+1}=a) \quad \text{almost surely}, \qquad a \in \mathcal{A}.
\]
Consequently,
\[
M_{n,0}(\omega) = \max_{a \in \mathcal{A}} \mathbb{P}(X_{n+1}=a \mid \mathcal{F}_0) = \max_{a \in \mathcal{A}} \mathbb{P}(X_{n+1}=a) = U_n, \qquad \omega \in \Omega,
\]
where the equalities hold pointwise since both sides are constant. Taking expectations yields $\mathbb{E}[M_{n,0}] = U_n$, and therefore
\begin{equation}
\Delta_{n,0}^{\text{mem}} = U_n - U_n = 0.
\label{eq:Delta_k_boundary_zero}
\end{equation}

\paragraph{Case $\boldsymbol{k=n}$ (full history).}
The recent history $X_{1}^n$ coincides with the full history $X_1^n$ as defined in Section~\ref{sec:definition}. Hence
\[
M_{n,n}(\omega) = \max_{a \in \mathcal{A}} \mathbb{P}(X_{n+1}=a \mid X_1^n)(\omega) = M_n(\omega), \qquad \omega \in \Omega,
\]
where $M_n$ is the random variable introduced in \eqref{eq:M_n_def}. Taking expectations and subtracting $U_n$ yields
\begin{equation}
\Delta_{n,n}^{\text{mem}} = \mathbb{E}[M_n] - U_n = \Delta_n^{\text{mem}},
\label{eq:Delta_k_boundary_full}
\end{equation}
where the last equality follows from Definition~\ref{def:MPE}.
\end{definition}

Thus the family $\{\Delta_{n,k}^{\text{mem}}\}_{0 \le k \le n}$ interpolates between the static prediction baseline at $k=0$ and the full-history MPE at $k=n$. For intermediate values $0 < k < n$, the quantity $\Delta_{n,k}^{\text{mem}}$ measures the average improvement in optimal prediction accuracy when only the last $k$ observations are retained, with earlier observations discarded.

\begin{remark}[Operational Interpretation of the Memory Length]\label{rem:k_interpretation}
The parameter $k$ admits a clear operational interpretation: it specifies the number of past observations that the predictor is permitted to use. When $k=0$, the predictor operates without any historical information, relying solely on the marginal distribution of $X_{n+1}$. When $k=n$, the predictor has access to the entire observed history from time $1$ to $n$. For $0 < k < n$, the predictor employs a sliding window of length $k$ and disregards observations preceding time $n-k+1$. This parametrization enables a systematic investigation of the marginal contribution of each additional unit of memory to the predictive accuracy, revealing how the predictive value of historical information accumulates as the memory length grows—a phenomenon formally established in Theorem~\ref{thm:Delta_k_monotonicity}. In addition, this convention is consistent with the notation $X_1^n$ used throughout Section~\ref{sec:definition} for the full history, and the trivial $\sigma$-algebra convention avoids the ambiguity that would otherwise arise from expressions such as $X_{n+1}^n$, where the lower index would exceed the upper index.
\end{remark}

\subsection{Monotonicity of FH-MPE}
\label{subsec:monotonicity}

With the finite-history Memory Prediction Excess $\Delta_{n,k}^{\text{mem}}$ (FH-MPE) introduced in Definition~\ref{def:Delta_k}, we now investigate its behavior as the memory length $k$ varies. A reasonable expectation for any prediction system is that providing the predictor with additional past observations should not impair its optimal performance. The following theorem confirms this by establishing that $\Delta_{n,k}^{\text{mem}}$ is nondecreasing in $k$.

\begin{theorem}[Monotonicity of FH-MPE ($\boldsymbol{\Delta_{n,k}^{\text{mem}}}$) with respect to memory length]\label{thm:Delta_k_monotonicity}
For every fixed time index $n \ge 1$ and every integer $k$ satisfying $0 \le k \le n-1$,
\begin{equation}
\Delta_{n,k}^{\text{mem}} \le \Delta_{n,k+1}^{\text{mem}}.
\label{eq:monotonicity_ineq}
\end{equation}
\end{theorem}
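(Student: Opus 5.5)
Since both $\Delta_{n,k}^{\text{mem}}$ and $\Delta_{n,k+1}^{\text{mem}}$ subtract the same constant $U_n$ (Definition~\ref{def:Delta_k}), the claim \eqref{eq:monotonicity_ineq} is equivalent to $\mathbb{E}[M_{n,k}] \le \mathbb{E}[M_{n,k+1}]$. I would prove this by repeating the argument of Proposition~\ref{prop:MPE_lower_bound}, with the nested pair $\sigma(X_1^n) \subseteq \sigma(X_1^{n+1})$ replaced by the nested pair of $\sigma$-algebras generated by the two recent-history blocks. The key observation is that the block $X_{n-k+1}^n$ is a subvector of $X_{n-k}^n$, since it drops the oldest coordinate $X_{n-k}$. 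Hence
\[
\mathcal{H}_k := \sigma(X_{n-k+1}^n) \subseteq \sigma(X_{n-k}^n) =: \mathcal{H}_{k+1}.
\]
For $k=0$ we use the convention $\mathcal{H}_0 = \mathcal{F}_0$ from \eqref{eq:trivial_sigma}, which is trivially contained in every $\sigma$-algebra. Thus no case split is needed, though one may check $k=0$ separately: there the claim is $\Delta_{n,1}^{\text{mem}} \ge 0$, which is the argument of Theorem~\ref{thm:nonneg} with $X_1^n$ replaced by $X_n$.

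The steps, in order, are as follows. First, fix $a\in\mathcal{A}$ and set $q_a := \mathbb{P}(X_{n+1}=a \mid \mathcal{H}_{k+1}) = \mathbb{E}[\mathbf{1}_{\{X_{n+1}=a\}}\mid\mathcal{H}_{k+1}]$. The tower property then gives $\mathbb{E}[q_a \mid \mathcal{H}_k] = \mathbb{P}(X_{n+1}=a\mid\mathcal{H}_k)$ almost surely; when $k=0$ this reduces, via \eqref{eq:cond_trivial}, to $\mathbb{E}[q_a]=\mathbb{P}(X_{n+1}=a)$. Second, the pointwise inequality $q_a \le \max_b q_b = M_{n,k+1}$ and the monotonicity of conditional expectation give $\mathbb{P}(X_{n+1}=a\mid\mathcal{H}_k) \le \mathbb{E}[M_{n,k+1}\mid\mathcal{H}_k]$ almost surely. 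Since $\mathcal{A}$ is finite, the finitely many exceptional null sets combine into one null set, so we may take the maximum over $a$ and obtain $M_{n,k} \le \mathbb{E}[M_{n,k+1}\mid\mathcal{H}_k]$ almost surely. Third, taking expectations and applying the tower property once more yields $\mathbb{E}[M_{n,k}] \le \mathbb{E}[M_{n,k+1}]$. Subtracting $U_n$ then gives \eqref{eq:monotonicity_ineq}.

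An equivalent elementary route is available, in the style of Subsection~\ref{subsec:explicit_expansion}. Write a $(k+1)$-block as $(x,h)$ with $x\in\mathcal{A}$ and $h\in\mathcal{A}^k$. Then $\mathbb{P}(X_{n+1}=a, X_{n-k+1}^n=h) = \sum_x \mathbb{P}(X_{n+1}=a, X_{n-k}^n=(x,h))$, and
\[
\max_a \sum_x \mathbb{P}\bigl(X_{n+1}=a,\, X_{n-k}^n=(x,h)\bigr) \le \sum_x \max_a \mathbb{P}\bigl(X_{n+1}=a,\, X_{n-k}^n=(x,h)\bigr).
\]
Summing over $h$ gives the same conclusion. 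I would present the conditional-expectation version for consistency with Proposition~\ref{prop:MPE_lower_bound}, and perhaps mention the discrete version as a check.

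The main obstacle is only bookkeeping rather than anything conceptual. One must verify the inclusion of $\sigma$-algebras carefully, noting that the shorter block consists of the \emph{most recent} coordinates, so it really is a function of the longer block. One must also handle the version and null-set issues when interchanging the maximum with the almost-sure inequalities. Both are dealt with exactly as in Step~2 of the proof of Proposition~\ref{prop:MPE_lower_bound}, by fixing versions on a full-measure set $\Omega_0$.
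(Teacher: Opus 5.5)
Your proposal is correct and is essentially the paper's argument. The paper also uses $\sigma(X_{n-k+1}^n)\subseteq\sigma(X_{n-k}^n)$ and the tower property. On each atom $\{X_{n-k+1}^n=h\}$ it writes the coarse conditional law as a convex combination of the finer ones, with weights $w_{h,x}=\mathbb{P}(X_{n-k}=x\mid X_{n-k+1}^n=h)$. It bounds the maximum of this average by $\sum_x w_{h,x}M_{k+1}((x,h))=\mathbb{E}[M_{n,k+1}\mid X_{n-k+1}^n=h]$ and then takes expectations. So the paper carries out your conditional-expectation argument atom by atom, and your ``elementary route'' is the same computation written with joint rather than conditional probabilities.
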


\begin{proof}
We prove the inequality $\Delta_{n,k}^{\text{mem}} \le \Delta_{n,k+1}^{\text{mem}}$ by comparing the conditional distributions arising from the two information sets. The argument proceeds in four steps.

\paragraph{Step 1: Information sets and their relation.}
Let
\begin{equation}
\mathcal{G}_k := \sigma(X_{n-k+1}^n), \qquad \mathcal{G}_{k+1} := \sigma(X_{n-k}^n)
\label{eq:sigma_algebras}
\end{equation}
be the $\sigma$-algebras generated by the most recent $k$ and $k+1$ observations, respectively. Since $X_{n-k+1}^n$ is obtained from $X_{n-k}^n$ by deleting the earliest component $X_{n-k}$, we have $\mathcal{G}_k \subseteq \mathcal{G}_{k+1}$; thus $\mathcal{G}_k$ represents coarser information than $\mathcal{G}_{k+1}$.

For each $a \in \mathcal{A}$, define the conditional probabilities
\[
Q_a^{(k)} := \mathbb{P}(X_{n+1}=a \mid \mathcal{G}_k), \qquad
Q_a^{(k+1)} := \mathbb{P}(X_{n+1}=a \mid \mathcal{G}_{k+1}).
\]
Since $\mathcal{G}_k \subseteq \mathcal{G}_{k+1}$, the tower property of conditional expectations applies: for any integrable random variable $Y$,
\[
\mathbb{E}\bigl[ \mathbb{E}[Y \mid \mathcal{G}_{k+1}] \mid \mathcal{G}_k \bigr]
= \mathbb{E}[Y \mid \mathcal{G}_k].
\]
Taking $Y = \mathbf{1}_{\{X_{n+1}=a\}}$ and noting that
\[
\mathbb{E}\bigl[ \mathbf{1}_{\{X_{n+1}=a\}} \mid \mathcal{G}_{k+1} \bigr]
= \mathbb{P}(X_{n+1}=a \mid \mathcal{G}_{k+1}) = Q_a^{(k+1)},
\]
we obtain
\begin{equation}
Q_a^{(k)}
= \mathbb{E}\bigl[ \mathbf{1}_{\{X_{n+1}=a\}} \mid \mathcal{G}_k \bigr]
= \mathbb{E}\bigl[ Q_a^{(k+1)} \mid \mathcal{G}_k \bigr] \quad \text{almost surely}, \qquad a \in \mathcal{A}.
\label{eq:tower_relation}
\end{equation}
This identity expresses the coarser conditional probability $Q_a^{(k)}$ as the conditional expectation (with respect to the coarser $\sigma$-algebra $\mathcal{G}_k$) of the finer conditional probability $Q_a^{(k+1)}$. It will be used explicitly in Step 4 when we take expectations. We shall now make this relation more explicit by decomposing the coarser conditioning event into a weighted average over the possible values of the additional observation $X_{n-k}$.

\paragraph{Step 2: Decomposition of the coarser conditional distribution into a weighted average.}
Since $\mathcal{A}$ is finite, the $\sigma$-algebra $\mathcal{G}_k$ is generated by the partition of $\Omega$ into atoms of the form
\[
A_h := \{\omega \in \Omega : X_{n-k+1}^n(\omega) = h\}, \qquad h \in \mathcal{A}^k.
\]
Each atom $A_h$ corresponds to a specific realization $h$ of the most recent $k$ observations.

Fix a history $h \in \mathcal{A}^k$ with $\mathbb{P}(A_h) > 0$. For each $x \in \mathcal{A}$, define the conditional weight
\[
w_{h,x} := \mathbb{P}(X_{n-k}=x \mid X_{n-k+1}^n = h),
\]
which is well-defined by the positive-probability assumption (note that since $k \le n-1$ in the present context, we have $n-k \ge 1$, so $X_{n-k}$ is a well-defined random variable). These weights are nonnegative and, by the definition of conditional probability, satisfy
\begin{equation}
\sum_{x \in \mathcal{A}} w_{h,x}
= \sum_{x \in \mathcal{A}} \mathbb{P}(X_{n-k}=x \mid X_{n-k+1}^n = h)
= 1.
\label{eq:weights_normalization}
\end{equation}
For histories with zero probability (i.e., $\mathbb{P}(X_{n-k+1}^n = h) = 0$), the weights may be defined arbitrarily (e.g., uniformly over $\mathcal{A}$), as such histories occur on a null set and therefore do not affect any expectation or almost-sure statement; this convention is consistent with the treatment of null events in Section~\ref{sec:definition}. For such null histories, the decompositions below are irrelevant and need not be verified.

We now derive the key decomposition. For each $a \in \mathcal{A}$ and each positive-probability history $h$ (i.e., $\mathbb{P}(X_{n-k+1}^n = h) > 0$), begin with the conditional probability $\mathbb{P}(X_{n+1}=a \mid X_{n-k+1}^n = h)$. By the definition of conditional probability, this is equal to $\mathbb{P}(X_{n+1}=a,\; X_{n-k+1}^n = h) / \mathbb{P}(X_{n-k+1}^n = h)$.

The numerator is the probability of the joint event $B_h := \{X_{n+1}=a,\; X_{n-k+1}^n = h\}$. Since the events $\{X_{n-k}=x\}$ for $x \in \mathcal{A}$ form a partition of $\Omega$ (up to a null set), we have
\[
B_h = B_h \cap \Omega = B_h \cap \left( \bigcup_{x \in \mathcal{A}} \{X_{n-k}=x\} \right)
= \bigcup_{x \in \mathcal{A}} \left( B_h \cap \{X_{n-k}=x\} \right).
\]
Thus this joint event can be decomposed into the disjoint union
\[
\{X_{n+1}=a,\; X_{n-k+1}^n = h\}
= \bigcup_{x \in \mathcal{A}} \{X_{n+1}=a,\; X_{n-k+1}^n = h,\; X_{n-k}=x\}.
\]
Then, by finite additivity of probability measures,
\begin{equation}
\mathbb{P}(X_{n+1}=a,\; X_{n-k+1}^n = h)
= \sum_{x \in \mathcal{A}} \mathbb{P}(X_{n+1}=a,\; X_{n-k+1}^n = h,\; X_{n-k}=x).
\label{eq:joint_decomposition}
\end{equation}

For each $x \in \mathcal{A}$, consider the summand in \eqref{eq:joint_decomposition}. If $w_{h,x}=0$, then $\mathbb{P}(X_{n-k+1}^n = h,\; X_{n-k}=x) = 0$, and consequently the summand itself is zero, contributing nothing to the sum. Such terms may therefore be omitted. For $w_{h,x}>0$, we have $\mathbb{P}(X_{n-k+1}^n = h,\; X_{n-k}=x) = \mathbb{P}(X_{n-k+1}^n = h) \cdot w_{h,x} > 0$, so the conditional probability
\[
\mathbb{P}(X_{n+1}=a \mid X_{n-k+1}^n = h,\; X_{n-k}=x)
\]
is well-defined. Applying the conditional probability multiplication formula (chain rule) to the three events
\[
E_1 := \{X_{n-k+1}^n = h\},\qquad
E_2 := \{X_{n-k}=x\},\qquad
E_3 := \{X_{n+1}=a\},
\]
we obtain
\[
\mathbb{P}(E_1 \cap E_2 \cap E_3)
= \mathbb{P}(E_1) \cdot \mathbb{P}(E_2 \mid E_1) \cdot \mathbb{P}(E_3 \mid E_1 \cap E_2).
\]
Substituting the events back yields
\[
\mathbb{P}(X_{n+1}=a,\; X_{n-k+1}^n = h,\; X_{n-k}=x)
= \mathbb{P}(X_{n-k+1}^n = h) \cdot w_{h,x} \cdot \mathbb{P}(X_{n+1}=a \mid X_{n-k+1}^n = h,\; X_{n-k}=x).
\]
Substituting this expression into \eqref{eq:joint_decomposition} and canceling the common positive factor $\mathbb{P}(X_{n-k+1}^n = h)$ from numerator and denominator yields, for each $a \in \mathcal{A}$ and each positive-probability history $h$,
\begin{align}
\mathbb{P}(X_{n+1}=a \mid X_{n-k+1}^n = h)
&= \sum_{x \in \mathcal{A}} w_{h,x} \cdot \mathbb{P}(X_{n+1}=a \mid X_{n-k+1}^n = h,\; X_{n-k}=x),
\label{eq:total_prob_decomp}
\end{align}
where the conditioning event in the last factor is equivalent to $\{X_{n-k}^n = (x,h)\}$, with $(x,h)$ denoting the concatenation of $x$ with the block $h$ (i.e., the finer history of length $k+1$). Note that terms with $w_{h,x}=0$ contribute zero to the sum in \eqref{eq:total_prob_decomp} and may thus be omitted; the remaining terms are precisely those for which the conditional probabilities are well-defined.

This decomposition expresses the conditional distribution given the coarser history $h$ (consisting of $k$ observations) as a weighted average, with weights $w_{h,x}$ satisfying \eqref{eq:weights_normalization}, of the conditional distributions given the finer history $(x,h)$ (consisting of $k+1$ observations). The weight $w_{h,x}$ represents the probability that the additional observation $X_{n-k}$ takes the value $x$, given the coarser history $h$. The normalization \eqref{eq:weights_normalization} ensures that the right-hand side of \eqref{eq:total_prob_decomp} is indeed a convex combination of the finer conditional probabilities.

\paragraph{Step 3: Pointwise comparison of the maxima.}
Fix $h \in \mathcal{A}^k$ with $\mathbb{P}(X_{n-k+1}^n = h) > 0$. For each $x \in \mathcal{A}$ and each $a \in \mathcal{A}$, we have the elementary inequality
\[
\mathbb{P}(X_{n+1}=a \mid X_{n-k+1}^n = h,\; X_{n-k}=x)
\le \max_{b \in \mathcal{A}} \mathbb{P}(X_{n+1}=b \mid X_{n-k+1}^n = h,\; X_{n-k}=x).
\]
For each fixed $x \in \mathcal{A}$, define
\[
M_{k+1}((x,h)) := \max_{b \in \mathcal{A}} \mathbb{P}(X_{n+1}=b \mid X_{n-k+1}^n = h,\; X_{n-k}=x),
\]
which is the optimal conditional prediction accuracy given the complete finer history $(x,h)$ of length $k+1$ (see Step 2 for the concatenation convention, where $(x,h)$ denotes the finer block of length $k+1$).

Substituting this definition into the elementary inequality—i.e., replacing the right-hand side of the inequality by $M_{k+1}((x,h))$ for each fixed $x$—and applying the resulting inequality termwise to the decomposition \eqref{eq:total_prob_decomp}, while noting that the weights $w_{h,x}$ are nonnegative, yields, for each $a \in \mathcal{A}$,
\[
\mathbb{P}(X_{n+1}=a \mid X_{n-k+1}^n = h)
\le \sum_{x \in \mathcal{A}} w_{h,x} \cdot \max_{b \in \mathcal{A}} \mathbb{P}(X_{n+1}=b \mid X_{n-k+1}^n = h,\; X_{n-k}=x)
= \sum_{x \in \mathcal{A}} w_{h,x} \cdot M_{k+1}((x,h)).
\]
The right-hand side of \eqref{eq:pointwise_bound} does not depend on $a$. Therefore, taking the maximum over $a \in \mathcal{A}$ on the left-hand side preserves the inequality, yielding
\begin{equation}
\max_{a \in \mathcal{A}} \mathbb{P}(X_{n+1}=a \mid X_{n-k+1}^n = h)
\le \sum_{x \in \mathcal{A}} w_{h,x} \cdot M_{k+1}((x,h)).
\label{eq:max_pointwise_bound}
\end{equation}
The left-hand side is precisely $M_k(h)$ as defined in Definition~\ref{def:Delta_k}.

It remains to interpret the right-hand side of \eqref{eq:max_pointwise_bound}. Since the events $\{X_{n-k}^n = (x,h)\}$ for $x \in \mathcal{A}$ form a partition (up to a null set) of the event $\{X_{n-k+1}^n = h\}$, where $(x,h) := (x, h_1, \dots, h_k)$ denotes the finer history of length $k+1$ obtained by prepending the additional observation $x$ to the block $h = (h_1, \dots, h_k)$, the conditional expectation on the right-hand side can be expanded as follows.

Fix a positive-probability history $h$ (so that $\mathbb{P}(X_{n-k+1}^n = h) > 0$). Recall that, for a $\sigma$-algebra $\mathcal{G} \subseteq \mathcal{F}$, the conditional expectation $\mathbb{E}[Y \mid \mathcal{G}]$ is the unique (up to almost-sure equality) $\mathcal{G}$-measurable random variable $Z$ satisfying
\[
\int_G Z \, d\mathbb{P} = \int_G Y \, d\mathbb{P}, \qquad \forall G \in \mathcal{G}.
\]
In the special case where $\mathcal{G} = \sigma(B)$ for an event $B$ with $\mathbb{P}(B) > 0$, the conditional expectation $\mathbb{E}[Y \mid B]$ (often written as $\mathbb{E}[Y \mid \mathbf{1}_B]$) reduces to the elementary formula
\[
\mathbb{E}[Y \mid B] = \frac{\mathbb{E}\bigl[ Y \cdot \mathbf{1}_B \bigr]}{\mathbb{P}(B)}.
\]
This is obtained by taking $G = B$ in the defining identity and using the fact that $\mathbb{E}[Y \mid \sigma(B)]$ is constant on $B$ and on $B^c$. Applying this elementary formula with $B = \{X_{n-k+1}^n = h\}$ and $Y = M_{n,k+1}$ (as defined in \eqref{eq:M_n_k_def}) gives
\begin{equation}
\mathbb{E}\bigl[ M_{n,k+1} \mid X_{n-k+1}^n = h \bigr]
= \frac{\mathbb{E}\bigl[ M_{n,k+1} \cdot \mathbf{1}_{\{X_{n-k+1}^n = h\}} \bigr]}
        {\mathbb{P}(X_{n-k+1}^n = h)}.
\label{eq:cond_exp_event_form}
\end{equation}

The event $\{X_{n-k+1}^n = h\}$ is partitioned into the disjoint union
\[
\{X_{n-k+1}^n = h\}
= \bigcup_{x \in \mathcal{A}} \{X_{n-k}^n = (x,h)\},
\]
where $(x,h)$ denotes the concatenation of $x$ with the block $h$, i.e., the finer history of length $k+1$ as in Step~2. Consequently, by the linearity of expectation,
\begin{equation}
\mathbb{E}\bigl[ M_{n,k+1} \cdot \mathbf{1}_{\{X_{n-k+1}^n = h\}} \bigr]
= \sum_{x \in \mathcal{A}} \mathbb{E}\bigl[ M_{n,k+1} \cdot \mathbf{1}_{\{X_{n-k}^n = (x,h)\}} \bigr].
\label{eq:expectation_partition}
\end{equation}
On each atom $\{X_{n-k}^n = (x,h)\}$, the random variable $M_{n,k+1}$ takes the constant value $M_{k+1}((x,h))$: indeed, $M_{n,k+1} = M_{k+1}(X_{n-k}^n)$ by Definition~\ref{def:Delta_k}, and on this atom the argument $X_{n-k}^n$ is identically equal to $(x,h)$. Therefore,
\[
\mathbb{E}\bigl[ M_{n,k+1} \cdot \mathbf{1}_{\{X_{n-k}^n = (x,h)\}} \bigr]
= M_{k+1}((x,h)) \cdot \mathbb{P}(X_{n-k}^n = (x,h)).
\]
Substituting this back into \eqref{eq:expectation_partition} and then into \eqref{eq:cond_exp_event_form} yields
\[
\mathbb{E}\bigl[ M_{n,k+1} \mid X_{n-k+1}^n = h \bigr]
= \sum_{x \in \mathcal{A}} M_{k+1}((x,h)) \cdot
\frac{\mathbb{P}(X_{n-k}^n = (x,h))}{\mathbb{P}(X_{n-k+1}^n = h)}.
\]
By the definition of conditional probability, the ratio is precisely
\[
\frac{\mathbb{P}(X_{n-k}=x,\; X_{n-k+1}^n = h)}
        {\mathbb{P}(X_{n-k+1}^n = h)}
= \mathbb{P}(X_{n-k}=x \mid X_{n-k+1}^n = h)
= w_{h,x}.
\]
Hence we obtain
\[
\mathbb{E}\bigl[ M_{n,k+1} \mid X_{n-k+1}^n = h \bigr]
= \sum_{x \in \mathcal{A}} w_{h,x} \cdot M_{k+1}((x,h)).
\]

Thus \eqref{eq:max_pointwise_bound} may be rewritten as
\begin{equation}
M_k(h) \le \mathbb{E}\bigl[ M_{n,k+1} \mid X_{n-k+1}^n = h \bigr]
\label{eq:pointwise_cond_ineq}
\end{equation}
for every $h$ with positive probability. For histories with zero probability, the inequality need not be verified, as such histories occur on a null set and therefore do not affect any expectation or almost-sure statement.

\paragraph{Step 4: Taking expectations.}
We now convert the pointwise inequality established in Step~3 into a global expectation inequality. Recall that \eqref{eq:pointwise_cond_ineq} holds for every positive-probability history $h$; for zero-probability histories the inequality need not be verified, as such histories occur on a null set and therefore do not affect any expectation.

Since \eqref{eq:pointwise_cond_ineq} holds for every positive-probability history $h$, and the collection of zero-probability histories has total probability zero, we may replace the fixed value $h$ by the random history block $X_{n-k+1}^n(\omega)$. To see this, for any fixed $\omega$, setting $h = X_{n-k+1}^n(\omega)$: if $\mathbb{P}(X_{n-k+1}^n = h) > 0$, the inequality applies directly; if $\mathbb{P}(X_{n-k+1}^n = h) = 0$, such $\omega$'s form a null set and can be ignored.

The inequality
\[
M_k(h) \le \mathbb{E}\bigl[ M_{n,k+1} \mid X_{n-k+1}^n = h \bigr]
\]
holds for all $h$ with positive probability. Since the random variable $\mathbb{E}[M_{n,k+1} \mid X_{n-k+1}^n]$ takes the value $\mathbb{E}[M_{n,k+1} \mid X_{n-k+1}^n = h]$ on the atom $\{X_{n-k+1}^n = h\}$, substituting $h = X_{n-k+1}^n(\omega)$ yields, for almost every $\omega \in \Omega$,
\[
M_{n,k}(\omega) \le \mathbb{E}\bigl[ M_{n,k+1} \mid X_{n-k+1}^n \bigr](\omega),
\]
where the left-hand side follows from \(M_{n,k} = M_k(X_{n-k+1}^n)\) (see \eqref{eq:M_n_k_def}). Hence
\[
M_{n,k} \le \mathbb{E}\bigl[ M_{n,k+1} \mid X_{n-k+1}^n \bigr] \quad \text{almost surely}.
\]
Taking expectations on both sides and applying the tower property of conditional expectations (which gives \(\mathbb{E}[\mathbb{E}[M_{n,k+1} \mid X_{n-k+1}^n]] = \mathbb{E}[M_{n,k+1}]\)) yields
\begin{equation}
\mathbb{E}[M_{n,k}] \le \mathbb{E}[M_{n,k+1}].
\label{eq:expectation_comparison}
\end{equation}

Subtracting the constant $U_n$ from both sides of \eqref{eq:expectation_comparison} gives
\[
\mathbb{E}[M_{n,k}] - U_n \le \mathbb{E}[M_{n,k+1}] - U_n.
\]
By Definition~\ref{def:Delta_k}, the left-hand side is $\Delta_{n,k}^{\text{mem}}$ and the right-hand side is $\Delta_{n,k+1}^{\text{mem}}$. Therefore,
\[
\Delta_{n,k}^{\text{mem}} \le \Delta_{n,k+1}^{\text{mem}},
\]
which establishes \eqref{eq:monotonicity_ineq}. The proof is complete.
\end{proof}

The following corollary records the full chain of inequalities obtained by iterating the monotonicity result.

\begin{corollary}[Monotonicity chain for FH-MPE ($\boldsymbol{\Delta_{n,k}^{\text{mem}}}$)]\label{cor:Delta_k_chain}
For every fixed $n \ge 1$,
\begin{equation}
0 = \Delta_{n,0}^{\text{mem}} \le \Delta_{n,1}^{\text{mem}} \le \cdots \le \Delta_{n,n}^{\text{mem}} = \Delta_n^{\text{mem}}.
\label{eq:monotonicity_chain}
\end{equation}
In particular, the full-history Memory Prediction Excess $\Delta_n^{\text{mem}}$ serves as the universal upper bound for all finite-history FH-MPEs at the same time index $n$.
\end{corollary}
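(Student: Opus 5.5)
The chain will be assembled from three ingredients already in place: the two boundary identities recorded in Definition~\ref{def:Delta_k} and the one-step monotonicity of Theorem~\ref{thm:Delta_k_monotonicity}. No new probabilistic argument should be needed. The proof is a finite induction, or equivalently the transitivity of $\le$ applied along the index $k$.

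First, I will fix $n \ge 1$. The left endpoint comes from \eqref{eq:Delta_k_boundary_zero}: with the trivial $\sigma$-algebra convention \eqref{eq:trivial_sigma}, we get $M_{n,0} \equiv U_n$ and hence $\Delta_{n,0}^{\text{mem}} = 0$. The right endpoint comes from \eqref{eq:Delta_k_boundary_full}: the block $X_1^n$ coincides with the full history, so $M_{n,n} = M_n$ and hence $\Delta_{n,n}^{\text{mem}} = \Delta_n^{\text{mem}}$.

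Second, for each $k \in \{0,1,\dots,n-1\}$, Theorem~\ref{thm:Delta_k_monotonicity} gives $\Delta_{n,k}^{\text{mem}} \le \Delta_{n,k+1}^{\text{mem}}$. Concatenating these $n$ inequalities gives the displayed chain \eqref{eq:monotonicity_chain}. To state this formally, I will prove by induction on $j$ that $\Delta_{n,i}^{\text{mem}} \le \Delta_{n,j}^{\text{mem}}$ for all $0 \le i \le j \le n$. The base case $j=i$ is trivial, and the inductive step is one application of the theorem.

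The ``in particular'' clause then follows by taking $j=n$ and replacing $\Delta_{n,n}^{\text{mem}}$ with $\Delta_n^{\text{mem}}$. This gives $\Delta_{n,k}^{\text{mem}} \le \Delta_n^{\text{mem}}$ for every $0 \le k \le n$. Taking $i=0$ also recovers $0 \le \Delta_{n,k}^{\text{mem}}$, which is consistent with Theorem~\ref{thm:nonneg} when $k=n$.

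The only point that needs care is the range of $k$. Theorem~\ref{thm:Delta_k_monotonicity} requires $k \le n-1$, so that $X_{n-k}$ is defined. The step into $k=n$ therefore uses $k=n-1$, which is admissible. The boundary case $k=0$ is covered because the theorem allows $k=0$, with $\mathcal{G}_0$ the trivial $\sigma$-algebra contained in $\sigma(X_n)$. I will check that the convention for $k=0$ in Definition~\ref{def:Delta_k} agrees with the $\mathcal{G}_k$ used in that proof. Beyond this bookkeeping I expect no real obstacle.
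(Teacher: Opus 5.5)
Your proposal is correct and follows essentially the same route as the paper. The paper fixes the endpoints with the boundary identities \eqref{eq:Delta_k_boundary_zero} and \eqref{eq:Delta_k_boundary_full} and applies Theorem~\ref{thm:Delta_k_monotonicity} successively for $k=0,\dots,n-1$; your induction on $0\le i\le j\le n$ is just a slightly more formal version of that same chaining.
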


\begin{proof}
We apply Theorem~\ref{thm:Delta_k_monotonicity} repeatedly to compare the FH-MPE values for consecutive memory lengths.

For each integer $j$ in the range $0 \le j \le n-1$, where Theorem~\ref{thm:Delta_k_monotonicity} applies, we have the inequality $\Delta_{n,j}^{\text{mem}} \le \Delta_{n,j+1}^{\text{mem}}$. Applying this successively for $j = 0, 1, \dots, n-1$ yields the chain
\[
\Delta_{n,0}^{\text{mem}} \le \Delta_{n,1}^{\text{mem}} \le \Delta_{n,2}^{\text{mem}} \le \cdots \le \Delta_{n,n-1}^{\text{mem}} \le \Delta_{n,n}^{\text{mem}}.
\]
It remains to identify the two endpoints of this chain.

The first term $\Delta_{n,0}^{\text{mem}}$ corresponds to the case $k=0$, i.e., conditioning on the empty history. By \eqref{eq:Delta_k_boundary_zero}, we have $\Delta_{n,0}^{\text{mem}} = 0$; this follows from the convention, established in Definition~\ref{def:Delta_k}, that conditioning on the empty history reduces to the unconditional marginal distribution, so that $M_{n,0} = U_n$ pointwise and hence $\mathbb{E}[M_{n,0}] = U_n$.

Similarly, the last term $\Delta_{n,n}^{\text{mem}}$ corresponds to the case $k=n$, i.e., conditioning on the full history $X_1^n$. By \eqref{eq:Delta_k_boundary_full}, we have $\Delta_{n,n}^{\text{mem}} = \Delta_n^{\text{mem}}$; indeed, when $k=n$, the recent history $X_{1}^n$ coincides with the full history $X_1^n$ as defined in Section~\ref{sec:definition}, so $M_{n,n} = M_n$ and consequently $\Delta_{n,n}^{\text{mem}} = \Delta_n^{\text{mem}}$.

Substituting these two endpoint identities into the chain above, we obtain
\[
0 = \Delta_{n,0}^{\text{mem}} \le \Delta_{n,1}^{\text{mem}} \le \Delta_{n,2}^{\text{mem}} \le \cdots \le \Delta_{n,n-1}^{\text{mem}} \le \Delta_{n,n}^{\text{mem}} = \Delta_n^{\text{mem}},
\]
which is precisely \eqref{eq:monotonicity_chain}. Thus, for every fixed $n$, the full-history MPE $\Delta_n^{\text{mem}}$ serves as the universal upper bound for all finite-history FH-MPEs $\Delta_{n,k}^{\text{mem}}$ with $0 \le k \le n$.
\end{proof}

\begin{remark}[Interpretation of non-negativity and the monotonicity chain]\label{rem:monotonicity_chain_interpretation}
We observe that the monotonicity chain in Corollary~\ref{cor:Delta_k_chain}, which follows directly from Theorem~\ref{thm:Delta_k_monotonicity}, can be understood from two perspectives. Taken as a whole, the chain of inequalities
\[
0 = \Delta_{n,0}^{\text{mem}} \le \Delta_{n,1}^{\text{mem}} \le \cdots \le \Delta_{n,n}^{\text{mem}} = \Delta_n^{\text{mem}}
\]
effectively traces a complete spectrum ranging from ``no memory'' to ``full memory.''

First, the monotonicity chain reaffirms the non-negativity of all finite-history MPEs: for every $0 \le k \le n$, we have $\Delta_{n,k}^{\text{mem}} \ge 0$, with equality at $k=0$. This means that even when the predictor uses only a limited portion of the past, its optimal prediction accuracy is, on average, no worse than that based solely on the static marginal distribution. In this sense, the non-negativity property established for the full-history case in Theorem~\ref{thm:nonneg} is naturally inherited and extended to the finite-history setting.

Second, the chain reveals how this non-negative advantage accumulates monotonically as the memory length increases: as $k$ ranges from $0$ to $n$, the values $\Delta_{n,k}^{\text{mem}}$ increase monotonically, eventually reaching the full-history MPE $\Delta_n^{\text{mem}}$ at $k=n$. The full-history MPE thereby serves as a universal upper bound for all intermediate states, confirming that the complete past indeed provides the maximal possible predictive gain. In this way, the contribution of each additional unit of historical information to predictive accuracy is made explicit.

The monotonicity chain, which we have established through a detailed analysis of the conditional probability structure and its evolution with respect to the memory length, in fact does not require any structural assumptions beyond the discrete-time finite-state setting. In this sense, it may be viewed as a basic framework for understanding the memory-dependence structure of stochastic processes: within this setting, the predictive value of memory accumulates monotonically with the length of the available history, as established by the monotonicity chain, with the full-history MPE serving as the natural upper bound of this accumulation.
\end{remark}

\subsection{Saturation of FH-MPE for Finite-Order Markov Chains}
\label{subsec:saturation}

In the previous subsections we introduced the FH-MPE and proved its monotonicity with respect to $k$ (Theorem~\ref{thm:Delta_k_monotonicity}). For general stochastic processes, this monotone increase may continue throughout the entire range $0 \le k \le n$, with the possibility that additional historical information yields further predictive gain. An important class of processes for which the predictive value of additional historical information saturates once the memory length reaches a finite order is provided by the classical notion of a Markov chain of finite order. This class of processes, along with its various extensions and ramifications, has been investigated in a number of contexts \cite{misiewicz2026markov, lohr2009generative, venegas2023optimality}. In this subsection we apply the FH-MPE framework to this class and show that for such processes, the FH-MPE attains its full-history value once the memory length reaches the Markov order (Theorem~\ref{thm:Delta_k_saturation}); moreover, combining this saturation with the monotonicity chain yields a complete characterization of the evolution of the FH-MPE for Markov chains (Corollary~\ref{cor:saturation_chain}).

We begin by recalling the definition of a finite-order Markov chain.

\begin{definition}[$\boldsymbol{m}$-th order Markov chain]\label{def:m_markov}
Let $m \ge 1$ be a fixed integer. A discrete-time stochastic process $\{X_n\}_{n\ge 1}$ taking values in the finite state space $\mathcal{A}$ is called an $m$-th order Markov chain if, for every $n \ge m$ and every $a \in \mathcal{A}$,
\begin{equation}
\mathbb{P}(X_{n+1}=a \mid X_1^n) = \mathbb{P}(X_{n+1}=a \mid X_{n-m+1}^n) \quad \text{almost surely}.
\label{eq:m_markov_property}
\end{equation}
In words, given the most recent $m$ observations, the future state $X_{n+1}$ is conditionally independent of all earlier observations; equivalently, the entire history $X_1^n$ contains no additional predictive information beyond that already provided by the recent $m$-block $X_{n-m+1}^n$ (as defined in \eqref{eq:recent_history_def} with $k=m$).
\end{definition}

It is important to note that Definition~\ref{def:m_markov} is stated for all $n \ge m$. The condition $n \ge m$ ensures that the recent $m$-block $X_{n-m+1}^n$ is well-defined; for $n < m$, there are not yet $m$ observations available to form such a block. Throughout this subsection we will exclusively consider time indices $n$ satisfying $n \ge m$, so that the Markov property is applicable. Also note that when the conditioning event $\{X_{n-m+1}^n = h\}$ has zero probability, the conditional probability in \eqref{eq:m_markov_property} is not defined by the elementary ratio formula; however, the almost-sure formulation in terms of $\sigma$-algebras is well-defined and suffices for our purposes, consistent with the treatment of null events in Section~\ref{sec:definition}.

The following theorem establishes that for an $m$-th order Markov chain, the FH-MPE saturates once the memory length $k$ (see \eqref{eq:recent_history_def}) reaches the order $m$.

\begin{theorem}[Saturation of FH-MPE for Markov chains]\label{thm:Delta_k_saturation}
Let $\{X_n\}_{n\ge 1}$ be an $m$-th order Markov chain (see Definition~\ref{def:m_markov}) for some fixed integer $m \ge 1$. Then for every $n \ge m$ and every integer $k$ satisfying $m \le k \le n$,
\begin{equation}
\Delta_{n,k}^{\text{mem}} = \Delta_{n,m}^{\text{mem}}.
\label{eq:saturation_equality}
\end{equation}
Consequently, for all $k \ge m$ (with $k \le n$), the FH-MPE remains constant at the value $\Delta_{n,m}^{\text{mem}}$; no additional predictive gain is obtained by extending the memory beyond the Markov order $m$.
\end{theorem}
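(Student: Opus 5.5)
The plan is to show that for every $k$ with $m \le k \le n$ the random variables $M_{n,k}$ and $M_{n,m}$ agree almost surely. Then $\mathbb{E}[M_{n,k}] = \mathbb{E}[M_{n,m}]$, and subtracting the common constant $U_n$ in \eqref{eq:Delta_k_def} gives \eqref{eq:saturation_equality}. The case $k = m$ is trivial, so we fix $m < k \le n$. We work with the $\sigma$-algebras
\[
\mathcal{G}_m := \sigma(X_{n-m+1}^n) \subseteq \mathcal{G}_k := \sigma(X_{n-k+1}^n) \subseteq \mathcal{G}_n := \sigma(X_1^n).
\]
The inclusions hold because each block is a sub-vector of the next.

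The key step is to show, for each $a \in \mathcal{A}$,
\[
\mathbb{P}(X_{n+1}=a \mid \mathcal{G}_k) = \mathbb{P}(X_{n+1}=a \mid \mathcal{G}_m) \quad \text{almost surely}.
\]
We get this from the tower property, in the same way as in Step~1 of Theorem~\ref{thm:Delta_k_monotonicity}. Condition the indicator $\mathbf{1}_{\{X_{n+1}=a\}}$ first on $\mathcal{G}_n$ and then on $\mathcal{G}_k$:
\[
\mathbb{P}(X_{n+1}=a \mid \mathcal{G}_k) = \mathbb{E}\bigl[\mathbb{P}(X_{n+1}=a \mid \mathcal{G}_n) \,\big|\, \mathcal{G}_k\bigr].
\]
By the Markov property \eqref{eq:m_markov_property}, which applies since $n \ge m$, the inner conditional probability equals $\mathbb{P}(X_{n+1}=a \mid \mathcal{G}_m)$ almost surely. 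This is $\mathcal{G}_m$-measurable, hence $\mathcal{G}_k$-measurable, so conditioning it on $\mathcal{G}_k$ leaves it unchanged almost surely. That gives the key identity.

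Since $\mathcal{A}$ is finite, we intersect the finitely many full-measure sets on which the identity holds, one for each $a$. On that intersection we take $\max_{a}$ of both sides, which yields $M_{n,k} = M_{n,m}$ almost surely by \eqref{eq:M_n_k_def}. Taking expectations finishes the proof.

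The main obstacle is the bookkeeping between the two descriptions of the conditional probabilities. $M_{n,k}$ is defined through the elementary conditional probabilities on atoms $\{X_{n-k+1}^n = h\}$, while the argument uses $\sigma$-algebra versions. These agree almost surely on positive-probability atoms, and null atoms do not affect expectations, as the paper's standing convention on null events already states. As a sanity check, one could run an alternative argument that avoids the tower property. Use Theorem~\ref{thm:Delta_k_monotonicity} to get $\Delta_{n,m}^{\text{mem}} \le \Delta_{n,k}^{\text{mem}} \le \Delta_{n,n}^{\text{mem}}$. Then apply the Markov property directly to show $M_{n,n} = M_{n,m}$ almost surely, which gives $\Delta_{n,n}^{\text{mem}} = \Delta_{n,m}^{\text{mem}}$ and forces equality throughout the chain. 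I would likely present this sandwich version, since it needs only the Markov identity at the full history.
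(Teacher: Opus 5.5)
Your proposal is correct. Your main argument follows the paper's proof closely. You show that $\mathbb{P}(X_{n+1}=a \mid X_{n-k+1}^n)=\mathbb{P}(X_{n+1}=a \mid X_{n-m+1}^n)$ almost surely for each $a$, intersect the finitely many full-measure sets, take the maximum to get $M_{n,k}=M_{n,m}$ almost surely, and then take expectations. The only difference is how the key identity is justified. The paper reads the Markov property as conditional independence of $X_{n+1}$ from $X_1^{n-m}$ given $X_{n-m+1}^n$, and asserts that this passes to the sub-block $X_{n-k+1}^{n-m}$ ``from the definition of conditional independence.'' Your tower-property computation along $\mathcal{G}_m\subseteq\mathcal{G}_k\subseteq\mathcal{G}_n$ actually proves that inheritance, so yours is the more complete version of the same step.

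The sandwich version you say you would present is a genuinely different route. It uses the Markov identity only in the form stated in Definition~\ref{def:m_markov}, namely at the full history, to get $M_{n,n}=M_{n,m}$ almost surely and hence $\Delta_{n,n}^{\text{mem}}=\Delta_{n,m}^{\text{mem}}$. The monotonicity chain of Corollary~\ref{cor:Delta_k_chain} then squeezes every intermediate $\Delta_{n,k}^{\text{mem}}$. This is exactly the mechanism of Proposition~\ref{prop:effective_memory_basic}(ii), with $m$ in the role of $L_n$. It therefore reverses the paper's logical order: the paper deduces $m\in S_n$ from saturation, while you deduce saturation from $m\in S_n$.

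What each route buys:
\begin{itemize}
\item The sandwich is shorter and needs no sub-block conditional-independence argument, but it depends on Theorem~\ref{thm:Delta_k_monotonicity}. It also yields only equality of the expectations.
\item The pointwise route (yours or the paper's) is independent of monotonicity. It gives the stronger fact that the conditional distributions of $X_{n+1}$ given $X_{n-k+1}^n$ and given $X_{n-m+1}^n$, and hence $M_{n,k}$ and $M_{n,m}$, coincide almost surely.
\end{itemize}
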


\begin{proof}
We prove the theorem by showing that the conditional distributions used in the definitions of $M_{n,k}$ coincide for all $k \ge m$. The argument proceeds in three steps.

\paragraph{Step 1: Establishing conditional independence of the extra older observations.}
Fix $n \ge m$ and let $k$ be an integer satisfying $m \le k \le n$. Consider the two information sets: the most recent $k$ observations $X_{n-k+1}^n$ and the most recent $m$ observations $X_{n-m+1}^n$. Since $k \ge m$, we have $n-k+1 \le n-m+1$, so that the block $X_{n-m+1}^n$ is contained as a contiguous sub-block within $X_{n-k+1}^n$. The additional observations in the longer history, beyond those in the $m$-block, are precisely
\[
X_{n-k+1}^{n-m} := (X_{n-k+1}, X_{n-k+2}, \dots, X_{n-m}),
\]
the block of observations older than $n-m+1$ but still within the $k$-window. If $k=m$, this additional block is empty, and the argument below applies with the standard convention that conditioning on an empty history reduces to the unconditional distribution.

By the $m$-th order Markov property \eqref{eq:m_markov_property}, for every $a \in \mathcal{A}$,
\[
\mathbb{P}(X_{n+1}=a \mid X_1^n) = \mathbb{P}(X_{n+1}=a \mid X_{n-m+1}^n) \quad \text{almost surely}.
\]
This property asserts that, conditional on $X_{n-m+1}^n$, the future $X_{n+1}$ is independent of the entire history $X_1^{n-m}$ (i.e., all observations earlier than time $n-m+1$). Since the block $X_{n-k+1}^{n-m}$ is a sub-block of $X_1^{n-m}$ (whenever $k>m$; if $k=m$, the block is empty and the independence is trivial), it follows from the definition of conditional independence that $X_{n+1}$ is also conditionally independent of this shorter block given $X_{n-m+1}^n$. More precisely, for every $a \in \mathcal{A}$,
\begin{equation}
\mathbb{P}(X_{n+1}=a \mid X_{n-m+1}^n,\; X_{n-k+1}^{n-m}) = \mathbb{P}(X_{n+1}=a \mid X_{n-m+1}^n) \quad \text{almost surely}.
\label{eq:cond_independence}
\end{equation}
Since the full $k$-block $X_{n-k+1}^n$ is obtained by concatenating $X_{n-k+1}^{n-m}$ and $X_{n-m+1}^n$, conditioning on $X_{n-k+1}^n$ is equivalent to conditioning on the pair $(X_{n-k+1}^{n-m}, X_{n-m+1}^n)$. Thus \eqref{eq:cond_independence} expresses precisely the fact that the extra older observations $X_{n-k+1}^{n-m}$ carry no additional predictive information about $X_{n+1}$ beyond that already contained in $X_{n-m+1}^n$.

\paragraph{Step 2: Identifying the conditional distributions for the two history lengths.}
From \eqref{eq:cond_independence}, it follows immediately that for every $a \in \mathcal{A}$,
\begin{equation}
\mathbb{P}(X_{n+1}=a \mid X_{n-k+1}^n) = \mathbb{P}(X_{n+1}=a \mid X_{n-m+1}^n) \quad \text{almost surely}.
\label{eq:cond_distribution_equality}
\end{equation}
Indeed, the left-hand side is $\mathbb{P}(X_{n+1}=a \mid X_{n-m+1}^n,\; X_{n-k+1}^{n-m})$, which by conditional independence equals $\mathbb{P}(X_{n+1}=a \mid X_{n-m+1}^n)$, the right-hand side. The equality holds for every $a \in \mathcal{A}$, with the almost-sure qualifier accounting for the possibility of conditioning on zero-probability events; the $\sigma$-algebra formulation ensures that this statement is well-defined.

Since \eqref{eq:cond_distribution_equality} holds for every $a \in \mathcal{A}$, the two conditional distributions on $\mathcal{A}$ coincide almost surely. More explicitly, for each $a \in \mathcal{A}$, there exists a set $\Omega_a \subseteq \Omega$ with $\mathbb{P}(\Omega_a) = 1$ such that the equality of conditional probabilities holds on $\Omega_a$. Since $\mathcal{A}$ is finite, the intersection $\Omega_0 := \bigcap_{a \in \mathcal{A}} \Omega_a$ also has $\mathbb{P}(\Omega_0) = 1$; on this common full-measure set, the equality holds simultaneously for all $a \in \mathcal{A}$. Thus, for almost every $\omega \in \Omega$, the two probability vectors
\[
\bigl(\mathbb{P}(X_{n+1}=a \mid X_{n-k+1}^n)(\omega)\bigr)_{a \in \mathcal{A}}
\quad\text{and}\quad
\bigl(\mathbb{P}(X_{n+1}=a \mid X_{n-m+1}^n)(\omega)\bigr)_{a \in \mathcal{A}}
\]
are identical. Taking the maximum over the finite set $\mathcal{A}$ on both sides—which is legitimate since the equality of vectors holds for every $\omega \in \Omega_0$, and the maximum of two identical finite vectors is equal—we obtain
\[
\max_{a \in \mathcal{A}} \mathbb{P}(X_{n+1}=a \mid X_{n-k+1}^n) = \max_{a \in \mathcal{A}} \mathbb{P}(X_{n+1}=a \mid X_{n-m+1}^n) \quad \text{almost surely}.
\]
In the notation of Definition~\ref{def:Delta_k}, this is precisely
\begin{equation}
M_{n,k} = M_{n,m} \quad \text{almost surely}.
\label{eq:M_n_k_equal}
\end{equation}

This equality is an equality of random variables: for almost every $\omega \in \Omega$, the value of the optimal conditional prediction accuracy based on the $k$-block is identical to that based on the $m$-block. The extra observations in the $k$-block do not alter the maximum of the conditional distribution.

\paragraph{Step 3: Taking expectations and concluding the proof.}
Taking expectations on both sides of \eqref{eq:M_n_k_equal}, we obtain
\[
\mathbb{E}[M_{n,k}] = \mathbb{E}[M_{n,m}],
\]
since the expectation of two almost-surely equal random variables is identical (their difference is zero almost surely, hence has expectation zero). Subtracting the common constant $U_n$ from both sides yields
\[
\mathbb{E}[M_{n,k}] - U_n = \mathbb{E}[M_{n,m}] - U_n.
\]
By Definition~\ref{def:Delta_k}, the left-hand side is $\Delta_{n,k}^{\text{mem}}$ and the right-hand side is $\Delta_{n,m}^{\text{mem}}$. Therefore,
\[
\Delta_{n,k}^{\text{mem}} = \Delta_{n,m}^{\text{mem}},
\]
which proves \eqref{eq:saturation_equality}. Since $k \ge m$ was arbitrary, the equality holds for all $m \le k \le n$, completing the proof.
\end{proof}

\begin{remark}[Interpretation of the saturation property]
\label{rem:saturation_interpretation}
Theorem~\ref{thm:Delta_k_saturation} demonstrates that for an $m$-th order Markov chain, the FH-MPE saturates once the memory length reaches the order $m$, and remains constant for all larger memory lengths up to the full history length $n$. This is consistent with the Markov property: the recent $m$ observations capture all predictive information contained in the entire history, so extending the memory beyond $m$ steps cannot increase the FH-MPE.

Together with the monotonicity result in Theorem~\ref{thm:Delta_k_monotonicity}, the saturation property provides a complete picture of how the FH-MPE evolves with memory length for Markov chains. The monotonicity theorem ensures that $\Delta_{n,k}^{\text{mem}}$ is nondecreasing in $k$, while the saturation theorem identifies where this increase stops. Combining them yields the chain stated in Corollary~\ref{cor:saturation_chain}:
\[
0 = \Delta_{n,0}^{\text{mem}} \le \cdots \le \Delta_{n,m}^{\text{mem}} = \Delta_{n,m+1}^{\text{mem}} = \cdots = \Delta_{n,n}^{\text{mem}} = \Delta_n^{\text{mem}},
\]
in which monotonicity accounts for the inequalities and saturation accounts for the equalities from $m$ onward.

In addition, it offers a possible quantitative perspective for comparing the memory requirements of different processes, by indicating the length of history beyond which no further predictive gain is attainable. For the case of finite-order Markov chains, this perspective is made precise in Theorem~\ref{thm:effective_memory_markov}, which establishes the explicit upper bound \(L_n \le m\) (see Definition~\ref{def:effective_memory_length}) on the minimal memory length required to attain the full-history MPE.
\end{remark}

The following corollary combines the saturation theorem with the monotonicity chain established in Corollary~\ref{cor:Delta_k_chain} to obtain the complete evolution of the FH-MPE for $m$-th order Markov chains.

\begin{corollary}[Saturation and the monotonicity chain for Markov chains]\label{cor:saturation_chain}
Under the assumptions of Theorem~\ref{thm:Delta_k_saturation}, for every $n \ge m$,
\begin{equation}
0 = \Delta_{n,0}^{\text{mem}} \le \Delta_{n,1}^{\text{mem}} \le \cdots \le \Delta_{n,m-1}^{\text{mem}} \le \Delta_{n,m}^{\text{mem}} = \Delta_{n,m+1}^{\text{mem}} = \cdots = \Delta_{n,n}^{\text{mem}} = \Delta_n^{\text{mem}}.
\label{eq:saturation_chain}
\end{equation}
Thus the FH-MPE increases monotonically from $0$ up to $\Delta_{n,m}^{\text{mem}}$, and then remains constant for all $k \ge m$.
\end{corollary}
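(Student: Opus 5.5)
The plan is to obtain Corollary~\ref{cor:saturation_chain} by splicing two results already in hand: the general monotonicity chain of Corollary~\ref{cor:Delta_k_chain} and the saturation equality of Theorem~\ref{thm:Delta_k_saturation}. No new probabilistic argument is needed; the work is bookkeeping on the index range. Throughout, fix $n \ge m$, so that Theorem~\ref{thm:Delta_k_saturation} applies and the range $0 \le k \le n$ contains $m$.

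First, I will invoke Corollary~\ref{cor:Delta_k_chain}, which holds for any discrete-time finite-state process and in particular for the given $m$-th order chain. It gives
\[
0 = \Delta_{n,0}^{\text{mem}} \le \Delta_{n,1}^{\text{mem}} \le \cdots \le \Delta_{n,n}^{\text{mem}} = \Delta_n^{\text{mem}}.
\]
I will restrict attention to the initial segment $0 \le k \le m$. This supplies the inequalities $\Delta_{n,0}^{\text{mem}} \le \cdots \le \Delta_{n,m-1}^{\text{mem}} \le \Delta_{n,m}^{\text{mem}}$ and the left endpoint $\Delta_{n,0}^{\text{mem}} = 0$ from \eqref{eq:Delta_k_boundary_zero}.

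Second, for each $k$ with $m \le k \le n$, Theorem~\ref{thm:Delta_k_saturation} gives $\Delta_{n,k}^{\text{mem}} = \Delta_{n,m}^{\text{mem}}$. This turns the tail of the chain into equalities, $\Delta_{n,m}^{\text{mem}} = \Delta_{n,m+1}^{\text{mem}} = \cdots = \Delta_{n,n}^{\text{mem}}$. The right endpoint $\Delta_{n,n}^{\text{mem}} = \Delta_n^{\text{mem}}$ comes from \eqref{eq:Delta_k_boundary_full}. Concatenating the two segments at the shared term $\Delta_{n,m}^{\text{mem}}$ yields \eqref{eq:saturation_chain}. The closing sentence, that the FH-MPE is nondecreasing up to $k = m$ and constant afterwards, is just a verbal reading of this chain.

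The only point requiring care is the treatment of degenerate index cases, so that the displayed chain makes sense:
\begin{itemize}
\item When $m = 1$, the segment $\Delta_{n,1}^{\text{mem}} \le \cdots \le \Delta_{n,m-1}^{\text{mem}}$ is empty, and the first part reduces to $0 \le \Delta_{n,1}^{\text{mem}}$.
\item When $n = m$, the equality tail collapses to the single identity $\Delta_{n,m}^{\text{mem}} = \Delta_n^{\text{mem}}$.
\end{itemize}
I expect no genuine obstacle. I will only note explicitly that $m \le n$ is exactly the hypothesis that lets both cited results be applied simultaneously.
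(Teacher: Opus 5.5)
Your proposal is correct and matches the paper's proof essentially step for step. The paper also starts from the chain of Corollary~\ref{cor:Delta_k_chain}, uses Theorem~\ref{thm:Delta_k_saturation} (in particular at $k=n$) together with the boundary identity \eqref{eq:Delta_k_boundary_full} to turn the tail into equalities, and substitutes; your explicit treatment of the degenerate cases $m=1$ and $n=m$ is a harmless addition that the paper leaves implicit.
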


\begin{proof}
The monotonicity chain from Corollary~\ref{cor:Delta_k_chain} gives
\[
0 = \Delta_{n,0}^{\text{mem}} \le \Delta_{n,1}^{\text{mem}} \le \cdots \le \Delta_{n,m-1}^{\text{mem}} \le \Delta_{n,m}^{\text{mem}} \le \cdots \le \Delta_{n,n}^{\text{mem}} = \Delta_n^{\text{mem}}.
\]
Theorem~\ref{thm:Delta_k_saturation} establishes that $\Delta_{n,k}^{\text{mem}} = \Delta_{n,m}^{\text{mem}}$ for all $m \le k \le n$. In particular, taking $k=n$ gives $\Delta_{n,n}^{\text{mem}} = \Delta_{n,m}^{\text{mem}}$. By \eqref{eq:Delta_k_boundary_full}, we have $\Delta_{n,n}^{\text{mem}} = \Delta_n^{\text{mem}}$, and therefore $\Delta_{n,m}^{\text{mem}} = \Delta_n^{\text{mem}}$. Applying these equalities—both the saturation equalities for $m+1, \dots, n$ and the identification $\Delta_{n,m}^{\text{mem}} = \Delta_n^{\text{mem}}$—to the chain above yields
\[
0 = \Delta_{n,0}^{\text{mem}} \le \Delta_{n,1}^{\text{mem}} \le \cdots \le \Delta_{n,m-1}^{\text{mem}} \le \Delta_{n,m}^{\text{mem}} = \Delta_{n,m+1}^{\text{mem}} = \cdots = \Delta_{n,n}^{\text{mem}} = \Delta_n^{\text{mem}},
\]
which is precisely \eqref{eq:saturation_chain}.
\end{proof}

\subsection{Minimal Memory Length for Full-History MPE}
\label{subsec:effective_memory}

The developments in the previous sections together offer a coherent quantitative account of how prediction accuracy improves with memory length in the discrete-time finite-state setting. In Section~\ref{sec:definition}, we introduced the Memory Prediction Excess $\Delta_n^{\text{mem}}$ (Definition~\ref{def:MPE}), which measures the average improvement in optimal prediction accuracy achieved by conditioning on the full history $X_1^n$, with the static marginal distribution $U_n := \max_{a \in \mathcal{A}} \mathbb{P}(X_{n+1}=a)$ (see \eqref{eq:U_n_def}) serving as the baseline. The fundamental properties of this quantity were subsequently established in Section~\ref{sec:properties}: Theorem~\ref{thm:nonneg} confirmed its non-negativity, Proposition~\ref{prop:upper_bound} provided a sharp upper bound in terms of the static accuracy, and Theorem~\ref{thm:sharpness} characterized the precise condition under which this bound is attained—namely, when the future is almost surely a deterministic function of the past. Two degenerate scenarios in which the MPE vanishes were also treated in Corollaries~\ref{cor:independence} and~\ref{cor:static_degenerate}: independence of the future from the past, and the case where the static prediction is already perfect ($U_n = 1$). 

In Section~\ref{sec:normalized}, we introduced the normalized Memory Prediction Excess $\Delta_n^{\text{eff}}$ (Definition~\ref{def:eff}), which rescales $\Delta_n^{\text{mem}}$ by the maximal possible excess $1 - U_n$, thereby providing a dimensionless measure of predictive efficiency that facilitates comparisons across different processes and time indices. Proposition~\ref{prop:eff_zero_criteria} characterized the scenarios in which this normalized quantity vanishes, while Proposition~\ref{prop:eff_one_criteria} identified the condition $\Delta_n^{\text{eff}} = 1$ with the deterministic dependence of $X_{n+1}$ on $X_1^n$. Additionally, Proposition~\ref{prop:MPE_lower_bound} established a lower bound for $\Delta_n^{\text{mem}}$ by comparing predictions based on histories of different lengths, showing that the expected optimal prediction accuracy is monotone with respect to the history length.

In Section~\ref{sec:finite_history}, we extended this framework to histories of finite length, introducing the family $\{\Delta_{n,k}^{\text{mem}}\}_{0 \le k \le n}$ (Definition~\ref{def:Delta_k}), where $k$ denotes the number of most recent observations retained. The monotonicity of this family with respect to $k$ was established in Theorem~\ref{thm:Delta_k_monotonicity}: as the memory length increases, the FH-MPE does not decrease. This led to the chain of inequalities recorded in Corollary~\ref{cor:Delta_k_chain}:
\[
0 = \Delta_{n,0}^{\text{mem}} \le \Delta_{n,1}^{\text{mem}} \le \cdots \le \Delta_{n,n}^{\text{mem}} = \Delta_n^{\text{mem}},
\]
where the final equality $\Delta_{n,n}^{\text{mem}} = \Delta_n^{\text{mem}}$ follows from the boundary identity \eqref{eq:Delta_k_boundary_full}. For the important class of finite-order Markov chains, we further showed in Theorem~\ref{thm:Delta_k_saturation} that this monotone increase saturates once the memory length reaches the Markov order; for $k \ge m$, the FH-MPE remains constant at $\Delta_{n,m}^{\text{mem}} = \Delta_n^{\text{mem}}$ (see also Remark~\ref{rem:saturation_interpretation} for a discussion of the distinction between the Markov order and the saturation point of the FH-MPE).

The monotonicity chain above shows that the FH-MPE increases monotonically from $\Delta_{n,0}^{\text{mem}} = 0$ at $k=0$ to $\Delta_{n,n}^{\text{mem}} = \Delta_n^{\text{mem}}$ at $k=n$. This observation naturally leads to the following question: what is the minimal memory length at which the FH-MPE already attains the value of the full-history MPE? As foreshadowed in Remark~\ref{rem:saturation_interpretation}, this question motivates the notion of minimal memory length for full-history MPE (see Definition~\ref{def:effective_memory_length} below), which we now formalize.

\begin{definition}[Minimal Memory Length for Full-History MPE]\label{def:effective_memory_length}
For a fixed time index $n \ge 1$, the \textbf{minimal memory length for full-history MPE} of the process $\{X_n\}_{n\ge 1}$ is defined as
\begin{equation}
L_n := \min \left\{ k \in \{0,1,\dots,n\} : \Delta_{n,k}^{\text{mem}} = \Delta_n^{\text{mem}} \right\},
\label{eq:L_n_def}
\end{equation}
where $\Delta_{n,k}^{\text{mem}}$ denotes the FH-MPE with memory length $k$ (Definition~\ref{def:Delta_k}) and $\Delta_n^{\text{mem}}$ is the full-history MPE (Definition~\ref{def:MPE}).
\end{definition}

Before interpreting the meaning of $L_n$, we first verify that the definition is legitimate—that is, the minimum exists. The set appearing on the right-hand side of \eqref{eq:L_n_def} is nonempty: by the boundary identity \eqref{eq:Delta_k_boundary_full}, we have $\Delta_{n,n}^{\text{mem}} = \Delta_n^{\text{mem}}$, so that $k = n$ is always an element of the set. Since the set is a finite nonempty subset of the nonnegative integers, it possesses a least element (by the well-ordering property). Hence the minimum in \eqref{eq:L_n_def} exists and is uniquely defined.

Thus $L_n$ is the smallest memory length for which the FH-MPE coincides with the full-history MPE. From the definition and the preceding existence argument, we obtain directly $0 \le L_n \le n$ and, by definition, $\Delta_{n,L_n}^{\text{mem}} = \Delta_n^{\text{mem}}$.

The following proposition records the elementary properties of $L_n$.

\begin{proposition}[Basic properties of the minimal memory length for full-history MPE]\label{prop:effective_memory_basic}
For every $n \ge 1$, the following hold.
\begin{enumerate}
    \item The minimal memory length for full-history MPE satisfies
    \begin{equation}
    0 \le L_n \le n, \qquad \Delta_{n,L_n}^{\text{mem}} = \Delta_n^{\text{mem}}.
    \label{eq:L_n_basic}
    \end{equation}
    \item For every integer $k$ with $L_n \le k \le n$,
    \begin{equation}
    \Delta_{n,k}^{\text{mem}} = \Delta_n^{\text{mem}}.
    \label{eq:L_n_saturation}
    \end{equation}
    In words, once the memory length reaches $L_n$, the FH-MPE remains constant at the full-history value for all larger memory lengths.
    \item If $U_n = 1$, then $L_n = 0$.
\end{enumerate}
\end{proposition}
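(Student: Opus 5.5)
The three items are handled separately, each resting on results already established: the well-definedness argument following Definition~\ref{def:effective_memory_length}, the monotonicity chain of Corollary~\ref{cor:Delta_k_chain}, and the degenerate case of Corollary~\ref{cor:static_degenerate}.

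For item (i), we recall the existence argument given after \eqref{eq:L_n_def}. The index set $\{k\in\{0,\dots,n\}:\Delta_{n,k}^{\text{mem}}=\Delta_n^{\text{mem}}\}$ contains $n$ by the boundary identity \eqref{eq:Delta_k_boundary_full}. Being a finite nonempty subset of $\{0,\dots,n\}$, it has a least element $L_n$, which therefore lies in $[0,n]$. Since the minimum of a finite set belongs to the set, $L_n$ satisfies the defining equality, which gives $\Delta_{n,L_n}^{\text{mem}}=\Delta_n^{\text{mem}}$.

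Item (ii) is a squeeze argument. Fix $k$ with $L_n\le k\le n$. Applying Theorem~\ref{thm:Delta_k_monotonicity} repeatedly, in the form of Corollary~\ref{cor:Delta_k_chain}, gives
\[
\Delta_{n,L_n}^{\text{mem}}\le\Delta_{n,k}^{\text{mem}}\le\Delta_{n,n}^{\text{mem}}=\Delta_n^{\text{mem}}.
\]
By item (i) the left end equals $\Delta_n^{\text{mem}}$, so both inequalities are equalities and $\Delta_{n,k}^{\text{mem}}=\Delta_n^{\text{mem}}$. This step needs only the monotonicity chain and no structural assumption on the process. It is the only step with real content, and it is still straightforward once that chain is available.

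For item (iii), suppose $U_n=1$. Corollary~\ref{cor:static_degenerate} then gives $\Delta_n^{\text{mem}}=0$, and \eqref{eq:Delta_k_boundary_zero} gives $\Delta_{n,0}^{\text{mem}}=0$ unconditionally. Hence $k=0$ belongs to the index set in \eqref{eq:L_n_def}. As $0$ is the smallest admissible value, $L_n=0$. An alternative route is to note that $\Delta_{n,k}^{\text{mem}}\in[0,1-U_n]=\{0\}$ for every $k$, by the chain together with Proposition~\ref{prop:upper_bound}. The first route is more direct, so we use it.
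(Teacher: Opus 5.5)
Your proposal is correct and follows the paper's proof essentially step for step. Item (i) uses the fact that $n$ and $L_n$ both lie in the defining set. Item (ii) uses the squeeze $\Delta_{n,L_n}^{\text{mem}}\le\Delta_{n,k}^{\text{mem}}\le\Delta_{n,n}^{\text{mem}}=\Delta_n^{\text{mem}}$ from Corollary~\ref{cor:Delta_k_chain}. Item (iii) uses Corollary~\ref{cor:static_degenerate} to place $0$ in the defining set. The only difference is in (iii): you read off $\Delta_{n,0}^{\text{mem}}=0$ directly from \eqref{eq:Delta_k_boundary_zero}, whereas the paper first squeezes $\Delta_{n,k}^{\text{mem}}=0$ for every $k$ through the chain and then specializes to $k=0$. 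Your shortcut is valid and slightly cleaner.
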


\begin{proof}
We prove the three assertions in order.

\paragraph{Proof of (i).} 
We verify the two claims separately.

First, we establish the bounds on $L_n$. Let $S_n$ denote the set appearing in the definition of $L_n$ in \eqref{eq:L_n_def}, i.e.,
\[
S_n := \{ k \in \{0,1,\dots,n\} : \Delta_{n,k}^{\text{mem}} = \Delta_n^{\text{mem}} \}.
\]
Since $S_n \subseteq \{0,1,\dots,n\}$, every element $k \in S_n$ satisfies $0 \le k \le n$. Consequently, the minimum of $S_n$, namely $L_n$, also satisfies $0 \le L_n \le n$.

Second, we verify the equality $\Delta_{n,L_n}^{\text{mem}} = \Delta_n^{\text{mem}}$. By the definition of $L_n$ as the minimum of $S_n$, we have $L_n \in S_n$. The membership condition for $S_n$ is precisely $\Delta_{n,k}^{\text{mem}} = \Delta_n^{\text{mem}}$. Substituting $k = L_n$ gives $\Delta_{n,L_n}^{\text{mem}} = \Delta_n^{\text{mem}}$. Thus both claims in (i) are established.

\paragraph{Proof of (ii).} 
Fix an arbitrary integer $k$ satisfying $L_n \le k \le n$. We prove the equality $\Delta_{n,k}^{\text{mem}} = \Delta_n^{\text{mem}}$ by establishing two inequalities and then combining them.

\textit{Step 1: Proving $\Delta_{n,k}^{\text{mem}} \ge \Delta_n^{\text{mem}}$.} 
By the monotonicity chain in Corollary~\ref{cor:Delta_k_chain}, for any integers $a$ and $b$ with $0 \le a \le b \le n$, we have $\Delta_{n,a}^{\text{mem}} \le \Delta_{n,b}^{\text{mem}}$. Since $L_n \le k$ and both indices lie in the admissible range, we may apply this monotonicity with $a = L_n$ and $b = k$ to obtain
\begin{equation}
\Delta_{n,L_n}^{\text{mem}} \le \Delta_{n,k}^{\text{mem}}.
\label{eq:monotonicity_Ln_to_k}
\end{equation}
By property (i) established above (specifically, $\Delta_{n,L_n}^{\text{mem}} = \Delta_n^{\text{mem}}$ from \eqref{eq:L_n_basic}), substituting this identity into \eqref{eq:monotonicity_Ln_to_k} gives
\[
\Delta_n^{\text{mem}} \le \Delta_{n,k}^{\text{mem}}.
\]

\textit{Step 2: Proving $\Delta_{n,k}^{\text{mem}} \le \Delta_n^{\text{mem}}$.} 
Again by the monotonicity chain in Corollary~\ref{cor:Delta_k_chain}, since $k \le n$ and both indices lie in the admissible range, we may apply it with $a = k$ and $b = n$ to obtain
\begin{equation}
\Delta_{n,k}^{\text{mem}} \le \Delta_{n,n}^{\text{mem}}.
\label{eq:monotonicity_k_to_n}
\end{equation}
By the boundary identity \eqref{eq:Delta_k_boundary_full}, we have $\Delta_{n,n}^{\text{mem}} = \Delta_n^{\text{mem}}$. Substituting this into \eqref{eq:monotonicity_k_to_n} gives
\[
\Delta_{n,k}^{\text{mem}} \le \Delta_n^{\text{mem}}.
\]

\textit{Step 3: Combining the two inequalities.} 
Combining the inequalities obtained in Steps 1 and 2, together with property (i) and the boundary identity \eqref{eq:Delta_k_boundary_full}, we have
\[
\Delta_n^{\text{mem}} \le \Delta_{n,k}^{\text{mem}} \le \Delta_n^{\text{mem}}.
\]
The left and right sides are identical; hence both inequalities are equalities, and therefore
\[
\Delta_{n,k}^{\text{mem}} = \Delta_n^{\text{mem}}.
\]
Since $k$ was arbitrary in the range $L_n \le k \le n$, the assertion in (ii) holds for all such $k$.

\paragraph{Proof of (iii).} 
Assume $U_n = 1$. We prove $L_n = 0$ by showing that $k=0$ belongs to the defining set of $L_n$, and then using the minimality of $L_n$.

\textit{Step 1: Establishing $\Delta_n^{\text{mem}} = 0$.} 
By Corollary~\ref{cor:static_degenerate}, the condition $U_n = 1$ implies that the full-history Memory Prediction Excess vanishes:
\[
\Delta_n^{\text{mem}} = 0.
\]

\textit{Step 2: Establishing $\Delta_{n,k}^{\text{mem}} = 0$ for all $k$.} 
For any $k$ with $0 \le k \le n$, the monotonicity chain in Corollary~\ref{cor:Delta_k_chain} gives
\begin{equation}
0 \le \Delta_{n,k}^{\text{mem}} \le \Delta_{n,n}^{\text{mem}}.
\label{eq:monotonicity_bound_for_all_k}
\end{equation}
By the boundary identity \eqref{eq:Delta_k_boundary_full}, we have $\Delta_{n,n}^{\text{mem}} = \Delta_n^{\text{mem}}$. Combining this with Step 1 yields
\[
\Delta_{n,n}^{\text{mem}} = 0.
\]
Substituting this into \eqref{eq:monotonicity_bound_for_all_k} yields, for every $0 \le k \le n$,
\[
0 \le \Delta_{n,k}^{\text{mem}} \le 0,
\]
which implies
\[
\Delta_{n,k}^{\text{mem}} = 0.
\]

\textit{Step 3: Identifying $k=0$ as an element of the defining set.} 
Taking $k=0$ in the conclusion of Step 2 gives
\[
\Delta_{n,0}^{\text{mem}} = 0.
\]
From \textit{Step 1} of this proof, we also have $\Delta_n^{\text{mem}} = 0$. Hence $\Delta_{n,0}^{\text{mem}} = \Delta_n^{\text{mem}}$. By the definition of $S_n$ introduced in the proof of (i) above, this means that $0 \in S_n$.

\textit{Step 4: Concluding $L_n = 0$.} 
Since $L_n$ is the minimum of $S_n$, and $0$ is the smallest integer in $\{0,1,\dots,n\}$, the membership $0 \in S_n$ implies that the minimum of $S_n$ cannot exceed $0$; that is, $L_n \le 0$. On the other hand, $L_n \ge 0$ follows from property (i) (specifically, $0 \le L_n \le n$ established in \eqref{eq:L_n_basic}). Therefore $L_n = 0$.
\end{proof}

Proposition~\ref{prop:effective_memory_basic} collects the basic properties of $L_n$ that follow from its definition (Definition~\ref{def:effective_memory_length}) in conjunction with the monotonicity chain (Corollary~\ref{cor:Delta_k_chain}) and the boundary identities (Definition~\ref{def:Delta_k}), together with the static degeneracy result (Corollary~\ref{cor:static_degenerate}). These properties hold across processes in the discrete-time finite-state setting considered throughout this work, and they provide a foundational characterization of $L_n$ in the absence of additional structural assumptions.

For processes with additional structural assumptions, sharper characterizations of $L_n$ can be derived. In particular, for finite-order Markov chains, the saturation theorem (Theorem~\ref{thm:Delta_k_saturation}) yields an explicit upper bound on $L_n$ in terms of the Markov order $m$, independent of $n$. We record this consequence in the following theorem.

\begin{theorem}[Upper bound for the minimal memory length for full-history MPE in Markov chains]\label{thm:effective_memory_markov}
Let $\{X_n\}_{n\ge 1}$ be an $m$-th order Markov chain (see Definition~\ref{def:m_markov}) for some fixed integer $m \ge 1$. Then for every $n \ge m$,
\begin{equation}
L_n \le m.
\label{eq:L_n_markov_bound}
\end{equation}
\end{theorem}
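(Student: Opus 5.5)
The bound $L_n \le m$ follows by showing that $m$ belongs to the set
\[
S_n := \{ k \in \{0,1,\dots,n\} : \Delta_{n,k}^{\text{mem}} = \Delta_n^{\text{mem}} \}
\]
introduced in the proof of Proposition~\ref{prop:effective_memory_basic}, and then using the minimality of $L_n$. Fix $n \ge m$.

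\textbf{Admissibility of $k=m$.} Since $m \ge 1$ and $n \ge m$, the index $k=m$ lies in $\{0,1,\dots,n\}$, so it is a candidate element of $S_n$.

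\textbf{Membership of $m$ in $S_n$.} We invoke Theorem~\ref{thm:Delta_k_saturation} with $k=n$, which is allowed because $m \le n$. This gives $\Delta_{n,n}^{\text{mem}} = \Delta_{n,m}^{\text{mem}}$. The boundary identity \eqref{eq:Delta_k_boundary_full} gives $\Delta_{n,n}^{\text{mem}} = \Delta_n^{\text{mem}}$. Combining the two yields $\Delta_{n,m}^{\text{mem}} = \Delta_n^{\text{mem}}$, and this is exactly the statement that $m \in S_n$. The same identification already appears in the proof of Corollary~\ref{cor:saturation_chain}, so we may also cite that corollary directly.

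\textbf{Conclusion.} By Definition~\ref{def:effective_memory_length}, $L_n = \min S_n$. A minimum is no larger than any element of the set, so $m \in S_n$ gives $L_n \le m$.

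None of these steps is a genuine obstacle; the argument is bookkeeping on top of Theorem~\ref{thm:Delta_k_saturation}. The only point needing care is the range of indices. The hypothesis $n \ge m$ is used twice: first so that the Markov property in Definition~\ref{def:m_markov}, and hence Theorem~\ref{thm:Delta_k_saturation}, applies at time $n$; and second so that $k=m$ is admissible in the definition of $L_n$. It is also worth remarking that the inequality can be strict. For example, if $U_n = 1$, then Proposition~\ref{prop:effective_memory_basic}(iii) gives $L_n = 0$. So $m$ is only an upper bound and not the exact value of $L_n$.
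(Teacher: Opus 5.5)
Your proposal is correct and follows the paper's proof essentially step for step: the paper also applies Theorem~\ref{thm:Delta_k_saturation} at $k=n$ and combines it with the boundary identity \eqref{eq:Delta_k_boundary_full} to obtain $\Delta_{n,m}^{\text{mem}} = \Delta_n^{\text{mem}}$, i.e.\ $m \in S_n$, and then concludes $L_n = \min S_n \le m$. Your side remark on strictness is valid, but under the paper's standing assumption $\mathbb{P}(X_{n+1}=a)>0$ for all $a$ the case $U_n=1$ forces $|\mathcal{A}|=1$; an i.i.d.\ process is a less degenerate example, since it is Markov of every order and has $\Delta_n^{\text{mem}}=0$, hence $L_n=0<m$.
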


\begin{proof}
We prove the bound by showing that the specific value $k = m$ satisfies the defining condition of $L_n$, namely $\Delta_{n,m}^{\text{mem}} = \Delta_n^{\text{mem}}$. Once this equality is established, the definition of $L_n$ as the minimum of the set of memory lengths achieving full-history performance immediately yields $L_n \le m$. The proof is organized into six steps.

\paragraph{Step 1: Recalling the saturation theorem for the Markov chain.}
Since the process is assumed to be an $m$-th order Markov chain and $n \ge m$, Theorem~\ref{thm:Delta_k_saturation} applies directly. That theorem establishes that, for such processes, once the memory length reaches the Markov order $m$, the FH-MPE saturates and remains constant for all larger memory lengths. Specifically, for every integer $k$ with $m \le k \le n$,
\begin{equation}
\Delta_{n,k}^{\text{mem}} = \Delta_{n,m}^{\text{mem}}.
\label{eq:effective_proof_saturation}
\end{equation}
This equality is the key saturation property derived in Theorem~\ref{thm:Delta_k_saturation}; it reflects the fact that for an $m$-th order Markov chain, the additional older observations beyond the most recent $m$ steps do not alter the conditional distribution of $X_{n+1}$.

\paragraph{Step 2: Applying the saturation equality at the full-history length.}
Since $n \ge m$, the full-history length $k = n$ satisfies the condition $m \le k \le n$ required for the application of \eqref{eq:effective_proof_saturation}. Substituting $k = n$ into \eqref{eq:effective_proof_saturation} yields
\begin{equation}
\Delta_{n,n}^{\text{mem}} = \Delta_{n,m}^{\text{mem}}.
\label{eq:effective_proof_k_n}
\end{equation}
This step shows that the FH-MPE based on the full history $X_1^n$—which is the complete set of available observations from time $1$ to $n$—coincides with the FH-MPE based on only the most recent $m$ observations.

\paragraph{Step 3: Identifying the full-history FH-MPE with the original MPE.}
By the boundary identity \eqref{eq:Delta_k_boundary_full}, when the memory length $k$ equals the full history length $n$, the finite-history MPE reduces to the original full-history MPE:
\begin{equation}
\Delta_{n,n}^{\text{mem}} = \Delta_n^{\text{mem}}.
\label{eq:effective_proof_boundary}
\end{equation}
This identity is a direct consequence of Definition~\ref{def:Delta_k}: when $k = n$, the conditioning block $X_{n-k+1}^n$ becomes $X_1^n$, which is precisely the full history used in the definition of $\Delta_n^{\text{mem}}$. Thus $\Delta_{n,n}^{\text{mem}}$ and $\Delta_n^{\text{mem}}$ are not merely equal in value; they represent the same probabilistic quantity evaluated under identical conditioning information.

\paragraph{Step 4: Combining the preceding equalities.}
Combining \eqref{eq:effective_proof_boundary} with \eqref{eq:effective_proof_k_n}—by replacing $\Delta_{n,n}^{\text{mem}}$ in the latter with $\Delta_n^{\text{mem}}$ from the former—we eliminate $\Delta_{n,n}^{\text{mem}}$ and obtain the relation between $\Delta_{n,m}^{\text{mem}}$ and the full-history MPE:
\begin{equation}
\Delta_n^{\text{mem}} = \Delta_{n,m}^{\text{mem}}.
\label{eq:effective_proof_target}
\end{equation}
Equivalently, we have $\Delta_{n,m}^{\text{mem}} = \Delta_n^{\text{mem}}$. This is the central equality of the proof: it establishes that a memory of length $m$—the Markov order—already attains the predictive accuracy achievable with the full history $X_1^n$.

\paragraph{Step 5: Verifying membership of $m$ in the defining set.}
Let $S_n$ denote the set introduced in the proof of Proposition~\ref{prop:effective_memory_basic}, i.e.,
\[
S_n := \left\{ k \in \{0,1,\dots,n\} : \Delta_{n,k}^{\text{mem}} = \Delta_n^{\text{mem}} \right\}.
\]
The set $S_n$ collects precisely those memory lengths $k$ for which the FH-MPE attains the full-history value. By \eqref{eq:effective_proof_target}, we have $\Delta_{n,m}^{\text{mem}} = \Delta_n^{\text{mem}}$; therefore $m$ satisfies the membership criterion and hence $m \in S_n$. This identifies an explicit element of $S_n$, which will be used to obtain the desired bound on the minimum.

\paragraph{Step 6: Concluding the bound on $L_n$.}
Since $L_n$ is defined as $\min S_n$ and $S_n$ is a nonempty subset of $\{0,1,\dots,n\}$ (as established in Definition~\ref{def:effective_memory_length}), the minimum is well-defined. With $m \in S_n$ established in Step 5, the minimum of $S_n$ cannot exceed any of its elements; in particular, it cannot exceed $m$. Therefore,
\[
L_n = \min S_n \le m.
\]
This completes the proof.
\end{proof}

\begin{remark}[Significance of the minimal memory length for full-history MPE]
\label{rem:effective_memory_length}
The minimal memory length $L_n$ introduced in Definition~\ref{def:effective_memory_length} provides a quantitative answer to the question of how much of the past is actually needed to achieve the optimal prediction accuracy attainable with the full history. While the Markov order $m$ characterizes the full conditional dependence structure of the process, $L_n$ captures the same predictive sufficiency from a different perspective: it is the shortest history length that already attains the full-history MPE $\Delta_n^{\text{mem}}$. For an $m$-th order Markov chain, Theorem~\ref{thm:effective_memory_markov} gives the upper bound $L_n \le m$. Once the memory length reaches $L_n$, Proposition~\ref{prop:effective_memory_basic} (ii) ensures that the FH-MPE remains constant at $\Delta_n^{\text{mem}}$ for all larger memory lengths. Thus $L_n$ identifies the threshold beyond which additional past observations do not improve the value of the FH-MPE $\Delta_{n,k}^{\text{mem}}$.
\end{remark}



\section*{ACKNOWLEDGMENTS}
The author would like to express sincere gratitude to arXiv for providing an open and accessible platform for the dissemination of research. Thanks are also due to the mathematical community for its continued dedication to open science, which makes works like this one possible.

\section*{Data Availability Statement}
Data sharing is not applicable to this article as no datasets were generated or analysed during the current study.




\end{document}